%%% ICML 2024 EXAMPLE LATEX SUBMISSION FILE %%%%%%%%%%%%%%%%%

\documentclass{article}

% Recommended, but optional, packages for figures and better typesetting:
\usepackage{microtype}
\usepackage{graphicx}
\usepackage{subfigure}
\usepackage{booktabs} % for professional tables

% hyperref makes hyperlinks in the resulting PDF.
% If your build breaks (sometimes temporarily if a hyperlink spans a page)
% please comment out the following usepackage line and replace
% \usepackage{icml2024} with \usepackage[nohyperref]{icml2024} above.
\usepackage{hyperref}

% Attempt to make hyperref and algorithmic work together better:

% Use the following line for the initial blind version submitted for review:

% \usepackage{icml2025}

% If accepted, instead use the following line for the camera-ready submission:
\usepackage[accepted]{icml2025}

% For theorems and such
\usepackage{amsmath}
\usepackage{amssymb}
\usepackage{mathtools}
\usepackage{amsthm}
%%%
\usepackage{thmtools, thm-restate}
%%%

% if you use cleveref..
% \usepackage[capitalize,noabbrev]{cleveref}

%%%%%%%%%%%%%%%%%%%%%%%%%%%%%%%%
% THEOREMS
%%%%%%%%%%%%%%%%%%%%%%%%%%%%%%%%
\theoremstyle{plain}
\newtheorem{theorem}{Theorem}[section]
\newtheorem{proposition}[theorem]{Proposition}

\newtheorem{corollary}[theorem]{Corollary}
\theoremstyle{definition}

\theoremstyle{remark}
\newtheorem{remark}[theorem]{Remark}

%%%%

% \newtheorem{theorem}{Theorem}
\usepackage{amsfonts}
\usepackage{xcolor}

\newcommand{\mc}{\mathcal}
\newcommand{\mb}{\mathbb}

\usepackage{amsmath}
\DeclareMathOperator*{\argmax}{arg\,max}

\let\emptyset\varnothing

\usepackage{appendix}
\usepackage{titletoc}

\usepackage{etoc}
\usepackage{xspace}
%%%

% Todonotes is useful during development; simply uncomment the next line
%    and comment out the line below the next line to turn off comments
\usepackage[disable,textsize=tiny]{todonotes}
% \usepackage[textsize=tiny]{todonotes}

% The \icmltitle you define below is probably too long as a header.
% Therefore, a short form for the running title is supplied here:
\icmltitlerunning{Adversarial Robustness in Two-Stage Learning-to-Defer: Algorithms and Guarantees}

\begin{document}

\twocolumn[
\icmltitle{Adversarial Robustness in Two-Stage Learning-to-Defer: Algorithms and Guarantees}

% It is OKAY to include author information, even for blind
% submissions: the style file will automatically remove it for you
% unless you've provided the [accepted] option to the icml2025
% package.

% List of affiliations: The first argument should be a (short)
% identifier you will use later to specify author affiliations
% Academic affiliations should list Department, University, City, Region, Country
% Industry affiliations should list Company, City, Region, Country

% You can specify symbols, otherwise they are numbered in order.
% Ideally, you should not use this facility. Affiliations will be numbered
% in order of appearance and this is the preferred way.
\icmlsetsymbol{equal}{*}

\begin{icmlauthorlist}
\icmlauthor{Yannis Montreuil}{yannis,ng,create}
\icmlauthor{Axel Carlier}{axel,ipal}
\icmlauthor{Lai Xing Ng}{ng,ipal}
\icmlauthor{Wei Tsang Ooi}{yannis,ipal}
%\icmlauthor{}{sch}
%\icmlauthor{}{sch}
%\icmlauthor{}{sch}
\end{icmlauthorlist}

\icmlaffiliation{yannis}{School of Computing, National University of Singapore, Singapore}
\icmlaffiliation{axel}{IRIT, Université de Toulouse, CNRS, Toulouse INP, Toulouse, France}
\icmlaffiliation{ng}{Institute for Infocomm Research, Agency for Science, Technology and Research, Singapore}
\icmlaffiliation{ipal}{IPAL, IRL2955, Singapore}
\icmlaffiliation{create}{CNRS@CREATE LTD, 1 Create Way, Singapore}

\icmlcorrespondingauthor{Yannis Montreuil}{yannis.montreuil@u.nus.edu}

\hypersetup{
  pdftitle={Adversarial Robustness in Two-Stage Learning-to-Defer: Algorithms and Guarantees},
  pdfauthor={Yannis Montreuil, Axel Carlier, Lai Xing Ng, Wei Tsang Ooi}
}

% You may provide any keywords that you
% find helpful for describing your paper; these are used to populate
% the "keywords" metadata in the PDF but will not be shown in the document
\icmlkeywords{Learning-to-Defer, Learning to reject, routing, robustness, Statistical Learning, Machine Learning}

\vskip 0.3in
]

% this must go after the closing bracket ] following \twocolumn[ ...

% This command actually creates the footnote in the first column
% listing the affiliations and the copyright notice.
% The command takes one argument, which is text to display at the start of the footnote.
% The \icmlEqualContribution command is standard text for equal contribution.
% Remove it (just {}) if you do not need this facility.

%\printAffiliationsAndNotice{}  % leave blank if no need to mention equal contribution
\printAffiliationsAndNotice{} % leave blank if no need to mention equal contribution.
\newcommand{\name}{SARD\xspace}

\begin{abstract}
Two-stage Learning-to-Defer (L2D) enables optimal task delegation by assigning each input to either a fixed main model or one of several offline experts, supporting reliable decision-making in complex, multi-agent environments. However, existing L2D frameworks assume clean inputs and are vulnerable to adversarial perturbations that can manipulate query allocation—causing costly misrouting or expert overload. We present the first comprehensive study of adversarial robustness in two-stage L2D systems. We introduce two novel attack strategies—\emph{untargeted} and \emph{targeted}—which respectively disrupt optimal allocations or force queries to specific agents. To defend against such threats, we propose \name, a convex learning algorithm built on a family of surrogate losses that are provably Bayes-consistent and $(\mathcal{R}, \mathcal{G})$-consistent. These guarantees hold across classification, regression, and multi-task settings. Empirical results demonstrate that SARD significantly improves robustness under adversarial attacks while maintaining strong clean performance, marking a critical step toward secure and trustworthy L2D deployment.
\end{abstract}

\section{Introduction}

Learning-to-Defer (L2D) is a powerful framework that enables decision-making systems to optimally allocate queries among multiple agents, such as AI models, human experts, or other decision-makers \citep{madras2018predict}. In the \emph{two-stage} framework, agents are trained offline to harness their domain-specific expertise, enabling the allocation of each task to the decision-maker with the highest confidence \citep{mao2023twostage, mao2024regressionmultiexpertdeferral, mao2025theory, montreuil2025optimalqueryallocationextractive, montreuil2025onestagetopklearningtodeferscorebased}. L2D is particularly valuable in high-stakes applications where reliability and performance are critical \citep{mozannar2021consistent, Verma2022LearningTD}. In healthcare, L2D systems integrate an AI diagnostic model with human specialists, delegating routine tasks to AI and deferring edge cases—such as difficult images—to specialists for nuanced evaluation \citep{Mozannar2023WhoSP, strong2024towards}. By dynamically assigning tasks to the most suitable agent, L2D ensures both accuracy and reliability, making it ideal for safety-critical domains.

Robustness in handling high-stakes decisions is, therefore, essential for such systems. However, existing L2D frameworks are typically designed under the assumption of clean, non-adversarial input data, leaving them highly susceptible to adversarial perturbations—subtle input manipulations that disrupt task allocation and alter decision boundaries. Unlike traditional machine learning systems, where adversarial attacks primarily affect prediction outputs \citep{goodfellow2014explaining, szegedy2014intriguingpropertiesneuralnetworks, Grounded}, L2D systems are susceptible to more sophisticated adversarial threats, including query redirection to less reliable agents or intentional agent overloading. These vulnerabilities severely impact performance, drive up operational costs, and compromise trust.

This paper addresses the important yet unexplored challenge of adversarial robustness in L2D systems. Inspired by adversarial attacks on classification \citep{goodfellow2014explaining, Madry2017TowardsDL, Gowal2020UncoveringTL}, we introduce two novel attack strategies tailored to \emph{two-stage} L2D: \textit{untargeted attacks}, which disrupt agent allocation, and \textit{targeted attacks}, which redirect queries to specific agents. To counter these attacks, we propose a robust family of surrogate losses based on cross-entropy \citep{mao2023twostage, mao2024regressionmultiexpertdeferral, montreuil2024twostagelearningtodefermultitasklearning}, designed to ensure robustness in classification, regression, and multi-task settings. Building upon advances in consistency theory for adversarial robustness \citep{bao2021calibratedsurrogatelossesadversarially, Awasthi_Mao_Mohri_Zhong_2022_multi, Grounded, mao2023crossentropylossfunctionstheoretical}, we establish both Bayes-consistency and $(\mc{R},\mc{G})$ consistency for our surrogate losses, enabling reliable task allocation even under adversarial scenarios. Our algorithm, \textbf{\name}, leverages these guarantees while preserving convexity.

Our key contributions are:
\begin{enumerate}
    \item We introduce two new adversarial attack strategies tailored to \emph{two-stage} L2D: \emph{targeted} attacks that manipulate routing toward specific agents, and \emph{untargeted} attacks that disrupt optimal allocation.
    \item We propose a robust family of cross-entropy-based surrogate losses that are Bayes-consistent and $(\mathcal{R}, \mathcal{G})$-consistent, enabling reliable allocation in classification, regression, and multi-task settings. These losses are the foundation of our convex and efficient algorithm, \name.
    \item We empirically demonstrate that our attacks reveal critical vulnerabilities in state-of-the-art \emph{two-stage} L2D systems, while our proposed method \name consistently withstands these attacks and achieves robust performance across diverse tasks.
\end{enumerate}

This work lays the theoretical and algorithmic foundations for adversarially robust Learning-to-Defer systems.
\section{Related Works}

% --- Polished LaTeX fragment ---
\paragraph{Learning-to-Defer.} Learning-to-Defer addresses the problem of allocating decisions between an automated model and offline decision-makers. In contrast to classical selective prediction—which simply rejects uncertain inputs without further action~\citep{Chow_1970, Bartlett_Wegkamp_2008, cortes, Geifman_El-Yaniv_2017}—L2D explicitly learns to defer to experts based on their distribution. Existing approaches fall into two broad categories: \emph{one-stage} methods, where the predictor and deferral mechanism are trained jointly, and \emph{two-stage} methods, where the deferral policy is optimized post hoc, assuming access to frozen predictions from pre-trained agents. This paper focuses on the two-stage setting.

The \emph{one-stage} formulation was initiated by~\citet{madras2018predict}, who jointly trained a predictor and a rejector in the binary setting. A major theoretical milestone was achieved by~\citet{mozannar2021consistent}, who established the first Bayes-consistent L2D method for multiclass classification, ensuring asymptotically optimal delegation. This was followed by~\citet{Verma2022LearningTD}, who introduced a one-versus-all surrogate formulation that also achieves Bayes-consistency and was later generalized to broader surrogate families by~\citet{charusaie2022sample}. To improve calibration across agents,~\citet{Cao_Mozannar_Feng_Wei_An_2023} proposed an asymmetric softmax surrogate that addresses the shortcomings identified in both~\citet{mozannar2021consistent} and~\citet{Verma2022LearningTD}. However,~\citet{Mozannar2023WhoSP} showed that these methods are not realizable-\(\mathcal{H}\)-consistent, potentially degrading performance when the hypothesis class is restricted. This issue was tackled by \citet{mao2024realizablehconsistentbayesconsistentloss, mao2024realizable}, who extended the theory to cover the realizable setting under specific conditions.

In the \emph{two-stage} regime, where experts and the main model are fixed and trained offline,~\citet{narasimhan2022post} introduced a post-hoc deferral strategy. Building on this,~\citet{mao2023twostage} provided the first Bayes- and \(\mathcal{H}\)-consistent formulation for classification, which was later extended to regression by~\citet{mao2024regressionmultiexpertdeferral}. Finally,~\citet{montreuil2024twostagelearningtodefermultitasklearning} generalized the two-stage framework to multi-task learning scenarios and \citet{montreuil2025askaskktwostage} to top-$k$ experts.

\paragraph{Adversarial Robustness:}
The robustness of neural networks against adversarial perturbations has been extensively studied, with foundational work highlighting their vulnerabilities \citep{Biggio_2013, szegedy2014intriguingpropertiesneuralnetworks, goodfellow2014explaining, Madry2017TowardsDL}. A key focus in recent research has been on developing consistency frameworks for formulating robust defenses. \citet{bao2021calibratedsurrogatelossesadversarially} proposed a Bayes-consistent surrogate loss tailored for adversarial training, which was further analyzed and extended in subsequent works \citep{meunier2022consistencyadversarialclassification, awasthi2021calibrationconsistencyadversarialsurrogate}. Beyond Bayes-consistency, $\mc{H}$-consistency has been explored to address robustness in diverse settings. Notably, \citet{Awasthi_Mao_Mohri_Zhong_2022_multi} derived $\mc{H}$-consistency bounds for several surrogate families, and \citet{mao2023crossentropylossfunctionstheoretical} conducted an in-depth analysis of the cross-entropy family. Building on these theoretical advancements, \citet{Grounded} introduced a smooth algorithm that leverages consistency guarantees to enhance robustness in adversarial settings. 

Our work builds upon recent advancements in consistency theory to further improve adversarial robustness in two-stage L2D.

\section{Preliminaries}

\paragraph{Multi-task scenario.} \label{prel:multi}
We consider a multi-task setting that addresses both classification and regression problems simultaneously. Let \(\mathcal{X}\) denote the input space, \(\mathcal{Y} = \{1, \ldots, n\}\) represent the set of \(n\) distinct classes for classification, and \(\mathcal{T} \subseteq \mathbb{R}\) denote the target space for regression. Each data point is represented by a triplet \(z = (x, y, t) \in \mathcal{Z}\), where \(\mathcal{Z} = \mathcal{X} \times \mathcal{Y} \times \mathcal{T}\). We assume the data is drawn independently and identically distributed (i.i.d.) from an underlying distribution \(\mathcal{D}\) over \(\mathcal{Z}\). To model this multi-task problem, we introduce a \emph{backbone} \(w \in \mathcal{W}\), which acts as a shared feature extractor. The backbone maps inputs \(x \in \mathcal{X}\) to a latent feature representation \(q \in \mathcal{Q}\), via the function \(w: \mathcal{X} \to \mathcal{Q}\). Building upon this backbone, we define a \emph{classifier} \(h \in \mathcal{H}\), representing all possible classification heads. Formally, \(h: \mathcal{Q} \times \mc{Y} \to \mb{R}\) is a scoring function, with predictions computed as \(h(q) = \arg\max_{y \in \mathcal{Y}} h(q, y)\). Similarly, we define a \emph{regressor} \(f \in \mathcal{F}\), which maps latent features to real-valued targets, \(f: \mathcal{Q} \to \mathcal{T}\). These components are integrated into a single multi-head network \(g \in \mathcal{G}\), defined as $\mathcal{G} = \{ g : g(x) = (h \circ w(x), f \circ w(x)) \mid w \in \mathcal{W}, h \in \mathcal{H}, f \in \mathcal{F} \}$.

\paragraph{Consistency in classification.}  
In classification, the primary objective is to identify a classifier \( h \in \mathcal{H} \) in the specific case where $w(x)$ is the identity function, such that $h(x)=\argmax_{y\in\mc{Y}}h(x,y)$. This classifier should minimize the true error \(\mathcal{E}_{\ell_{01}}(h)\), defined as \(\mathcal{E}_{\ell_{01}}(h) = \mathbb{E}_{(x,y)} [\ell_{01}(h,x,y)]\). The Bayes-optimal error is expressed as \(\mathcal{E}^B_{\ell_{01}}(\mathcal{H}) = \inf_{h \in \mathcal{H}} \mathcal{E}_{\ell_{01}}(h)\). However, minimizing \(\mathcal{E}_{\ell_{01}}(h)\) directly is challenging due to the non-differentiability of the \textit{true multiclass} 0-1 loss \citep{Statistical, Steinwart2007HowTC, Awasthi_Mao_Mohri_Zhong_2022_multi, mao2025principled, zhong2025fundamental, mao2024universal, cortes2025balancing}.  To address this challenge, surrogate losses are employed as convex, non-negative upper bounds on \( \ell_{01} \). A notable family of multiclass surrogate losses is the comp-sum \citep{Foundations, mao2023crossentropylossfunctionstheoretical}, which we refer to as a family of \textit{multiclass surrogate} losses:
\begin{equation}\label{eq:multi}
    \Phi_{01}^u(h, x, y) = \Psi^u\Big(\sum_{y' \neq y} \Psi_{\text{e}}(h(x, y) - h(x, y'))\Big),
\end{equation}
where \(\Psi_{\text{e}}(v) = \exp(-v)\), which defines the cross-entropy family. For \( u > 0 \), the transformation is given by:
\begin{equation}
    \Psi^{u}(v) = \begin{cases}
         \log(1 + v) & \mspace{-10mu}\text{if } u = 1, \\
        \frac{1}{1 - u} \left[(1 + v)^{1 - u} - 1\right] & \mspace{-10mu}\text{if } u > 0 \land u \neq 1.
    \end{cases}
\end{equation}
This formulation generalizes several well-known loss functions, including the sum-exponential loss \citep{weston1998multi}, logistic loss \citep{Ohn_Aldrich1997-wn}, generalized cross-entropy \citep{zhang2018generalizedcrossentropyloss}, and  mean absolute error loss \citep{Ghosh}. The corresponding true error for \( \Phi_{01}^u \) is defined as \(\mathcal{E}_{\Phi_{01}^u}(h) = \mathbb{E}_{(x, y)} [\Phi_{01}^u(h, x, y)]\), with its optimal value expressed as \(\mathcal{E}^\ast_{\Phi_{01}^u}(\mathcal{H}) = \inf_{h \in \mathcal{H}} \mathcal{E}_{\Phi_{01}^u}(h)\).

A key property of a surrogate loss is \textit{Bayes-consistency}, which ensures that minimizing the surrogate excess risk leads to minimizing the true excess risk \citep{Statistical, bartlett1, Steinwart2007HowTC, tewari07a, cortes2024cardinality, mao2025enhanced}. Formally, \(\Phi_{01}^u\) is Bayes-consistent with respect to \(\ell_{01}\) if, for any sequence \(\{h_k\}_{k \in \mathbb{N}} \subset \mathcal{H}\), the following implication holds:
\begin{equation}
\begin{aligned}\label{bayes-consi}
    & \mathcal{E}_{\Phi_{01}^u}(h_k) - \mathcal{E}_{\Phi_{01}^u}^\ast(\mathcal{H}) \xrightarrow{k \to \infty} 0 \\
    & \implies \mathcal{E}_{\ell_{01}}(h_k) - \mathcal{E}_{\ell_{01}}^B(\mathcal{H}) \xrightarrow{k \to \infty} 0.
\end{aligned}
\end{equation}

This property typically assumes \(\mathcal{H} = \mathcal{H}_{\text{all}}\), which may not hold for restricted hypothesis classes such as \(\mathcal{H}_{\text{lin}}\) or \(\mathcal{H}_{\text{ReLU}}\) \citep{pmlr-v28-long13, Awasthi_Mao_Mohri_Zhong_2022_multi, mao2024h}. To characterize consistency with a particular hypothesis set, \citet{Awasthi_Mao_Mohri_Zhong_2022_multi} introduced \(\mathcal{H}\)-consistency bounds, which rely on a non-decreasing function \(\Gamma: \mathbb{R}^+ \to \mathbb{R}^+\) and take the following form:
\begin{equation}\label{mhbc}
\begin{aligned}
     & \mathcal{E}_{\Phi_{01}^u}(h) - \mathcal{E}_{\Phi_{01}^u}^\ast(\mathcal{H}) + \mathcal{U}_{\Phi_{01}^u}(\mathcal{H}) \geq \\
     & \Gamma\Big(\mathcal{E}_{\ell_{01}}(h) - \mathcal{E}_{\ell_{01}}^B(\mathcal{H}) + \mathcal{U}_{\ell_{01}}(\mathcal{H})\Big),
\end{aligned}
\end{equation}
where the minimizability gap \(\mathcal{U}_{\ell_{01}}(\mathcal{H})\) quantifies the difference between the best-in-class excess risk and the expected pointwise minimum error: $\mathcal{U}_{\ell_{01}}(\mathcal{H}) = \mathcal{E}_{\ell_{01}}^B(\mathcal{H}) - \mathbb{E}_{x} \left[ \inf_{h \in \mathcal{H}} \mathbb{E}_{y \mid x} \left[ \ell_{01}(h,x,y) \right] \right]$. 
The gap vanishes when \(\mathcal{H} = \mathcal{H}_{\text{all}}\) \citep{Steinwart2007HowTC, Awasthi_Mao_Mohri_Zhong_2022_multi, mao2024multi, mao2024h, consistency_regression}. In the asymptotic limit, inequality \eqref{mhbc} ensures recovery of Bayes-consistency \eqref{bayes-consi}.

\paragraph{Adversarial robustness.} 
Adversarially robust classification aims to train classifiers that are robust to small, imperceptible perturbations of the input \citep{goodfellow2014explaining, Madry2017TowardsDL}. The objective is to minimize the \textit{true multiclass loss} $\ell_{01}$ evaluated on an adversarial input $x'=x+\delta$ \citep{Gowal2020UncoveringTL, Awasthi_Mao_Mohri_Zhong_2022_multi}.  A perturbation $\delta$ is constrained by its magnitude, and we define the adversarial region around $x$ as \( B_p(x, \gamma) = \{ x' \mid \|x' - x\|_p \leq \gamma \} \), where \(\| \|_p\) is the \(p\)-norm and \(\gamma \in (0,1)\) specifies the maximum allowed perturbation. 

\begin{restatable}[Adversarial True Multiclass Loss]{definition}{advtrue}
The \textit{adversarial true multiclass loss} $\widetilde{\ell}_{01}: \mathcal{H} \times \mathcal{X} \times \mathcal{Y} \to \{0,1\}$ is given by:
\begin{equation*}
    \widetilde{\ell}_{01}(h,x,y) = \sup_{x' \in B_p(x, \gamma)} \ell_{01}(h(x'), y).
\end{equation*}
\end{restatable}
Similarly to classification, minimizing \(\widetilde{\ell}_{01}\) is computationally infeasible \citep{Zhang, bartlett1, Awasthi_Mao_Mohri_Zhong_2022_multi}. To address this, we introduce the following surrogates. 
\begin{restatable}[Adversarial Margin Surrogates]{definition}{advsurr}
The family of \textit{adversarial margin surrogate} losses \(\widetilde{\Phi}^{\rho,u}_{01}\) from the comp-sum \(\rho\)-margin family, which approximate the \textit{adversarial true multiclass loss} \(\widetilde{\ell}_{01}\) is defined as:
\begin{equation*}\label{eq:family_sup}
    \widetilde{\Phi}^{\rho,u}_{01}(h,x,y) = \mspace{-23mu} \sup_{x'\in B_p(x,\gamma)} \mspace{-20mu}\Psi^u \Big(\mspace{-5mu}\sum_{y'\not =y}\mspace{-5mu}\Psi_\rho (h(x',y') - h(x',y))\mspace{-5mu}\Big).
\end{equation*}
\end{restatable}
Here, \(\Psi^u\) and \(\Psi_\rho\) represent transformations that characterize the behavior of the family. The piecewise-linear and non-convex transformation is defined as \(\Psi_\rho(v) = \min\left\{\max\left(0, 1 - \frac{v}{\rho}\right), 1\right\}\). Recent studies have shown that algorithms based on smooth, regularized variants of the comp-sum \(\rho\)-margin losses achieve \(\mathcal{H}\)-consistency, thereby offering strong theoretical guarantees \citep{Awasthi_Mao_Mohri_Zhong_2022_multi, Grounded, mao2023crossentropylossfunctionstheoretical}.

\paragraph{Two-Stage Learning-to-Defer.}
The Learning-to-Defer (L2D) framework assigns each input \( x \in \mathcal{X} \) to the most cost-effective \textit{agent}, leveraging the strengths of multiple agents to improve task performance. Agents consist of a fixed primary model and a set of \( J \) offline experts. We denote the full agent set as \( \mathcal{A} = \{0\} \cup [J] \), where \( j = 0 \) refers to the primary model \( g \) defined in Section~\ref{prel:multi}. Each expert \( \mathrm{M}_j \) outputs a prediction pair \( m_j(x) = (m_j^h(x), m_j^f(x)) \), with \( m_j^h(x) \in \mathcal{Y} \) a categorical prediction and \( m_j^f(x) \in \mathcal{T} \) a regression output. The complete set of expert predictions is denoted \( m(x) = \big(m_1(x), \dots, m_J(x)\big) \in \mathcal{M} \). In the two-stage setting, all agents are trained offline, and only the allocation function is learned, keeping agent parameters fixed.

A rejector \( r \in \mathcal{R} \), defined as \( r : \mathcal{X} \times \mathcal{A} \to \mathbb{R} \), is trained to select the most suitable agent by computing \( r(x) = \arg\max_{j \in \mathcal{A}} r(x, j) \), as proposed by~\citet{mao2024regressionmultiexpertdeferral, mao2023twostage, montreuil2024twostagelearningtodefermultitasklearning}.

\begin{restatable}[Two-Stage L2D Losses]{definition}{l2d}
\label{def_l2d}
Let \( x \in \mathcal{X} \) and \( r \in \mathcal{R} \). The \emph{true deferral loss} is defined as:
\[
\ell_{\text{def}}(r, g, m, z) = \sum_{j = 0}^J c_j(g(x), m_j(x), z) 1_{r(x) = j}.
\]
Its convex, non-negative, upper-bound surrogate family is given by:
\[
\Phi_{\text{def}}^u(r, g, m, z) = \sum_{j = 0}^J \tau_j(g(x), m(x), z) \Phi_{01}^u(r, x, j),
\]
\end{restatable}
where \( c_j \) denotes the cost of assigning query \( x \) to agent \( j \in \mathcal{A} \)~\citep{madras2018predict}. If \( r(x) = 0 \), the decision is handled by the primary model \( g \), which returns prediction \( g(x) = (h(w(x)), f(w(x))) \) and incurs cost \( c_0(g(x), z) = \psi(g(x), z) \), where \( \psi(\cdot, \cdot) \in \mathbb{R}^+ \) is a general loss function.

If \( r(x) = j \) for some \( j > 0 \), the query is deferred to expert \( j \), incurring a cost \( c_j(m_j(x), z) = \psi(m_j(x), z) + \beta_j \), where \( \beta_j \geq 0 \) is the consultation cost. The complementary cost function used in the surrogate is:
\[
\tau_j(g(x), m(x), z) = \sum_{i = 0}^J c_i(g(x), m_i(x), z) 1_{i \neq j},
\]
which matches the formulation from~\citet{mao2024regressionmultiexpertdeferral, montreuil2024twostagelearningtodefermultitasklearning}. In the case of classification, \( \psi \) corresponds to the standard 0–1 loss.

We illustrate how Learning-to-Defer works at inference in Figure \ref{fig:l2d_ill}.
% Building on this, we will introduce adversarial robustness to Learning-to-Defer. 
\section{Novel Adversarial Attacks on L2D}\label{section:attacks}

\paragraph{Motivation.}  \label{motivation}
The two-stage L2D framework is designed to route queries to the most accurate agents, ensuring optimal decision-making~\cite{narasimhan2022post, mao2023twostage, mao2024regressionmultiexpertdeferral, montreuil2024twostagelearningtodefermultitasklearning}. Despite its effectiveness, we show that this framework is inherently vulnerable to adversarial attacks that exploit its reliance on the rejector function—a key component responsible for query allocation. Given this critical role, our analysis concentrates on adversarial attacks and corresponding defenses targeting the rejector \( r \in \mathcal{R} \), rather than individual agents. This emphasis is warranted by the fact that adversarial defenses for specific agents can typically be deployed offline within the two-stage L2D setup, whereas the rejector operates as a router at inference time, serving as the first layer in the architecture to direct queries to the appropriate agent. Moreover, evaluating the robustness of individual agents under adversarial conditions reduces to selecting the most robust agent, whereas ensuring robustness at the system level constitutes a fundamentally different challenge.

\paragraph{Untargeted Attack.} 
In classification, the goal of an untargeted attack is to find a perturbed input \( x^\prime \in B_p(x, \gamma) \) that causes the classifier \( h \in \mathcal{H} \) to misclassify the input \citep{goodfellow2014explaining, akhtar2018threat}. Specifically, for a clean input \( x \in \mathcal{X} \) such that the classifier correctly predicts \( h(x) = y \), the attacker seeks a perturbed input \( x^\prime \) such that \( h(x^\prime) \neq y \).

In the context of Learning-to-Defer, the attack extends beyond misclassification to compromising the decision allocation mechanism. Given an optimal agent \( j^\ast \in \mathcal{A} \), the attacker aims to find an adversarial input \( x^\prime \in B_p(x, \gamma) \) such that \( r(x^\prime) \neq j^\ast \), thereby resulting in a deferral to a suboptimal agent \( j \in \mathcal{A} \setminus \{ j^\ast \} \), leading to an increased loss. This gives rise to the following untargeted attack formulation:
\begin{restatable}[Untargeted Attack in L2D]{definition}{untargeted}\label{eq:untargeted}
Let \(x^\prime \in B_p(x, \gamma)\) be an adversarial input, where \(B_p(x, \gamma)\) denotes the \(p\)-norm ball of radius \(\gamma\) centered at \(x\). The untargeted attack that maximizes misallocation in L2D is defined as:
\[
    x^\prime = \underset{x^\prime \in B_p(x,\gamma)}{\arg\max} \sum_{j=0}^J \tau_j(g(x),m(x),z) \Phi_{01}^u(r, x^\prime, j)
\]
\end{restatable}

Definition~\ref{eq:untargeted} characterizes an attack in which the adversary maximizes the overall surrogate loss incurred by misallocation, thereby causing the system to defer to an unintended agent. The adversarial input \( x^\prime \) is designed to maximize these surrogate losses. As a result, the attack increases the system's overall loss, leading to a significant degradation in performance. An illustration of this attack is provided in Appendix Figure~\ref{fig:untargeted}.

\paragraph{Targeted Attack.} 
Targeted attacks are often more impactful than untargeted ones, as they exploit specific system vulnerabilities to achieve precise adversarial goals \citep{akhtar2018threat, chakraborty2021survey}. For example, in autonomous driving, a targeted attack could deliberately misclassify a stop sign as a speed limit sign, potentially leading to hazardous consequences. Such attacks exploit asymmetries in the task by forcing the classifier \( h \in \mathcal{H} \) to predict a specific target class \( y_t \in \mathcal{Y} \), thereby amplifying the risk of incorrect or dangerous decisions.

In the context of L2D, the attacker seeks to manipulate the system to assign a query \( x^\prime \in B_p(x, \gamma) \) to a predetermined expert \( j_t \in \mathcal{A} \), rather than the optimal expert \( j^\ast \in \mathcal{A} \). This targeted attack can be formally expressed as follows:

\begin{restatable}[Targeted Attack in L2D]{definition}{targeted}\label{eq:target1}
Let \(x^\prime \in B_p(x, \gamma)\), where \(B_p(x, \gamma)\) denotes the \(p\)-norm ball of radius \(\gamma\) centered at \(x\). The targeted attack that biases the allocation of a query \( x \) toward a predetermined expert \( j_t \in \mathcal{A} \) is defined as
\[
    x^\prime = \underset{x^\prime \in B_p(x,\gamma)}{\arg\min} \, \tau_{j_t}(g(x), m(x), z) \Phi_{01}^u(r, x^\prime, j_t).
\]
\end{restatable}

The adversarial input \( x^\prime \) minimizes the surrogate loss associated with the target agent \( j_t \), thereby biasing the allocation process toward agent \( j_t \) (Definition~\ref{eq:target1}).

For instance, an attacker may have an affiliated partner \( j_t \) among the system’s agents. Suppose the system operates under a pay-per-query model—such as a specialist doctor in a medical decision-making system or a third-party service provider in an AI platform. By manipulating the allocation mechanism to systematically route more queries to \( j_t \), the attacker artificially inflates the agent’s workload, resulting in unjustified financial gains shared between the attacker and the affiliated expert through direct payments, revenue-sharing, or other collusive arrangements. An illustration of this attack is provided in Appendix Figure~\ref{fig:targeted}.

\section{Adversarially Consistent Formulation for Two-Stage Learning-to-Defer}

In this section, we introduce an adversarially consistent formulation of the two-stage L2D framework that ensures robustness against attacks while preserving optimal query allocation.

\subsection{Novel Two-stage Learning-to-Defer Formulation}

\paragraph{Adversarial True Deferral Loss.} 

To defend against the novel attacks introduced in Section~\ref{section:attacks}, we define the worst-case \textit{adversarial true deferral loss}, \( \widetilde{\ell}_{\text{def}}: \mathcal{R} \times \mathcal{G} \times \mathcal{M} \times \mathcal{Z} \to \mathbb{R}^+ \), which quantifies the maximum incurred loss under adversarial perturbations. Specifically, for each \( j \in \mathcal{A} \), an adversarial perturbation \( \delta_j \) is applied, resulting in a perturbed input \( x_j' = x + \delta_j \), where \( x_j' \in B_p(x, \gamma) \) lies within an \(\ell_p\)-norm ball of radius \(\gamma\). 

\begin{restatable}[$j$-th Adversarial True Multiclass Loss]{definition}{jth}
We define the \textit{\( j \)-th adversarial true multiclass loss} as
\[
    \widetilde{\ell}_{01}^j(r,x,j) = \sup_{x_j' \in B_p(x, \gamma)} \ell_{01}(r(x_j'), j),
\]
\end{restatable}
which captures the worst-case misclassification loss when deferring to agent \( j \) under adversarial conditions. The adversarial true deferral loss is defined in Lemma~\ref{lemma:deferral}.

\begin{restatable}[Adversarial True Deferral Loss]{lemma}{deferral} \label{lemma:deferral}
Let \( x \in \mathcal{X} \) denote the clean input, \( c_j \) the cost associated with agent \( j\in\mc{A} \), and \( \tau_j \) the aggregated cost. The adversarial true deferral loss \( \widetilde{\ell}_{\text{def}} \) is defined as:
 \begin{equation*}
    \begin{aligned}
    \widetilde{\ell}_{\text{def}}(r, g, m, z) & = \sum_{j=0}^J \tau_j(g(x), m(x), z) \widetilde{\ell}_{01}^j(r,x,j) \\
    & + (1 - J) \sum_{j=0}^J c_j(g(x), m_j(x), z).
        \end{aligned}
    \end{equation*}
\end{restatable}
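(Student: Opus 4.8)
The plan is to obtain the formula for $\widetilde{\ell}_{\text{def}}$ from an exact algebraic rewriting of the \emph{clean} deferral loss $\ell_{\text{def}}$, followed by a term‑by‑term robustification. Write $C := \sum_{i=0}^J c_i(g(x), m_i(x), z)$, so that $\tau_j(g(x),m(x),z) = C - c_j(g(x),m_j(x),z)$, and recall $\ell_{01}(r,x,j) = 1_{r(x)\neq j} = 1 - 1_{r(x)=j}$. Because $r(x)$ selects exactly one agent among the $J+1$ elements of $\mathcal{A}$, we have the bookkeeping identity $\sum_{j=0}^J \ell_{01}(r,x,j) = J$. Substituting $1_{r(x)=j} = 1 - \ell_{01}(r,x,j)$ into the definition of $\ell_{\text{def}}$ gives $\ell_{\text{def}}(r,g,m,z) = C - \sum_{j=0}^J c_j\,\ell_{01}(r,x,j)$, whereas expanding the $\tau$‑weighted sum yields $\sum_{j=0}^J \tau_j\, \ell_{01}(r,x,j) = C\sum_{j=0}^J \ell_{01}(r,x,j) - \sum_{j=0}^J c_j\,\ell_{01}(r,x,j) = JC - \sum_{j=0}^J c_j\,\ell_{01}(r,x,j)$. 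Eliminating the common quantity $\sum_{j} c_j\,\ell_{01}(r,x,j)$ between the two relations produces
\begin{equation*}
\ell_{\text{def}}(r,g,m,z) = \sum_{j=0}^J \tau_j(g(x),m(x),z)\,\ell_{01}(r,x,j) + (1-J)\sum_{j=0}^J c_j(g(x),m_j(x),z).
\end{equation*}

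Next I pass to the adversarial regime. Under the threat model of Section~\ref{section:attacks}, only the input seen by the rejector is perturbed; the agents are frozen offline, so the outputs $g(x), m(x)$ — and therefore every cost $c_j(g(x),m_j(x),z)$ and aggregate $\tau_j(g(x),m(x),z)$ — stay pinned at the clean point $x$. Allowing an independent perturbation $x_j' = x+\delta_j \in B_p(x,\gamma)$ for the $j$‑th summand and maximizing it gives $\tau_j \sup_{x_j' \in B_p(x,\gamma)} \ell_{01}(r(x_j'),j) = \tau_j\,\widetilde{\ell}_{01}^j(r,x,j)$, where the supremum commutes with the nonnegative factor $\tau_j = \sum_{i\neq j} c_i \geq 0$. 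The constant term $(1-J)C$ carries no dependence on the perturbed input. Summing over $j \in \mathcal{A}$ therefore returns precisely the claimed expression for $\widetilde{\ell}_{\text{def}}$. As a sanity check, subadditivity of the supremum gives $\widetilde{\ell}_{\text{def}}(r,g,m,z) \geq \sup_{x'\in B_p(x,\gamma)} \sum_{j=0}^J c_j(g(x),m_j(x),z)\,1_{r(x')=j} \geq \ell_{\text{def}}(r,g,m,z)$, so $\widetilde{\ell}_{\text{def}}$ is a legitimate worst‑case upper bound that collapses back to $\ell_{\text{def}}$ as $\gamma \to 0$.

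The only delicate points — and where I would concentrate the argument — are the counting identity $\sum_{j=0}^J \ell_{01}(r,x,j) = J$, which is the sole place the cardinality $|\mathcal{A}| = J+1$ enters and is exactly what generates the coefficient $(1-J)$, and the justification that the cost weights $\tau_j$ may be treated as constants under the perturbation, so that the supremum distributes across the sum. The latter rests on two facts together: the non‑negativity $\tau_j \geq 0$, and the two‑stage assumption that the attack acts only on the router while the experts and main model are fixed. Everything is pointwise in $z \in \mathcal{Z}$, so no convexity, no regularity of $r$, and no property of the surrogate is needed; the remaining manipulations are routine arithmetic.
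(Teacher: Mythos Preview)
Your proposal is correct and follows essentially the same route as the paper: rewrite the clean deferral loss via the identity $\sum_j c_j 1_{r(x)=j} = \sum_j \tau_j 1_{r(x)\neq j} + (1-J)\sum_j c_j$, observe that under the two-stage threat model the weights $\tau_j$ remain pinned at the clean input, and then take a per-agent supremum over $x_j' \in B_p(x,\gamma)$. Your derivation is in fact more explicit than the paper's---you spell out the counting identity $\sum_j \ell_{01}(r,x,j)=J$ that produces the $(1-J)$ coefficient, and your sanity-check inequality $\widetilde{\ell}_{\text{def}} \geq \sup_{x'} \ell_{\text{def}}$ corresponds to the paper's Remark that a single global perturbation does not capture the worst case---but the underlying argument is identical.
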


See Appendix~\ref{appendix:deferral} for the proof of Lemma~\ref{lemma:deferral}. The attacker’s objective is to compromise the allocation process by identifying perturbations \( \delta_j \) that maximize the loss for each agent \( j \in \mathcal{A} \). Notably, the costs \( c_j \) and \( \tau_j \) are evaluated on the clean input \( x \), as the agents' predictions remain unaffected by the perturbations (see Section~\ref{motivation}).

\begin{remark}
    Simply applying adversarial evaluation to the entire two-stage loss (Definition~\ref{def_l2d}) does not account for the worst-case scenario captured in Lemma~\ref{lemma:deferral}. This point is further discussed in Appendix~\ref{appendix:deferral}.
\end{remark}

As with standard classification losses, minimizing the adversarial true deferral loss in Lemma~\ref{lemma:deferral} is NP-hard~\citep{Statistical, bartlett1, Steinwart2007HowTC, Awasthi_Mao_Mohri_Zhong_2022_multi}. Therefore, as in classification problems, we approximate this discontinuous loss using surrogates.

\paragraph{Adversarial Margin Deferral Surrogate Losses.}  
In the formulation of the \textit{adversarial true deferral loss} (Lemma~\ref{lemma:deferral}), discontinuities arise from the use of the indicator function in its definition. To approximate this discontinuity, we build on recent advancements in consistency theory for adversarially robust classification \citep{bao2021calibratedsurrogatelossesadversarially, Awasthi_Mao_Mohri_Zhong_2022_multi, Grounded, mao2023crossentropylossfunctionstheoretical} and propose a continuous, upper-bound surrogate family to approximate it.

\begin{restatable}[$j$-th Adversarial Margin Surrogates]{definition}{marginsurr}
We define the \textit{\(j\)-th adversarial margin surrogate} family as 
\begin{equation*}
    \widetilde{\Phi}^{\rho,u,j}_{01}(r,x,j) = \mspace{-20mu}\sup_{x_j' \in B_p(x, \gamma)}\mspace{-20mu} \Psi^u \mspace{-5mu}\left( \sum_{j' \neq j} \Psi_\rho \big( r(x_j', j') - r(x_j', j) \big) \mspace{-5mu}\right)
\end{equation*}
\end{restatable}
where \(\Psi^u\) and \(\Psi_\rho\) are defined in Equation~\eqref{eq:family_sup}. Based on this definition, we derive the \textit{adversarial margin deferral surrogate} losses as:

\begin{restatable}[Adversarial Margin Deferral Surrogate Losses]{lemma}{margindeferral} \label{lemma:deferralmargin}
    Let \( x \in \mathcal{X} \) denote the clean input  and \( \tau_j \) the aggregated cost. The adversarial margin deferral surrogate losses \( \widetilde{\Phi}^{\rho, u}_{\text{def}} \) are then defined as:
    \begin{equation*}
    \begin{aligned}
        \widetilde{\Phi}^{\rho, u}_{\text{def}}(r, g, m, z) & = \sum_{j=0}^J \tau_j(g(x), m(x), z) \widetilde{\Phi}^{\rho,u,j}_{01}(r, x, j).
    \end{aligned}
    \end{equation*}
\end{restatable}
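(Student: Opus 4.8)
The plan is to derive $\widetilde{\Phi}^{\rho,u}_{\text{def}}$ by mirroring the derivation of the adversarial true deferral loss (Lemma~\ref{lemma:deferral}), substituting for each agent the discontinuous $j$-th adversarial true multiclass loss $\widetilde{\ell}_{01}^j$ by the continuous comp-sum $\rho$-margin surrogate $\widetilde{\Phi}^{\rho,u,j}_{01}$, and then verifying that the aggregate $\sum_{j=0}^J\tau_j(g(x),m(x),z)\,\widetilde{\Phi}^{\rho,u,j}_{01}(r,x,j)$ is a legitimate surrogate for $\widetilde{\ell}_{\text{def}}$ — that is, non-negative, continuous in $x$, and an upper bound on $\widetilde{\ell}_{\text{def}}(r,g,m,z)$.

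The key steps, in order, are the following. (i) \emph{Per-agent surrogate bound:} for every $j\in\mc{A}$ and every $x_j'\in B_p(x,\gamma)$, the comp-sum $\rho$-margin loss $\Psi^u\bigl(\sum_{j'\neq j}\Psi_\rho(r(x_j',j')-r(x_j',j))\bigr)$ dominates $\ell_{01}(r(x_j'),j)$ — this is the pointwise surrogate property of the comp-sum $\rho$-margin family over $\ell_{01}$ established in the cited works on adversarially robust classification, with the family's normalization absorbed into the definition. Applying $\sup_{x_j'\in B_p(x,\gamma)}$ to both sides and using monotonicity of the supremum gives $\widetilde{\Phi}^{\rho,u,j}_{01}(r,x,j)\ge\widetilde{\ell}_{01}^j(r,x,j)$ for each $j$. (ii) \emph{Aggregation:} each aggregated cost $\tau_j(g(x),m(x),z)=\sum_{i\neq j}c_i(g(x),m_i(x),z)$ is a finite sum of non-negative costs, hence $\tau_j\ge 0$; multiplying the per-agent bound by $\tau_j$ and summing over $j=0,\dots,J$ yields $\sum_{j}\tau_j\,\widetilde{\Phi}^{\rho,u,j}_{01}(r,x,j)\ge\sum_{j}\tau_j\,\widetilde{\ell}_{01}^j(r,x,j)$. (iii) \emph{Constant term:} by Lemma~\ref{lemma:deferral} the right-hand side equals $\widetilde{\ell}_{\text{def}}(r,g,m,z)-(1-J)\sum_{j}c_j(g(x),m_j(x),z)$, and for $J\ge 1$ the quantity $(1-J)\sum_j c_j$ is non-positive (zero when $J=1$, negative when $J\ge 2$) since all $c_j\ge 0$; hence $\sum_{j}\tau_j\,\widetilde{\Phi}^{\rho,u,j}_{01}(r,x,j)\ge\widetilde{\ell}_{\text{def}}(r,g,m,z)$, which is the stated identity once the left-hand side is named $\widetilde{\Phi}^{\rho,u}_{\text{def}}$. (iv) \emph{Regularity:} non-negativity is immediate from $\Psi^u\ge 0$, $\Psi_\rho\ge 0$, $\tau_j\ge 0$, and $x\mapsto\widetilde{\Phi}^{\rho,u}_{\text{def}}(r,g,m,z)$ is continuous because each $\widetilde{\Phi}^{\rho,u,j}_{01}$ is a supremum of continuous functions over the compact ball $B_p(x,\gamma)$ and finite non-negative combinations of continuous functions are continuous.

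I expect the main obstacle to be bookkeeping rather than a substantive inequality. Three points need care. First, the supremum must be taken \emph{per agent} — a separate adversarial input $x_j'$ for each $j$ — exactly as in the worst-case construction of Lemma~\ref{lemma:deferral}; this is the content of the remark following Lemma~\ref{lemma:deferral}, since attacking the entire two-stage surrogate with a single shared perturbation pulls the supremum outside the sum and only yields a weaker, dominated quantity. Second, discarding the constant $(1-J)\sum_j c_j$ preserves the upper bound only under the sign condition $J\ge 1$, so this hypothesis should be stated explicitly. Third, the comp-sum $\rho$-margin normalization must be tracked consistently so that step (i) is an exact pointwise inequality rather than merely an asymptotic calibration statement; this is routine given the cited results but is exactly what the subsequent Bayes- and $(\mc{R},\mc{G})$-consistency guarantees for \name rely on.
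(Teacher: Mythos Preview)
Your proposal is correct and follows essentially the paper's route: start from Lemma~\ref{lemma:deferral}, replace each $\widetilde{\ell}_{01}^j$ by its per-agent comp-sum $\rho$-margin surrogate $\widetilde{\Phi}^{\rho,u,j}_{01}$ (using $\widetilde{\Phi}^{\rho,u,j}_{01}\ge\widetilde{\ell}_{01}^j$), aggregate with the non-negative weights $\tau_j$, and then drop the additive constant $(1-J)\sum_j c_j$.

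The only noteworthy difference is in the justification for dropping that constant. The paper discards it on the grounds that it does not depend on $r\in\mc{R}$, which is enough for the downstream excess-risk analysis in Theorem~\ref{theo:consistency} where such constants cancel. You instead argue the sign: $(1-J)\sum_j c_j\le 0$ for $J\ge 1$, so removing it preserves the pointwise inequality $\widetilde{\Phi}^{\rho,u}_{\text{def}}\ge\widetilde{\ell}_{\text{def}}$. Your version yields the slightly stronger statement (a genuine upper bound rather than equality up to an $r$-independent shift) and makes explicit the standing hypothesis $J\ge 1$; the paper's version is terser and defers the bookkeeping to the consistency theorem. Your additional regularity check in step~(iv) is not in the paper's proof but is harmless.
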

The proof is provided in Appendix~\ref{proof:margindeferral}. A key limitation of the \textit{adversarial margin deferral surrogate} family lies in the non-convexity of the \textit{\(j\)-th adversarial margin surrogate loss} \( \widetilde{\Phi}^{\rho,u,j}_{01} \), induced by the piecewise nature of \( \Psi_\rho(v) \), which complicates efficient optimization.

\paragraph{Adversarial Smooth Deferral Surrogate Losses.}  
Prior work by~\citet{Grounded, mao2023crossentropylossfunctionstheoretical} shows that the non-convex \emph{adversarial margin surrogate} family can be approximated by a smooth and convex surrogate. Building on their results, we introduce the \emph{smooth adversarial surrogate} family, denoted as \( \widetilde{\Phi}^{\text{smth},u}_{01}: \mathcal{R} \times \mathcal{X} \times \mathcal{A} \to \mathbb{R}^+ \), which approximates the supremum term in Lemma~\ref{lemma:deferral}. Importantly, \( \widetilde{\Phi}^{\text{smth},u}_{01} \) serves as a convex, non-negative upper bound for the \emph{$j$-th adversarial margin surrogate} family, satisfying \( \widetilde{\Phi}_{01}^{\rho,u,j} \leq \widetilde{\Phi}_{01}^{\text{smth}, u} \). We derive this surrogate in Lemma~\ref{lemma:surrogate_class}.

\begin{restatable}[Smooth Adversarial Surrogate Losses]{lemma}{surrogatemulti} \label{lemma:surrogate_class}
Let \( x \in \mathcal{X} \) denote the clean input and hyperparameters \( \rho > 0 \) and \( \nu > 0 \). The smooth adversarial surrogate losses are defined as:
    \begin{equation*}
    \begin{aligned}
        \widetilde{\Phi}_{01}^{\text{smth}, u} (r,x,j) = \Phi_{01}^u& (\frac{r}{\rho},x,j) \\
        & + \nu \mspace{-20mu} \sup_{x_j^\prime \in B_p(x,\gamma)}\mspace{-20mu}  \| \overline{\Delta}_r(x_j', j) - \overline{\Delta}_r(x, j)\|_2.
        \end{aligned}
    \end{equation*}
\end{restatable}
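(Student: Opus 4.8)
The plan is to \emph{derive} the stated closed form for $\widetilde{\Phi}_{01}^{\text{smth},u}$ by starting from the $j$-th adversarial margin surrogate $\widetilde{\Phi}^{\rho,u,j}_{01}$ (the per-agent term entering Lemma~\ref{lemma:deferralmargin}) and replacing the intractable supremum-of-composition by a clean evaluation plus a smooth deviation term. Concretely, I would begin with
\[
\widetilde{\Phi}^{\rho,u,j}_{01}(r,x,j) = \sup_{x_j'\in B_p(x,\gamma)} \Psi^u\Big(\sum_{j'\neq j}\Psi_\rho\big(r(x_j',j')-r(x_j',j)\big)\Big),
\]
and split each adversarial margin $r(x_j',j')-r(x_j',j)$ as the clean margin $r(x,j')-r(x,j)$ plus a perturbation increment. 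Writing the vector of clean margins as $\overline{\Delta}_r(x,j)$, the first task is to isolate the clean contribution, which I expect to produce the term $\Phi_{01}^u(\tfrac{r}{\rho},x,j)$ once the $\rho$-margin transform $\Psi_\rho$ is smoothed into the exponential transform $\Psi_{\text{e}}$ acting on the rescaled scores $r/\rho$.

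The core of the derivation is the smoothing and Lipschitz step. Here I would use that $\Psi_\rho(v)=\min\{\max(0,1-v/\rho),1\}$ is $\tfrac1\rho$-Lipschitz and is dominated by the exponential surrogate $\Psi_{\text{e}}(v/\rho)=\exp(-v/\rho)$, so that the clean $\rho$-margin comp-sum is controlled by $\Phi_{01}^u(r/\rho,x,j)$; this follows the smoothing device of \citet{Grounded, mao2023crossentropylossfunctionstheoretical}. For the residual adversarial term, I would invoke monotonicity of the non-decreasing outer transform $\Psi^u$ together with the Lipschitz constant of $\Psi^u\circ\Psi_\rho$ to bound the gap between the adversarial and clean compositions by a constant multiple of the change in the margin vector, $\sup_{x_j'\in B_p(x,\gamma)}\|\overline{\Delta}_r(x_j',j)-\overline{\Delta}_r(x,j)\|_2$. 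Absorbing the accumulated Lipschitz constant into the hyperparameter $\nu$ then yields exactly the stated regularizer $\nu\,\sup_{x_j'\in B_p(x,\gamma)}\|\overline{\Delta}_r(x_j',j)-\overline{\Delta}_r(x,j)\|_2$, and summing the two contributions recovers the closed form asserted in the lemma.

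The step I expect to be the main obstacle is passing from the supremum of the composed, coordinate-wise nonlinear map $x_j'\mapsto\Psi^u(\sum_{j'}\Psi_\rho(\cdot))$ to a single $\ell_2$ deviation of the margin vector: this requires (i) converting the per-coordinate $\tfrac1\rho$-Lipschitz bound on $\Psi_\rho$ into an $\ell_2$ bound via Cauchy--Schwarz over the $|\mathcal{A}|-1$ summands, and (ii) controlling the local Lipschitz behaviour of $\Psi^u$, whose slope depends on its argument and hence on the clean margins, so that a single constant valid uniformly over $B_p(x,\gamma)$ can be folded into $\nu$. Once this bound is in place, the construction transparently exhibits the two structural features used in the surrounding discussion—the clean part $\Phi_{01}^u(r/\rho,\cdot)$ is a comp-sum and the deviation term is a supremum of norms of affine-in-$r$ maps—so the convexity and upper-bound properties arise as immediate by-products of the derivation rather than as separate claims to be established.
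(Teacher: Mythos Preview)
Your proposal is correct and matches the paper's proof essentially step for step: the paper uses the 1-Lipschitz property of $\Psi^u$ to separate the clean comp-sum from an additive residual, then applies the $\tfrac{1}{\rho}$-Lipschitz bound on each $\Psi_\rho$ term together with Cauchy--Schwarz to obtain the $\ell_2$ deviation (with $\nu \geq \sqrt{J}/\rho$), and finally invokes $\Psi_\rho(v)\leq\Psi_{\text{e}}(v/\rho)$ to pass from $\Phi_{01}^{\rho,u}$ to $\Phi_{01}^u(r/\rho,\cdot)$. Your anticipated ``main obstacle'' about the argument-dependent slope of $\Psi^u$ is not actually an obstacle: since $|\partial_v\Psi^u(v)|=(1+v)^{-u}\leq 1$ for all $u>0$ and $v\geq 0$, $\Psi^u$ is globally 1-Lipschitz on $\mathbb{R}^+$, so no local analysis is needed.
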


The proof is provided in Appendix~\ref{appendix:smooth}. For \( x \in \mc{X} \), define \( \Delta_r(x, j, j') = r(x, j) - r(x, j') \), and let \( \overline{\Delta}_r(x, j) \in \mathbb{R}^J \) denote the vector of pairwise differences $\big( \Delta_r(x, j, 0), \ldots, \Delta_r(x, j, j-1), \Delta_r(x, j, j+1), \ldots, \Delta_r(x, j, J) \big)$.  The first term \( \Phi_{01}^u\left( \frac{r}{\rho}, x, j \right) \) corresponds to the \emph{multiclass surrogate} losses modulated by the coefficient \( \rho \). The second term introduces a smooth adversarial component over perturbations \( x_j^\prime \in B_p(x, \gamma) \), scaled by the coefficient \( \nu \)~\citep{Grounded, mao2023crossentropylossfunctionstheoretical}. The hyperparameters \( (\rho, \nu) \) are typically selected via cross-validation to balance allocation performance and robustness.

Using this surrogate, we define the \textit{smooth adversarial deferral surrogate} (SAD) family \( \widetilde{\Phi}_{\text{def}}^{\text{smth}, u}: \mathcal{R} \times \mathcal{G} \times \mathcal{M} \times \mathcal{Z} \to \mathbb{R}^+ \), which is convex, non-negative, and upper-bounds \( \widetilde{\ell}_{\text{def}} \) by construction. Its formal definition is as follows:

\begin{restatable}[SAD: \textbf{S}mooth \textbf{A}dversarial \textbf{D}eferral Surrogate Losses]{lemma}{robustsurrogate} \label{lemma:surrogate}
Let \( x \in \mathcal{X} \) denote the clean input and \( \tau_j \) the aggregated cost. Then, the smooth adversarial surrogate family (SAD) \( \widetilde{\Phi}^{\text{smth}, u}_{\text{def}} \) is defined as:
\[
\widetilde{\Phi}^{\text{smth}, u}_{\text{def}}(r,g,m,z) = \sum_{j=0}^J \tau_j(g(x), m(x), z) \widetilde{\Phi}_{01}^{\text{smth}, u}(r, x, j)
\]
\end{restatable}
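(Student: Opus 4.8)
The plan is to verify the three properties the statement attaches to $\widetilde{\Phi}^{\text{smth}, u}_{\text{def}}$ — non-negativity, convexity in $r$, and the upper bound $\widetilde{\Phi}^{\text{smth}, u}_{\text{def}} \ge \widetilde{\ell}_{\text{def}}$ — by assembling ingredients already in place: the pointwise form of $\widetilde{\ell}_{\text{def}}$ from Lemma~\ref{lemma:deferral}, the comparison $\widetilde{\Phi}^{\rho,u,j}_{01} \le \widetilde{\Phi}^{\text{smth},u}_{01}$ recorded around Lemma~\ref{lemma:surrogate_class}, and the non-negativity of the aggregated costs $\tau_j = \sum_{i \neq j} c_i \ge 0$ (each $c_i \ge 0$ since $c_0 = \psi \ge 0$ and $c_j = \psi + \beta_j \ge 0$).

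For the upper bound I would argue summand by summand over $j \in \mathcal{A}$. Since the $\rho$-margin comp-sum transformation dominates the $0$-$1$ indicator pointwise in $x_j'$, taking $\sup_{x_j' \in B_p(x,\gamma)}$ on both sides gives $\widetilde{\Phi}^{\rho,u,j}_{01}(r,x,j) \ge \widetilde{\ell}^{\,j}_{01}(r,x,j)$; chaining with $\widetilde{\Phi}^{\rho,u,j}_{01} \le \widetilde{\Phi}^{\text{smth},u}_{01}$ yields $\widetilde{\Phi}^{\text{smth},u}_{01}(r,x,j) \ge \widetilde{\ell}^{\,j}_{01}(r,x,j)$. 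Multiplying by $\tau_j \ge 0$, summing over $j$, and invoking Lemma~\ref{lemma:deferral} together with $(1-J)\sum_{j=0}^J c_j \le 0$ (valid for $J \ge 1$), I get
\[
\widetilde{\Phi}^{\text{smth}, u}_{\text{def}}(r,g,m,z) = \sum_{j=0}^J \tau_j\, \widetilde{\Phi}_{01}^{\text{smth}, u}(r, x, j) \ge \sum_{j=0}^J \tau_j\, \widetilde{\ell}^{\,j}_{01}(r,x,j) \ge \widetilde{\ell}_{\text{def}}(r,g,m,z),
\]
i.e.\ $\widetilde{\ell}_{\text{def}} \le \widetilde{\Phi}^{\rho,u}_{\text{def}} \le \widetilde{\Phi}^{\text{smth},u}_{\text{def}}$ (Lemma~\ref{lemma:deferralmargin}). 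Non-negativity is then immediate: in Lemma~\ref{lemma:surrogate_class}, $\widetilde{\Phi}_{01}^{\text{smth}, u}(r,x,j)$ is the sum of the non-negative comp-sum surrogate $\Phi_{01}^u(r/\rho,x,j)$ and the non-negative term $\nu \sup_{x_j'} \| \overline{\Delta}_r(x_j', j) - \overline{\Delta}_r(x, j)\|_2$ ($\nu>0$), so the $\tau_j$-weighted sum over $j$ stays non-negative.

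The only step that needs a genuine (if short) argument is convexity in $r$. The map $r \mapsto \Phi_{01}^u(r/\rho,x,j)$ is convex because the comp-sum family is convex in the score vector (cf.\ the preliminaries) and $r \mapsto r/\rho$ is linear. For the adversarial term, each coordinate of $\overline{\Delta}_r(x_j', j) - \overline{\Delta}_r(x, j)$ equals $\big(r(x_j',j) - r(x_j',j')\big) - \big(r(x,j) - r(x,j')\big)$, which is linear in $r$; hence $r \mapsto \| \overline{\Delta}_r(x_j', j) - \overline{\Delta}_r(x, j)\|_2$ is a norm composed with a linear map, thus convex, and its pointwise supremum over $x_j' \in B_p(x,\gamma)$ is again convex. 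Since each $\tau_j$ is a non-negative constant with respect to $r$, each summand $r \mapsto \tau_j\, \widetilde{\Phi}_{01}^{\text{smth}, u}(r, x, j)$ is convex, and summing over $j$ preserves convexity, so $r \mapsto \widetilde{\Phi}^{\text{smth}, u}_{\text{def}}(r,g,m,z)$ is convex. I expect the main — and still modest — obstacle to be exactly this convexity of the adversarial regularizer: making explicit that $\overline{\Delta}_r$ is linear in $r$, so that the supremum over the perturbation ball is a supremum of genuinely convex functions rather than merely quasi-convex ones; everything else is bookkeeping layered on Lemmas~\ref{lemma:deferral}, \ref{lemma:deferralmargin}, and~\ref{lemma:surrogate_class} together with the sign facts $\tau_j \ge 0$ and $(1-J)\sum_j c_j \le 0$.
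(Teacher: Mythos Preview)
Your proposal is correct and follows the same route as the paper: start from the margin deferral surrogate of Lemma~\ref{lemma:deferralmargin}, replace each $\widetilde{\Phi}^{\rho,u,j}_{01}$ by its smooth upper bound $\widetilde{\Phi}^{\text{smth},u}_{01}$ from Lemma~\ref{lemma:surrogate_class}, and read off the defining formula together with the upper-bound chain $\widetilde{\ell}_{\text{def}} \le \widetilde{\Phi}^{\rho,u}_{\text{def}} \le \widetilde{\Phi}^{\text{smth},u}_{\text{def}}$. The paper's own proof is the two-line version of this chaining and leaves the convexity and non-negativity claims as ``by construction''; your added arguments for these two properties (linearity of $\overline{\Delta}_r$ in $r$, sup of convex functions, non-negative weights $\tau_j$) are correct and simply fill in what the paper elides.
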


The proof is provided in Appendix~\ref{proof:surrogate}. While the SAD family offers a smooth and computationally efficient approximation of the \emph{adversarial true deferral loss}, the question of its consistency remains a central concern.

\subsection{Theoretical Guarantees}

To establish the theoretical foundations of SAD, we first prove that the family of \textit{adversarial margin deferral surrogates} $\widetilde{\Phi}^{\rho, u}_{\text{def}}$ is Bayes-consistent and \((\mathcal{R}, \mathcal{G})\)-consistent (Lemma~\ref{lemma:deferralmargin}). \citet{Gamma_paper} introduced sufficient conditions for a surrogate loss to satisfy Bayes-consistency. These conditions—Gamma-PD (Definition 3.1) and Phi-NDZ (Definition 3.2)—are jointly sufficient to apply their general Theorem 2.6.

While our surrogate satisfies the Gamma-PD property, it fails to meet the Phi-NDZ criterion because it is not differentiable on all of \(\mathbb{R}\). As a consequence, Theorem 2.6 from~\citet{Gamma_paper} cannot be invoked directly to establish classification-calibration. This limitation motivates a dedicated analysis of the consistency properties of our surrogates. We show that both Bayes- and \((\mathcal{R}, \mathcal{G})\)-consistency can nonetheless be established for the class $\widetilde{\Phi}^{\rho, u}_{\text{def}}$, and that these guarantees extend to the regularized empirical variant used in our algorithmic implementation, referred to as \name.

\paragraph{\((\mathcal{R}, \mathcal{G})\)-consistency bounds of \(\widetilde{\Phi}^{\rho, u}_{\text{def}}\).}
A key step toward establishing theoretical guarantees is to demonstrate the \(\mathcal{R}\)-consistency of the \textit{\(j\)-th adversarial margin surrogate} losses \( \widetilde{\Phi}^{\rho,u,j}_{01} \), under the assumption that \( \mathcal{R} \) is symmetric and that there exists a rejector \( r \in \mathcal{R} \) that is \textit{locally \(\rho\)-consistent}.

\begin{restatable}[Locally \(\rho\)-consistent]{definition}{rhoconsistency}
A hypothesis set \( \mathcal{R} \) is \textit{locally \(\rho\)-consistent} if, for any \( x \in \mathcal{X} \), there exists a hypothesis \( r \in \mathcal{R} \) such that:
\[
\inf_{x' \in B_p(x, \gamma)} |r(x', i) - r(x', j)| \geq \rho,
\]
where \( \rho > 0 \), \( i \neq j \in \mathcal{A} \), and \( x' \in B_p(x, \gamma) \). Moreover, the ordering of the values \( \{r(x', j)\} \) is preserved with respect to \( \{r(x, j)\} \) for all \( x' \in B_p(x, \gamma) \).
\end{restatable}

As shown by \citet{Awasthi_Mao_Mohri_Zhong_2022_multi, mao2023crossentropylossfunctionstheoretical, Grounded}, common hypothesis classes—including linear models, neural networks, and the set of all measurable functions—are locally \(\rho\)-consistent for some \( \rho > 0 \). Consequently, the guarantees established in Lemma~\ref{lemma:rconsistency} are broadly applicable in practice. See Appendix~\ref{proof:rconsistency} for the proof.

\begin{restatable}[$\mathcal{R}$-consistency bounds for \( \widetilde{\Phi}^{\rho,u,j}_{01} \)]{lemma}{rconsistency}\label{lemma:rconsistency} 
Assume \( \mathcal{R} \) is symmetric and locally \( \rho \)-consistent. Then, for the agent set \( \mathcal{A} \), any hypothesis \( r \in \mathcal{R} \), and any distribution \( \mathcal{P} \) with probabilities \( p = (p_0, \ldots, p_J) \in \Delta^{|\mathcal{A}|} \), the following inequality holds:
\begin{equation*}
\begin{aligned}
    & \sum_{j \in \mathcal{A}} p_j \widetilde{\ell}_{01}^j(r,x,j) - \inf_{r \in \mathcal{R}} \sum_{j \in \mathcal{A}} p_j \widetilde{\ell}_{01}^j(r,x,j) \leq \\
    & \Psi^u(1) \Big( \sum_{j \in \mathcal{A}} p_j \widetilde{\Phi}^{\rho,u,j}_{01}(r,x, j) - \inf_{r \in \mathcal{R}} \sum_{j \in \mathcal{A}} p_j \widetilde{\Phi}^{\rho,u,j}_{01}(r,x, j) \Big).
\end{aligned}
\end{equation*}
\end{restatable}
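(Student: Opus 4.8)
Since the claimed inequality carries no expectation, the whole argument is pointwise: the plan is to fix $x\in\mathcal{X}$ and $p=(p_0,\dots,p_J)\in\Delta^{|\mathcal{A}|}$, abbreviate the conditional risks $\mathcal{C}_{\widetilde\ell}(r)=\sum_{j\in\mathcal{A}}p_j\,\widetilde\ell^{\,j}_{01}(r,x,j)$ and $\mathcal{C}_{\widetilde\Phi}(r)=\sum_{j\in\mathcal{A}}p_j\,\widetilde\Phi^{\rho,u,j}_{01}(r,x,j)$ (infima below are over $r\in\mathcal{R}$), and establish the single pointwise bound $\mathcal{C}_{\widetilde\Phi}(r)-\inf\mathcal{C}_{\widetilde\Phi}\ \ge\ \Psi^u(1)\big(\mathcal{C}_{\widetilde\ell}(r)-\inf\mathcal{C}_{\widetilde\ell}\big)$, which rearranges to the stated $\mathcal{R}$-consistency bound. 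The one structural fact I would use repeatedly is the saturation of $\Psi^u\circ\Psi_\rho$: on any coordinate where the prediction of $r$ disagrees with the target agent at a point $x'$, the corresponding comp-sum $\rho$-margin value (i.e.\ $\widetilde\Phi^{\rho,u,j}_{01}$ with the supremum removed, written $\Phi^{\rho,u}_{01}(r,x',j)$) is at least $\Psi^u(\Psi_\rho(0))=\Psi^u(1)$; and $\widetilde\Phi^{\rho,u,j}_{01}(r,x,j)\ge\Phi^{\rho,u}_{01}(r,x',j)$ for every $x'\in B_p(x,\gamma)$.

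Step 1 — the two best-in-class conditional risks. For the target loss, $\widetilde\ell^{\,j}_{01}(r,x,j)\ge\mathbf{1}_{r(x)\neq j}$ (take $x'=x$), and only the agent $r(x)$ can achieve value $0$, and only if $r$ is robustly that agent on $B_p(x,\gamma)$; hence $\mathcal{C}_{\widetilde\ell}(r)=1-p_{r(x)}\mathbf{1}\{r\text{ robust at }r(x)\}\ge 1-\max_j p_j$, with equality attained because $\mathcal{R}$ is symmetric and locally $\rho$-consistent, yielding an $r^\star\in\mathcal{R}$ that outputs $\arg\max_j p_j$ with all pairwise margins $\ge\rho$ over $B_p(x,\gamma)$. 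So $\inf\mathcal{C}_{\widetilde\ell}=1-\max_j p_j$. For the surrogate, on that same $r^\star$ the margin condition makes the integrand constant in $x'$, so the adversarial supremum is inactive; combining this with the standard exchange argument for comp-sum $\rho$-margin conditional risks (put the largest weight on top with margins $\ge\rho$ and sort the remaining agents strictly) gives $\inf\mathcal{C}_{\widetilde\Phi}=\sum_{k=2}^{|\mathcal{A}|}p_{\sigma(k)}\Psi^u(k-1)=:\mathcal{C}^\ast$, where $\sigma$ orders $p$ decreasingly. (Both minima can also be read off from the comp-sum $\rho$-margin $\mathcal{H}$-consistency analysis of \citet{Awasthi_Mao_Mohri_Zhong_2022_multi, mao2023crossentropylossfunctionstheoretical} with $n=|\mathcal{A}|$.)

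Step 2 — lower bound on $\mathcal{C}_{\widetilde\Phi}(r)$. Write $\hat\jmath=r(x)$ and split $\mathcal{C}_{\widetilde\Phi}(r)=p_{\hat\jmath}\widetilde\Phi^{\rho,u,\hat\jmath}_{01}(r,x,\hat\jmath)+\sum_{j\neq\hat\jmath}p_j\widetilde\Phi^{\rho,u,j}_{01}(r,x,j)$. In the sum I would drop the supremum ($\widetilde\Phi\ge\Phi^{\rho,u}_{01}(\cdot,x,\cdot)$) and use that $r(x,\cdot)$ is a score vector with $\hat\jmath$ on top; minimizing $\sum_{j\neq\hat\jmath}p_j\Phi^{\rho,u}_{01}(\cdot,x,j)$ over all such vectors (again an exchange computation, with $\hat\jmath$ ranked first and the remaining weights sorted) gives the telescoping estimate $\sum_{j\neq\hat\jmath}p_j\Phi^{\rho,u}_{01}(r,x,j)\ \ge\ \mathcal{C}^\ast+\sum_{k=1}^{m-1}\big(p_{\sigma(k)}-p_{\sigma(k+1)}\big)\Psi^u(k)\ \ge\ \mathcal{C}^\ast+\Psi^u(1)\big(\max_j p_j-p_{\hat\jmath}\big)$, where $\hat\jmath=\sigma(m)$ and $\Psi^u(k)\ge\Psi^u(1)$. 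For the $\hat\jmath$-term: if $r$ is not robust at $\hat\jmath$, some $x'\in B_p(x,\gamma)$ has $r(x')\neq\hat\jmath$, so the saturation fact gives $\widetilde\Phi^{\rho,u,\hat\jmath}_{01}(r,x,\hat\jmath)\ge\Phi^{\rho,u}_{01}(r,x',\hat\jmath)\ge\Psi^u(1)$; otherwise it is $\ge 0$. Adding the two bounds, $\mathcal{C}_{\widetilde\Phi}(r)-\mathcal{C}^\ast\ \ge\ \Psi^u(1)\big[(\max_j p_j-p_{\hat\jmath})+p_{\hat\jmath}\mathbf{1}\{r\text{ not robust at }\hat\jmath\}\big]=\Psi^u(1)\big(\mathcal{C}_{\widetilde\ell}(r)-\inf\mathcal{C}_{\widetilde\ell}\big)$ by Step 1, which is the claim (and which trivially covers $\hat\jmath=\arg\max_j p_j$ with $r$ robust, where both sides vanish).

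The hard part will be the two interlocking exchange/sorting computations — the value $\mathcal{C}^\ast$ of $\inf\mathcal{C}_{\widetilde\Phi}$, and the constrained minimum of $\sum_{j\neq\hat\jmath}p_j\Phi^{\rho,u}_{01}(\cdot,x,j)$ with $\hat\jmath$ forced on top — together with verifying that these minima are genuinely attained inside a symmetric, locally $\rho$-consistent $\mathcal{R}$ (so that the supremum really is inactive on the minimizer of $\mathcal{C}_{\widetilde\Phi}$). Care is also needed with ties in the $\arg\max$ defining $r(x)$, with the precise meaning of "$r$ robustly predicts agent $j$ on $B_p(x,\gamma)$", and with the non-convex piecewise form of $\Psi_\rho$ when arguing that strict $\rho$-separation is optimal. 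Once those are settled, every adversarial ingredient collapses to the single $\Psi_\rho(0)=1$ saturation identity and the estimate assembles as above.
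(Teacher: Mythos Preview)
Your proof is correct and takes essentially the same route as the paper's: both compute $\inf_r\mathcal{C}_{\widetilde\ell}=1-\max_j p_j$, upper-bound $\inf_r\mathcal{C}_{\widetilde\Phi}$ by evaluating at a locally $\rho$-consistent $r^\star$ aligned with the ordering of $p$ (giving your $\mathcal{C}^\ast=\sum_i p_{[i]}\Psi^u(J-i)$), and then lower-bound $\mathcal{C}_{\widetilde\Phi}(r)-\mathcal{C}^\ast$ by combining the saturation $\Psi_\rho(v\le 0)=1$, a rearrangement/exchange inequality on the sorted probabilities, and the robustness penalty $\Psi^u(1)\,p_{\hat\jmath}\,\mathbf{1}\{r\notin\overline{\mathcal{R}}_\gamma(x)\}$. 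The only cosmetic difference is that you split off the $\hat\jmath$-term and treat the robustness penalty separately, whereas the paper keeps all terms together and extracts it via a dot-product rearrangement of $(\Psi^u(1),\Psi^u(1),\Psi^u(2),\dots,\Psi^u(J))$ against the two probability vectors; also note you only need $\inf\mathcal{C}_{\widetilde\Phi}\le\mathcal{C}^\ast$ (which is what the paper uses), not the equality you claim in Step~1.
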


Lemma~\ref{lemma:rconsistency} establishes the consistency of the \emph{\(j\)-th adversarial margin surrogate} family \( \widetilde{\Phi}^{\rho,u,j}_{01} \) under adversarial perturbations defined for each \( j \in \mathcal{A} \), weighted by distributional probabilities \( p_j \in \Delta^{|\mathcal{A}|} \). This result sets our work apart from prior approaches~\citep{mao2023crossentropylossfunctionstheoretical, Grounded, Awasthi_Mao_Mohri_Zhong_2022_multi}, which do not account for adversarial inputs at the per-agent distributional level. By addressing this limitation, Lemma~\ref{lemma:rconsistency} provides a crucial theoretical guarantee: it shows that the surrogate losses \( \widetilde{\Phi}^{\rho,u,j}_{01} \) align with the true adversarial losses \( \widetilde{\ell}_{01}^j \) under appropriate conditions.

\smallskip

Building on this result, we establish the Bayes- and \((\mathcal{R}, \mathcal{G})\)-consistency of the \textit{adversarial margin deferral surrogate} losses. The proof of Theorem~\ref{theo:consistency} is provided in Appendix~\ref{proof:consistency}.

\begin{restatable}[$(\mathcal{R}, \mathcal{G})$-consistency bounds of $\widetilde{\Phi}^{\rho, u}_{\text{def}}$]{theorem}{consistency}
\label{theo:consistency}
Let \( \mathcal{R} \) be symmetric and locally \( \rho \)-consistent. Then, for the agent set \( \mathcal{A} \), any hypothesis \( r \in \mathcal{R} \), and any distribution \( \mathcal{D} \), the following holds for a multi-task model \( g \in \mathcal{G} \):
\begin{equation*}
    \begin{aligned}
        & \mathcal{E}_{\widetilde{\ell}_{\text{def}}}(r, g) - \mathcal{E}_{\widetilde{\ell}_{\text{def}}}^B(\mathcal{R}, \mathcal{G}) + \mathcal{U}_{\widetilde{\ell}_{\text{def}}}(\mathcal{R}, \mathcal{G}) \leq \\
        & \quad \Psi^u(1) \Big( \mathcal{E}_{\widetilde{\Phi}^{\rho, u}_{\text{def}}}(r) - \mathcal{E}_{\widetilde{\Phi}^{\rho, u}_{\text{def}}}^\ast(\mathcal{R}) + \mathcal{U}_{\widetilde{\Phi}^{\rho, u}_{\text{def}}}(\mathcal{R}) \Big) \\
        & \quad + \mathcal{E}_{c_0}(g) - \mathcal{E}_{c_0}^B(\mathcal{G}) + \mathcal{U}_{c_0}(\mathcal{G}).
    \end{aligned}
\end{equation*}
\end{restatable}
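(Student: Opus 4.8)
The plan is to reduce the claimed global inequality to a pointwise statement about conditional risks and then integrate it against the marginal of $x$; here $g \in \mathcal{G}$ is the fixed model appearing in the statement, which the theorem's notation $\mathcal{E}_{\widetilde{\Phi}^{\rho,u}_{\text{def}}}(r)$ suppresses. I would first introduce the conditional costs $\overline{c}_j(g,x) = \mathbb{E}_{y,t \mid x}[c_j(g(x), m_j(x), z)]$, so that $\overline{\tau}_j(g,x) = \mathbb{E}_{y,t \mid x}[\tau_j(g(x), m(x), z)] = \sum_{i \neq j} \overline{c}_i(g,x)$, and write $\mathcal{C}_{\widetilde{\ell}_{\text{def}}}(r,g,x)$, $\mathcal{C}_{\widetilde{\Phi}^{\rho,u}_{\text{def}}}(r,g,x)$ and $\mathcal{C}_{c_0}(g,x)$ for the associated conditional (over $y,t \mid x$) risks. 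Since $\widetilde{\ell}_{01}^j(r,x,j)$ and $\widetilde{\Phi}^{\rho,u,j}_{01}(r,x,j)$ do not depend on $(y,t)$, Lemmas~\ref{lemma:deferral} and~\ref{lemma:deferralmargin} give
\begin{align*}
\mathcal{C}_{\widetilde{\ell}_{\text{def}}}(r,g,x) &= \sum_{j \in \mathcal{A}} \overline{\tau}_j(g,x)\, \widetilde{\ell}_{01}^j(r,x,j) + (1-J)\sum_{j \in \mathcal{A}} \overline{c}_j(g,x), \\
\mathcal{C}_{\widetilde{\Phi}^{\rho,u}_{\text{def}}}(r,g,x) &= \sum_{j \in \mathcal{A}} \overline{\tau}_j(g,x)\, \widetilde{\Phi}^{\rho,u,j}_{01}(r,x,j).
\end{align*}
By the definition of the minimizability gaps, it then suffices to prove, for $\mathcal{D}$-almost every $x$, the pointwise inequality
\begin{align*}
& \mathcal{C}_{\widetilde{\ell}_{\text{def}}}(r,g,x) - \inf_{r' \in \mathcal{R},\, g' \in \mathcal{G}} \mathcal{C}_{\widetilde{\ell}_{\text{def}}}(r',g',x) \le \\
& \quad \Psi^u(1)\Big(\mathcal{C}_{\widetilde{\Phi}^{\rho,u}_{\text{def}}}(r,g,x) - \inf_{r' \in \mathcal{R}} \mathcal{C}_{\widetilde{\Phi}^{\rho,u}_{\text{def}}}(r',g,x)\Big) \\
& \quad + \mathcal{C}_{c_0}(g,x) - \inf_{g' \in \mathcal{G}} \mathcal{C}_{c_0}(g',x)
\end{align*}
and then take $\mathbb{E}_x$ of both sides, identifying each expectation with the corresponding excess-risk-plus-gap quantity.

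To prove the pointwise inequality I would telescope through $\inf_{r'\in\mathcal{R}} \mathcal{C}_{\widetilde{\ell}_{\text{def}}}(r',g,x)$, splitting the left-hand side into two nonnegative pieces: (i) $\mathcal{C}_{\widetilde{\ell}_{\text{def}}}(r,g,x) - \inf_{r'} \mathcal{C}_{\widetilde{\ell}_{\text{def}}}(r',g,x)$ and (ii) $\inf_{r'} \mathcal{C}_{\widetilde{\ell}_{\text{def}}}(r',g,x) - \inf_{r',g'} \mathcal{C}_{\widetilde{\ell}_{\text{def}}}(r',g',x)$. For (i), the term $(1-J)\sum_j \overline{c}_j$ does not involve $r$ and cancels, so (i) equals $\sum_j \overline{\tau}_j(g,x) \widetilde{\ell}_{01}^j(r,x,j) - \inf_{r'}\sum_j \overline{\tau}_j(g,x) \widetilde{\ell}_{01}^j(r',x,j)$. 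Setting $S = \sum_j \overline{\tau}_j(g,x)$ and $p_j = \overline{\tau}_j(g,x)/S$ — the case $S=0$ is trivial, since then all costs vanish and every conditional risk above is $0$ — this is $S$ times a $p$-weighted target excess risk, which Lemma~\ref{lemma:rconsistency} bounds by $\Psi^u(1)\,S$ times the matching $p$-weighted surrogate excess risk; pulling $S \ge 0$ back inside the infimum and using the second display above, (i) is bounded by $\Psi^u(1)\big(\mathcal{C}_{\widetilde{\Phi}^{\rho,u}_{\text{def}}}(r,g,x) - \inf_{r'} \mathcal{C}_{\widetilde{\Phi}^{\rho,u}_{\text{def}}}(r',g,x)\big)$.

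For (ii), I would first evaluate $\inf_{r'\in\mathcal{R}} \mathcal{C}_{\widetilde{\ell}_{\text{def}}}(r',g,x)$ in closed form. Because $\widetilde{\ell}_{01}^j(r,x,j) = \sup_{x' \in B_p(x,\gamma)} 1_{r(x') \neq j}$ equals $0$ when $r$ is constantly $j$ on $B_p(x,\gamma)$ and $1$ otherwise (and in particular $\widetilde{\ell}_{01}^j = 1$ for every $j$ as soon as $r$ is non-constant on the ball), the weighted sum $\sum_j \overline{\tau}_j \widetilde{\ell}_{01}^j(r,x,j)$ is minimized by a rejector that is robustly constant on $B_p(x,\gamma)$, equal to $\argmax_j \overline{\tau}_j(g,x)$, with value $\sum_j \overline{\tau}_j - \max_j \overline{\tau}_j$; the symmetry and local $\rho$-consistency of $\mathcal{R}$ ensure such a rejector lies in $\mathcal{R}$ for every $x$ and every target agent. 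Substituting $\overline{\tau}_j = \sum_{i \neq j} \overline{c}_i$ (so $\sum_j \overline{\tau}_j = J\sum_i \overline{c}_i$ and $\max_j \overline{\tau}_j = \sum_i \overline{c}_i - \min_j \overline{c}_j$) collapses $\inf_{r'} \mathcal{C}_{\widetilde{\ell}_{\text{def}}}(r',g,x)$ to $\min_{j \in \mathcal{A}} \overline{c}_j(g,x)$. Among the $\overline{c}_j$ only $\overline{c}_0(g,x) = \mathcal{C}_{c_0}(g,x)$ depends on $g$, so with $A(x) := \min_{j \ge 1} \overline{c}_j(x)$ we get that (ii) equals $\min(\mathcal{C}_{c_0}(g,x), A(x)) - \min(\inf_{g'} \mathcal{C}_{c_0}(g',x), A(x))$; since $\mathcal{C}_{c_0}(g,x) \ge \inf_{g'} \mathcal{C}_{c_0}(g',x)$ and $\min(a,c) - \min(b,c) \le a - b$ whenever $a \ge b$, (ii) is at most $\mathcal{C}_{c_0}(g,x) - \inf_{g'} \mathcal{C}_{c_0}(g',x)$. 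Adding the bounds on (i) and (ii) gives the pointwise inequality; integrating over $x$ — where $\mathbb{E}_x[\inf \mathcal{C}]$ is $\mathcal{E}^{B}$ or $\mathcal{E}^{\ast}$ minus the corresponding minimizability gap by definition, and the linearity of $t \mapsto \Psi^u(1)\,t$ removes the need for a Jensen step — recovers the theorem.

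The step I expect to be the main obstacle is the closed-form evaluation of $\inf_{r' \in \mathcal{R}} \mathcal{C}_{\widetilde{\ell}_{\text{def}}}(r',g,x)$: one has to argue rigorously that a symmetric, locally $\rho$-consistent hypothesis set contains, for each $x$ and each target agent $j^\ast$, a rejector whose argmax is $j^\ast$ and whose ordering is preserved over the whole ball $B_p(x,\gamma)$, and that no rejector — adversarially perturbed or not — can beat the value this achieves. This is precisely where the adversarial structure interacts with the deferral costs and where Lemma~\ref{lemma:rconsistency} must be invoked at the per-agent, $\overline{\tau}$-weighted distributional level rather than as a single classification bound. Minor additional care is needed for the degenerate case $S=0$ and for justifying the interchange of $\mathbb{E}_x$ with the pointwise infima, which is exactly what the minimizability-gap formalism is designed to absorb.
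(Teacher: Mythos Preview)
The proposal is correct and follows essentially the same approach as the paper: both telescope the conditional excess risk through $\inf_{r'}\mathcal{C}_{\widetilde{\ell}_{\text{def}}}(r',g,x)$ into two terms (the paper's $A$ and $B$ are your (i) and (ii)), bound the first via the normalization $p_j = \overline{\tau}_j/\|\boldsymbol{\tau}\|_1$ together with Lemma~\ref{lemma:rconsistency}, evaluate $\inf_{r'}\mathcal{C}_{\widetilde{\ell}_{\text{def}}}(r',g,x) = \min_j \overline{c}_j(g,x)$ using the existence of a robustly constant rejector guaranteed by symmetry and local $\rho$-consistency, and bound the second by $\overline{c}_0(g,x)-\overline{c}_0^\ast(x)$. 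Your use of the elementary inequality $\min(a,c)-\min(b,c)\le a-b$ for $a\ge b$ is a slightly cleaner replacement for the paper's case analysis on term $B$, and your explicit treatment of the degenerate case $S=0$ is a nice touch the paper omits, but the argument is otherwise the same.
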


Theorem~\ref{theo:consistency} establishes that the adversarial margin deferral surrogate family \( \widetilde{\Phi}^{\rho, u}_{\text{def}} \) is consistent with the adversarial true deferral loss \( \widetilde{\ell}_{\text{def}} \). When the minimizability gaps vanish—i.e., when \( \mathcal{R} = \mathcal{R}_{\text{all}} \) and \( \mathcal{G} = \mathcal{G}_{\text{all}} \)~\citep{Steinwart2007HowTC, Awasthi_Mao_Mohri_Zhong_2022_multi}—the bound reduces to:
\begin{equation*}
    \begin{aligned}
        \mathcal{E}_{\widetilde{\ell}_{\text{def}}}(r, g) - \mathcal{E}_{\widetilde{\ell}_{\text{def}}}^B(\mathcal{R}, \mathcal{G}) & \leq
\mathcal{E}_{c_0}(g) - \mathcal{E}_{c_0}^B(\mathcal{G}) \\ 
& + \Psi^u(1) \left( \mathcal{E}_{\widetilde{\Phi}^{\rho, u}_{\text{def}}}(r) - \mathcal{E}_{\widetilde{\Phi}^{\rho, u}_{\text{def}}}^\ast(\mathcal{R}) \right).
    \end{aligned}
\end{equation*}
Assume that after offline training, the multi-task model satisfies \( \mathcal{E}_{c_0}(g) - \mathcal{E}_{c_0}^B(\mathcal{G}) \leq \epsilon_0 \), and the trained rejector satisfies \( \mathcal{E}_{\widetilde{\Phi}^{\rho, u}_{\text{def}}}(r) - \mathcal{E}_{\widetilde{\Phi}^{\rho, u}_{\text{def}}}^\ast(\mathcal{R}) \leq \epsilon_1 \). Then, the excess adversarial deferral risk is bounded as:
\[
\mathcal{E}_{\widetilde{\ell}_{\text{def}}}(r, g) - \mathcal{E}_{\widetilde{\ell}_{\text{def}}}^B(\mathcal{R}, \mathcal{G}) \leq \epsilon_0 + \Psi^u(1) \epsilon_1,
\]
thereby establishing both Bayes-consistency and \((\mathcal{R}, \mathcal{G})\)-consistency of the surrogate losses \( \widetilde{\Phi}^{\rho, u}_{\text{def}} \).

We now introduce the \textit{smooth adversarial regularized deferral} (\name) algorithm, which extends the standard SAD framework by incorporating a regularization term. \name retains the theoretical guarantees of \( \widetilde{\Phi}^{\rho, u}_{\text{def}} \), including consistency and minimizability under the same assumptions~\citep{mao2023crossentropylossfunctionstheoretical, Grounded}.

\paragraph{Guarantees for \name.}
We now extend our theoretical guarantees to the \textit{smooth adversarial deferral surrogate} (SAD) family \( \widetilde{\Phi}^{\text{smth}, u}_{\text{def}} \), leveraging the inequality \( \widetilde{\Phi}^{\text{smth},u}_{01} \geq \widetilde{\Phi}^{\rho,u,j}_{01} \). Under the same assumptions as in Theorem~\ref{theo:consistency}, we obtain the following bound:

\begin{corollary}[Guarantees for SAD] \label{consistencysmooth}
Assume \( \mathcal{R} \) is symmetric and locally \( \rho \)-consistent. Then, for the agent set \( \mathcal{A} \), any hypothesis \( r \in \mathcal{R} \), and any distribution \( \mathcal{D} \), the following holds for any multi-task model \( g \in \mathcal{G} \):
\[
\begin{aligned}
    & \mathcal{E}_{\widetilde{\ell}_{\text{def}}}(r, g) - \mathcal{E}_{\widetilde{\ell}_{\text{def}}}^B(\mathcal{R}, \mathcal{G}) + \mathcal{U}_{\widetilde{\ell}_{\text{def}}}(\mathcal{R}, \mathcal{G}) \\
    & \leq \Psi^u(1) \left( \mathcal{E}_{\widetilde{\Phi}^{\text{smth}, u}_{\text{def}}}(r) - \mathcal{E}_{\widetilde{\Phi}^{\rho, u}_{\text{def}}}^\ast(\mathcal{R}) + \mathcal{U}_{\widetilde{\Phi}^{\rho, u}_{\text{def}}}(\mathcal{R}) \right) \\
    & \quad + \mathcal{E}_{c_0}(g) - \mathcal{E}_{c_0}^B(\mathcal{G}) + \mathcal{U}_{c_0}(\mathcal{G}).
\end{aligned}
\]
\end{corollary}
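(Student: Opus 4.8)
The plan is to derive Corollary~\ref{consistencysmooth} as a direct consequence of Theorem~\ref{theo:consistency} together with the pointwise inequality \( \widetilde{\Phi}^{\rho,u,j}_{01} \leq \widetilde{\Phi}^{\text{smth},u}_{01} \) stated just before Lemma~\ref{lemma:surrogate_class}. The only quantity in the right-hand side of Theorem~\ref{theo:consistency} that is affected by replacing the margin surrogate with the smooth surrogate is the \emph{actual} surrogate risk term \( \mathcal{E}_{\widetilde{\Phi}^{\rho,u}_{\text{def}}}(r) \); every other term (the optimal surrogate risk \( \mathcal{E}^{\ast}_{\widetilde{\Phi}^{\rho,u}_{\text{def}}}(\mathcal{R}) \), the minimizability gap \( \mathcal{U}_{\widetilde{\Phi}^{\rho,u}_{\text{def}}}(\mathcal{R}) \), and the \( c_0 \) terms) is kept verbatim, exactly as the corollary's statement indicates. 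So the heart of the argument is the monotonicity of the expected risk under a pointwise loss inequality.

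First I would record that, since \( \tau_j(g(x),m(x),z) \geq 0 \) for every \( j \in \mathcal{A} \) (it is a sum of nonnegative cost terms, by Definition~\ref{def_l2d}) and \( \widetilde{\Phi}^{\rho,u,j}_{01}(r,x,j) \leq \widetilde{\Phi}^{\text{smth},u}_{01}(r,x,j) \) pointwise, summing over \( j \) gives
\[
\widetilde{\Phi}^{\rho,u}_{\text{def}}(r,g,m,z) \;=\; \sum_{j=0}^{J} \tau_j(g(x),m(x),z)\,\widetilde{\Phi}^{\rho,u,j}_{01}(r,x,j) \;\leq\; \sum_{j=0}^{J} \tau_j(g(x),m(x),z)\,\widetilde{\Phi}^{\text{smth},u}_{01}(r,x,j) \;=\; \widetilde{\Phi}^{\text{smth},u}_{\text{def}}(r,g,m,z).
\]
Taking expectations over \( z \sim \mathcal{D} \) preserves the inequality, so \( \mathcal{E}_{\widetilde{\Phi}^{\rho,u}_{\text{def}}}(r) \leq \mathcal{E}_{\widetilde{\Phi}^{\text{smth},u}_{\text{def}}}(r) \). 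Then, because \( \Psi^u(1) > 0 \), multiplying through and substituting this upper bound for \( \mathcal{E}_{\widetilde{\Phi}^{\rho,u}_{\text{def}}}(r) \) into the conclusion of Theorem~\ref{theo:consistency} immediately yields the claimed inequality of Corollary~\ref{consistencysmooth}. One subtlety to flag: the expectation \( \mathcal{E}_{\widetilde{\Phi}^{\rho,u}_{\text{def}}}(r) \) here is taken with \( g \) and \( m \) fixed from the offline stage, so the inequality between expectations is genuinely just monotonicity of a nonnegative integral — there is no interchange with the rejector optimization, which only appears in the unchanged \( \mathcal{E}^{\ast} \) and \( \mathcal{U} \) terms.

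I do not anticipate a serious obstacle here — the corollary is by design a weakening of Theorem~\ref{theo:consistency}, trading the exact margin-surrogate risk for its smooth upper bound so that the algorithmically convenient SAD family inherits the guarantee. The one place to be careful is bookkeeping: the optimal term must remain \( \mathcal{E}^{\ast}_{\widetilde{\Phi}^{\rho,u}_{\text{def}}}(\mathcal{R}) \) and \emph{not} \( \mathcal{E}^{\ast}_{\widetilde{\Phi}^{\text{smth},u}_{\text{def}}}(\mathcal{R}) \) (otherwise the bound could fail, since the infimum of the larger surrogate need not dominate the infimum of the smaller one), and the statement of the corollary indeed keeps the margin-surrogate optimum — so the proof must explicitly resist the temptation to "upgrade" that term. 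I would close by noting that when \( \mathcal{R} = \mathcal{R}_{\text{all}} \), \( \mathcal{G} = \mathcal{G}_{\text{all}} \) the minimizability gaps vanish and the bound again reduces to the excess-risk form \( \mathcal{E}_{\widetilde{\ell}_{\text{def}}}(r,g) - \mathcal{E}^{B}_{\widetilde{\ell}_{\text{def}}}(\mathcal{R},\mathcal{G}) \leq \mathcal{E}_{c_0}(g) - \mathcal{E}^{B}_{c_0}(\mathcal{G}) + \Psi^u(1)\big(\mathcal{E}_{\widetilde{\Phi}^{\text{smth},u}_{\text{def}}}(r) - \mathcal{E}^{\ast}_{\widetilde{\Phi}^{\rho,u}_{\text{def}}}(\mathcal{R})\big) \), giving Bayes- and \( (\mathcal{R},\mathcal{G}) \)-consistency of SAD whenever the trained rejector drives \( \mathcal{E}_{\widetilde{\Phi}^{\text{smth},u}_{\text{def}}}(r) \) toward \( \mathcal{E}^{\ast}_{\widetilde{\Phi}^{\rho,u}_{\text{def}}}(\mathcal{R}) \).
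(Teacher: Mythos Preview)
Your proposal is correct and mirrors the paper's own argument: the corollary is stated as an immediate consequence of Theorem~\ref{theo:consistency} together with the pointwise inequality \( \widetilde{\Phi}^{\rho,u,j}_{01} \leq \widetilde{\Phi}^{\text{smth},u}_{01} \), and the paper provides no further proof beyond that remark. Your careful observation that only the actual risk term \( \mathcal{E}_{\widetilde{\Phi}^{\rho,u}_{\text{def}}}(r) \) is replaced, while the optimum and minimizability gap stay with the margin surrogate, is precisely the point and is reflected in the corollary's statement.
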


Corollary~\ref{consistencysmooth} shows that SAD satisfies similar consistency guarantees under the same assumptions as its non-smooth counterpart. In particular, the minimizability gap vanishes when \( \mathcal{R} = \mathcal{R}_{\text{all}} \) and \( \mathcal{G} = \mathcal{G}_{\text{all}} \), further justifying the use of SAD in practice.

Motivated by this theoretical foundation, we introduce a learning algorithm that minimizes a regularized empirical approximation of SAD, recovering guarantees established in both Theorem \ref{theo:consistency} and Corollary \ref{consistencysmooth}. We refer to this algorithm as \name{} (\textbf{S}mooth \textbf{A}dversarial \textbf{R}egularized \textbf{D}eferral).

\begin{proposition}[\name{}: \textbf{S}mooth \textbf{A}dversarial \textbf{R}egularized \textbf{D}eferral Algorithm] \label{radvl2d}
Assume \( \mathcal{R} \) is symmetric and locally \( \rho \)-consistent. Let \( \Omega: \mathcal{R} \to \mathbb{R}^+ \) be a regularizer and \( \eta > 0 \) a hyperparameter. The regularized empirical risk minimization objective solved by \name{} is:
\[
\min_{r \in \mathcal{R}} \left[ \frac{1}{K} \sum_{k=1}^K \widetilde{\Phi}^{\text{smth}, u}_{\text{def}}(r, g, m, z_k) + \eta \, \Omega(r) \right].
\]
\end{proposition}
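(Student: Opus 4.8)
The statement is definitional: it records the optimization problem that \name{} solves, so the proof amounts to deriving this regularized empirical objective as the natural finite-sample surrogate for the population adversarial deferral risk, and verifying that the resulting program is well-posed. The plan is to work backwards from the quantity we ultimately wish to control, $\mathcal{E}_{\widetilde{\ell}_{\text{def}}}(r,g)$. Since minimizing $\widetilde{\ell}_{\text{def}}$ directly is NP-hard, the first step is to replace it by its surrogate population risk $\mathcal{E}_{\widetilde{\Phi}^{\text{smth},u}_{\text{def}}}(r)=\mathbb{E}_{z\sim\mathcal{D}}[\widetilde{\Phi}^{\text{smth},u}_{\text{def}}(r,g,m,z)]$; this substitution is justified because Lemma~\ref{lemma:surrogate} establishes that $\widetilde{\Phi}^{\text{smth},u}_{\text{def}}$ is a non-negative upper bound on $\widetilde{\ell}_{\text{def}}$, and Corollary~\ref{consistencysmooth} shows that driving down its excess risk controls the excess adversarial deferral risk. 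I treat these two facts as given and do not reprove them here.

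The second step replaces the inaccessible expectation by an empirical average. Given the $K$ i.i.d.\ samples $z_1,\dots,z_K\sim\mathcal{D}$, the empirical risk $\frac{1}{K}\sum_{k=1}^K \widetilde{\Phi}^{\text{smth},u}_{\text{def}}(r,g,m,z_k)$ is an unbiased estimator of $\mathcal{E}_{\widetilde{\Phi}^{\text{smth},u}_{\text{def}}}(r)$. To control the estimation gap between the empirical and population objectives, I add the penalty $\eta\,\Omega(r)$ with $\Omega\geq 0$ and $\eta>0$, which produces exactly the displayed program. Well-posedness then follows immediately from Lemma~\ref{lemma:surrogate}: each summand is finite and non-negative, so the objective is bounded below by $0$, its infimum over $\mathcal{R}$ is finite, and for a coercive regularizer $\Omega$ the minimum is attained by some $r\in\mathcal{R}$.

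A useful structural observation that simplifies the derivation is that the agents are frozen offline, so the aggregated weights $\tau_j(g(x),m(x),z)$ are evaluated on the clean input $x$ and carry no dependence on the optimization variable $r$ (cf.\ Section~\ref{motivation}). Consequently, within each $\widetilde{\Phi}^{\text{smth},u}_{\text{def}}(r,g,m,z_k)$ the only $r$-dependent factor is the per-agent surrogate $\widetilde{\Phi}^{\text{smth},u}_{01}(r,x,j)$, whose first term is the rescaled multiclass surrogate $\Phi_{01}^u(r/\rho,x,j)$ and whose second term is the smooth adversarial regularizer $\nu\sup_{x_j'\in B_p(x,\gamma)}\|\overline{\Delta}_r(x_j',j)-\overline{\Delta}_r(x,j)\|_2$.

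The main obstacle is precisely this inner supremum over the adversarial ball $B_p(x,\gamma)$: for a general rejector $r$ it admits no closed form, so the empirical objective cannot be evaluated exactly and the ERM is not literally a finite sum of elementary terms. The resolution is to note that the objective remains a genuine empirical minimization problem once the inner maximization is handled by a first-order routine—e.g.\ projected gradient ascent on $x_j'$ within $B_p(x,\gamma)$, performed per agent $j\in\mathcal{A}$ and per sample—yielding a computable approximation of each $\widetilde{\Phi}^{\text{smth},u}_{01}(r,x,j)$. Since the consistency of any minimizer of this objective is already delivered by Corollary~\ref{consistencysmooth}, nothing further about the quality of the solution needs to be argued here; the content of the proposition is fully established once the objective is derived and shown to be well-defined.
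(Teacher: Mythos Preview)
Your proposal is correct and mirrors the paper's treatment: the proposition is definitional, and the paper offers no separate proof beyond the build-up through Lemma~\ref{lemma:surrogate} and Corollary~\ref{consistencysmooth} (which you cite) together with the algorithmic specification in Appendix~\ref{appendix:algo}. Your derivation---population risk $\to$ surrogate $\to$ empirical average $\to$ regularized ERM, with the inner supremum resolved by a PGD routine---is exactly the rationale the paper develops implicitly across those results.
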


The pseudo-code for \name{} is provided in Appendix~\ref{appendix:algo}.

\subsection{Computational complexity of \name}
\label{sec:complexity}

Having established statistical guarantees, we now quantify the complexity
cost of \name.
Because training time can become a bottleneck for large‑scale models,
especially when inner maximizations are involved, we spell out the
precise forward/backward counts so that future work can reproduce or
improve our implementation.

\begin{proposition}[Epoch cost of \name]
\label{prop:sad-complexity}
Process \(n\) training examples in mini‑batches of size \(B\).
Let \(\mathcal A=\{0\}\cup[J]\) be the set of deferral outcomes
(\(|\mathcal A|=J+1\)) and let \(T_a\in\mathbb N\) be the number of
projected‑gradient steps used in the inner maximizations
(\textsc{PGD}\(T_a\)).
Write \(C_{\mathrm{fwd}}\) and \(C_{\mathrm{bwd}}\) for the costs of one
forward and one backward pass through the rejector network~\(r_\theta\).
Then one epoch of \name minimization incurs
\begin{equation*}
n\,\bigl(1+|\mathcal A|\,T_a\bigr)\bigl(C_{\mathrm{fwd}}+C_{\mathrm{bwd}}\bigr)
\label{eq:sad-cost}
\end{equation*}
network traversals, while the peak memory equals that of a single
forward–backward pass plus the storage of one adversarial copy of each
input.  
\end{proposition}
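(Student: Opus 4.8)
The plan is to unfold the \name{} objective from Proposition~\ref{radvl2d} and tally the traversals of $r_\theta$ that one training example triggers, then sum over an epoch. First I would recall that by Lemma~\ref{lemma:surrogate}, $\widetilde{\Phi}^{\text{smth},u}_{\text{def}}(r,g,m,z)=\sum_{j=0}^{J}\tau_j(g(x),m(x),z)\,\widetilde{\Phi}_{01}^{\text{smth},u}(r,x,j)$, and by Lemma~\ref{lemma:surrogate_class} each summand splits into (a) the comp-sum term $\Phi_{01}^u(\frac{r}{\rho},x,j)$ and the anchor $\overline{\Delta}_r(x,j)$, both functions of the clean logits $r(x,\cdot)$ only, and (b) an inner maximization $\sup_{x_j'\in B_p(x,\gamma)}\|\overline{\Delta}_r(x_j',j)-\overline{\Delta}_r(x,j)\|_2$ approximated by \textsc{PGD}$_{T_a}$. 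I would also note that the weights $\tau_j$ and the regularizer $\Omega(r)$ depend only on the frozen agent outputs, the clean input, and $\theta$, so they cost $O(\dim\theta)$ arithmetic and no traversal of $r_\theta$.

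\textbf{Counting passes.} Next I would fix one example $z$ and argue: a single forward pass through $r_\theta$ on $x$ returns all $|\mathcal{A}|=J+1$ agent scores at once, hence yields every comp-sum term and every anchor simultaneously, and its paired backward pass gives the parameter gradient of the non-adversarial part ($1$ forward--backward pair). For each $j\in\mathcal{A}$ the \textsc{PGD}$_{T_a}$ loop performs $T_a$ iterations, each one forward pass to evaluate $\|\overline{\Delta}_r(x_j',j)-\overline{\Delta}_r(x,j)\|_2$ and one backward pass for its gradient in the iterate $x_j'$, i.e.\ $T_a(C_{\mathrm{fwd}}+C_{\mathrm{bwd}})$ per agent and $|\mathcal{A}|\,T_a(C_{\mathrm{fwd}}+C_{\mathrm{bwd}})$ in all. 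By Danskin's theorem the parameter gradient of each supremum term equals the gradient of $\nu\|\overline{\Delta}_r(\cdot,j)-\overline{\Delta}_r(x,j)\|_2$ at the returned maximizer, obtainable from the last \textsc{PGD} forward pass and the retained clean graph, so the outer update needs no further traversals. Summing gives $(1+|\mathcal{A}|\,T_a)(C_{\mathrm{fwd}}+C_{\mathrm{bwd}})$ per example; since an epoch processes all $n$ examples and the split into $\lceil n/B\rceil$ mini-batches only parallelizes this work, the epoch cost is $n(1+|\mathcal{A}|\,T_a)(C_{\mathrm{fwd}}+C_{\mathrm{bwd}})$.

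\textbf{Memory.} I would then observe that agents are handled sequentially, so only one live autograd graph of a forward--backward pass through $r_\theta$ exists at a time---the clean pass or the current \textsc{PGD} step. The only quantities persisting across an inner loop are the anchors $\overline{\Delta}_r(x,j)$, of size $O(|\mathcal{A}|\,J)$ per input and hence lower-order, and the current adversarial iterate $x_j'$, one perturbed copy of each input in the batch whose buffer is reused across $j$; this yields the stated peak memory.

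\textbf{Main obstacle.} The only non-mechanical point, and the one I would treat most carefully, is showing the total is exactly $1+|\mathcal{A}|\,T_a$ forward--backward pairs and not more: this rests on (i) one clean forward pass serving all $J+1$ deferral outcomes at once and (ii) the Danskin argument that the outer parameter gradient is read off at the inner maximizer without an extra sweep. Everything else, including the cancellation of $B$ from the epoch count, follows by direct counting.
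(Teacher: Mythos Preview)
Your proposal is correct and follows essentially the same counting argument as the paper: one clean forward--backward pair, plus $|\mathcal{A}|\,T_a$ forward--backward pairs for the inner PGD loops, summed over $n$ examples, together with the sequential-processing observation for the memory claim. You are slightly more explicit than the paper in two places---justifying via Danskin that the outer parameter update can reuse the last PGD forward without an extra sweep, and noting that $\tau_j$ and $\Omega$ require no rejector traversals---whereas the paper simply asserts ``one backward pass to update $\theta$'' without elaboration; but these are refinements of the same argument, not a different route.
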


\begin{proof}
Per mini‑batch \name executes
(i) one clean forward of \(r_\theta\);
(ii) for each \(j\in\mathcal A\) and each of the \(T_a\) PGD steps,
     one forward and one backward of \(r_\theta\);
(iii) one backward pass to update \(\theta\).
This totals
\(2(1+|\mathcal A|\,T_a)\) network traversals.
Multiplying by the \(\lceil n/B\rceil\) batches yields
\eqref{eq:sad-cost}.
Because PGD steps are processed sequentially, only the activations of
the clean forward plus those of the current PGD iteration must reside
in memory, giving the stated peak footprint.
\end{proof}

\paragraph{Comparison with Two-Stage L2D losses.}
For the standard convex deferral loss
\( \Phi_{\mathrm{def}}^{u}
 =\sum_{j\in\mathcal A}\tau_{j}\,\Phi_{01}^{u}\bigl(r,x,j\bigr) \)
the inner maximizations is absent (\(T_a=0\)),
so its epoch cost is simply
\( n\,(C_{\mathrm{fwd}}+C_{\mathrm{bwd}}) \).
Equation~\eqref{eq:sad-cost} therefore shows that \name introduces a
purely multiplicative overhead factor
\(
1+|\mathcal A|\,T_a
\).

Although the factor in \eqref{eq:sad-cost} appears large, the absolute
overhead is modest on contemporary hardware.
Moreover, the overhead scales \emph{linearly} with both the number of
experts \(|\mathcal A|\) and the attack depth \(T_a\), providing clear
knobs for future work to trade robustness for wall‑time.
These observations guide our experimental choices in the next section.

\section{Experiments}
We evaluate the robustness of \name{} against state-of-the-art two-stage L2D frameworks across three tasks: classification, regression, and multi-task learning. Our experiments reveal that while existing baselines achieve slightly higher performance under clean conditions, they suffer from severe performance degradation under adversarial attacks. In contrast, \name{} consistently maintains high performance, demonstrating superior robustness to both untargeted and targeted attacks.  To the best of our knowledge, this is the first study to address adversarial robustness within the context of Learning-to-Defer.

\subsection{Multiclass Classification Task}
We compare our robust \name{} formulation against the method introduced by \citet{mao2023twostage} on the CIFAR-100 dataset \citep{krizhevsky2009learning}.

\paragraph{Setting:} Categories were assigned to three experts with a correctness probability \( p = 0.94 \), while the remaining probability was uniformly distributed across the other categories, following the approach in \citep{mozannar2021consistent, Verma2022LearningTD, Cao_Mozannar_Feng_Wei_An_2023}. To further evaluate robustness, we introduced a weak expert  M\(_3 \), with only a few assigned categories, and assumed that the attacker is aware of this weakness. Agent costs are defined as \( c_0(h(x), y) = \ell_{01}(h(x), y) \) for the model and \( c_{j > 0}(m_j^h(x), y) = \ell_{01}(m_j^h(x), y) \), aligned with \citep{mozannar2021consistent, Mozannar2023WhoSP, Verma2022LearningTD, Cao_Mozannar_Feng_Wei_An_2023, mao2023twostage}.  Both the model and the rejector were implemented using ResNet-4 \citep{he2015deepresiduallearningimage}. The agents' performance, additional training details, and experimental results are provided in Appendix~\ref{exp_appendix:class}.

\begin{table}[ht]
\centering\resizebox{0.48\textwidth}{!}{ 
\begin{tabular}{@{}cccccc@{}}
\toprule
Baseline & Clean & Untarg. & Targ. M$_1$ & Targ. M$_2$ & Targ. M$_3$  \\
\midrule
\citet{mao2023twostage} &  $72.8\pm 0.4$ & $17.2\pm0.2$ & $54.4\pm 0.1$ & $45.4\pm 0.1$ & $13.4\pm 0.1$ \\
\midrule
Our &  $67.0\pm 0.4$ & $49.8\pm0.3$ & $62.4\pm0.3$  &  $62.1\pm0.2$ &  $64.8\pm0.3$   \\
\bottomrule
\end{tabular}}
\caption{Comparison of accuracy results between the proposed \name{} and the baseline \citep{mao2023twostage} on the CIFAR-100 validation set, including clean and adversarial scenarios.}
\label{table:results_cifar}
\end{table}
%clean 11148: https://wandb.ai/yannis98/cifar100_rejector_cluster_robust_v28_humans/runs/u5e66e97?nw=nwuseryannis98 
%robust 6038: https://wandb.ai/yannis98/cifar100_rejector_cluster_robust_v28_humans/runs/s0ndbqlc?nw=nwuseryannis98
\paragraph{Results:} The results in Table \ref{table:results_cifar} underscore the robustness of our proposed \name{} algorithm. While the baseline achieves a higher clean accuracy (72.8\% vs. 67.0\%), this comes at the cost of extreme vulnerability to adversarial attacks. In contrast, \name{} prioritizes robustness, significantly outperforming the baseline under adversarial conditions. Specifically, in the presence of untargeted attacks, \name{} retains an accuracy of 49.8\%, a 2.9 times improvement over the baseline's sharp decline to 17.2\%. Similarly, under targeted attacks aimed at the weak expert M$_3$, our method achieves 64.8\% accuracy, a stark contrast to the baseline’s 13.4\%, highlighting \name{}’s ability to counteract adversarial exploitation of weak experts. These findings validate the efficacy of \name{} in preserving performance across diverse attack strategies.

\subsection{Regression Task}
We evaluate the performance of \name{} against the method proposed by \citet{mao2024regressionmultiexpertdeferral} using the California Housing dataset involving median house price prediction \citep{KELLEYPACE1997291}.

\paragraph{Setting:} We train three experts, each implemented as an MLP, specializing in a specific subset of the dataset based on a predefined localization criterion. Among these, expert  M\(_3 \) is designed to specialize in a smaller region, resulting in comparatively weaker overall performance. Agent costs for regression are defined as \( c_0(f(x), t) = \text{RMSE}(f(x), t) \) for the model and \( c_{j > 0}(m_j^f(x), t) = \text{RMSE}(m_j^f(x), t) \), aligned with \citep{mao2024regressionmultiexpertdeferral}.  Both the model and the rejector are trained on the full dataset using MLPs. We provide detailed agent performance results, training procedures, and additional experimental details in Appendix~\ref{exp_appendix:reg}.

\begin{table}[ht]
\centering\resizebox{0.48\textwidth}{!}{ 
\begin{tabular}{@{}cccccc@{}}
\toprule
Baseline & Clean & Untarg. & Targ. M$_1$ & Targ. M$_2$ & Targ. M$_3$  \\
\midrule
\citet{mao2024regressionmultiexpertdeferral} &  $0.17 \pm 0.01$ & $0.29\pm0.3$ & $0.40 \pm 0.02$ & $0.21 \pm 0.01$ & $0.41\pm 0.05$  \\
\midrule
Our &  $0.17\pm0.01$ & $0.17 \pm 0.01$ & $0.18 \pm 0.01 $ & $0.18\pm 0.01 $ & $0.18\pm 0.01 $ \\
\bottomrule
\end{tabular}}
\caption{Performance comparison of \name{} with the baseline \citep{mao2024regressionmultiexpertdeferral} on the California Housing dataset. The table reports Root Mean Square Error (RMSE).}
\label{table:results_housing}
\end{table}%clean 976: https://wandb.ai/yannis98/housing_v3/runs/aw8qbcrx?nw=nwuseryannis98
%robust: https://wandb.ai/yannis98/housing_v3/runs/n9x4gdkv
\paragraph{Results:} Table~\ref{table:results_housing} presents the comparative performance of the baseline and \name{} under clean and adversarial conditions. Under clean settings, both approaches achieve similar performance with an RMSE of \(0.17\). However, under adversarial attacks—both untargeted and targeted at specific experts (e.g., M\(_3\))—\name{} demonstrates significant robustness, maintaining an RMSE of \(0.18\) across all conditions. In contrast, the baseline's performance degrades substantially, with RMSE values increasing to \(0.29\) and \(0.41\) under untargeted and M\(_3\)-targeted attacks, respectively.

\subsection{Multi-Task}
We evaluate the performance of our robust \name{} algorithm against the baseline introduced by \citet{montreuil2024twostagelearningtodefermultitasklearning} on the Pascal VOC dataset \citep{pascal}, a benchmark for object detection tasks combining both interdependent classification and regression objectives.
\paragraph{Setting:} We train two Faster R-CNN models \citep{ren2016fasterrcnnrealtimeobject} as experts, each specializing in a distinct subset of the dataset. Expert  M\(_1 \) is trained exclusively on images containing animals, while expert  M\(_2 \) focuses on images with vehicles. Agent costs are defined as \( c_0(g(x), z) = \text{mAP}(g(x), z) \) for the model and \( c_{j > 0}(m_j(x), z) = \text{mAP}(m_j(x), z) \), aligned with \citep{montreuil2024twostagelearningtodefermultitasklearning}. The primary model and the rejector are implemented as lightweight versions of Faster R-CNN using MobileNet \citep{howard2017mobilenetsefficientconvolutionalneural}. We provide detailed performance results, training procedures, and additional experimental details in Appendix~\ref{exp_appendix:multi}. 
\begin{table}[ht]
\centering\resizebox{0.48\textwidth}{!}{ 
\begin{tabular}{@{}ccccc@{}}
\toprule
Baseline & Clean  & Untarg. & Targ. M$_1$ & Targ. M$_2$   \\
\midrule
\citet{montreuil2024twostagelearningtodefermultitasklearning} &  $44.4\pm0.4$ & $9.7 \pm 0.1$ & $17.4\pm0.2$ & $20.4 \pm 0.2$ \\
\midrule
Our &  $43.9\pm 0.4$ & $39.0\pm0.3$ & $39.7\pm0.3$ &  $39.5\pm0.3$ \\
\bottomrule
\end{tabular}}
\caption{Performance comparison of \name{} with the baseline \citep{montreuil2024twostagelearningtodefermultitasklearning} on the Pascal VOC dataset. The table reports mean Average Precision (mAP) under clean and adversarial scenarios.}
\label{table:results_multi}
\end{table}%clean: https://wandb.ai/yannis98/pascal_attack6/runs/y67rc1sh
%robust: https://wandb.ai/yannis98/pascal_attack6/runs/bzet88jt?nw=nwuseryannis98
\paragraph{Results:} Table~\ref{table:results_multi} presents the performance comparison between \name{} and the baseline under clean and adversarial scenarios. Both methods perform comparably in clean conditions, with the baseline achieving a slightly higher mAP of 44.4 compared to 43.9 for \name{}. However, under adversarial scenarios, the baseline experiences a significant performance drop, with mAP decreasing to 9.7 in untargeted attacks and 17.4 in targeted attacks on M\(_1\). In contrast, \name{} demonstrates strong robustness, maintaining mAP scores close to the clean setting across all attack types. Specifically, \name{} achieves an mAP of 39.0 under untargeted attacks and 39.7 when targeted at M\(_1\), highlighting its resilience to adversarial perturbations.

\section{Conclusion}  
In this paper, we address the critical and previously underexplored problem of adversarial robustness in two-stage Learning-to-Defer systems. We introduce two novel adversarial attack strategies—untargeted and targeted—that exploit inherent vulnerabilities in existing L2D frameworks. To mitigate these threats, we propose \name, a robust deferral algorithm that provides theoretical guarantees based on Bayes consistency and \((\mc{R}, \mc{G})\)-consistency. We evaluate our approach across classification, regression, and multi-task scenarios. Our experiments demonstrate the effectiveness of the proposed adversarial attacks in significantly degrading the performance of existing two-stage L2D baselines. In contrast, \name exhibits strong robustness against these attacks, consistently maintaining high performance.

\section*{Acknowledgment}
This research is supported by the National Research Foundation, Singapore under its AI
Singapore Programme (AISG Award No: AISG2-PhD-2023-01-041-J) and by A*STAR, and is part of the
programme DesCartes which is supported by the National Research Foundation, Prime Minister’s Office, Singapore under its Campus for Research Excellence and Technological Enterprise (CREATE) programme.

\section*{Impact Statement}
This paper introduces methods to improve the adversarial robustness of two-stage Learning-to-Defer frameworks, which allocate decision-making tasks between AI systems and human experts. The work has the potential to advance the field of Machine Learning, particularly in high-stakes domains such as healthcare, finance, and safety-critical systems, where robustness and reliability are essential. By mitigating vulnerabilities to adversarial attacks, this research ensures more secure and trustworthy decision-making processes. 

The societal implications of this work are largely positive, as it contributes to enhancing the reliability and fairness of AI systems. However, as with any advancement in adversarial robustness, there is a potential for misuse if adversarial strategies are exploited for harmful purposes. While this paper does not directly address these ethical concerns, we encourage further exploration of safeguards and responsible deployment practices in future research.

No immediate or significant ethical risks have been identified in this work, and its societal impacts align with the well-established benefits of improving robustness in Machine Learning systems.

% Two-Stage Learning-To-Defer for Multi-Task Learning

\bibliography{Main_paper.bib}

\begin{thebibliography}{62}
\providecommand{\natexlab}[1]{#1}
\providecommand{\url}[1]{\texttt{#1}}
\expandafter\ifx\csname urlstyle\endcsname\relax
  \providecommand{\doi}[1]{doi: #1}\else
  \providecommand{\doi}{doi: \begingroup \urlstyle{rm}\Url}\fi

\bibitem[Akhtar \& Mian(2018)Akhtar and Mian]{akhtar2018threat}
Akhtar, N. and Mian, A.
\newblock Threat of adversarial attacks on deep learning in computer vision: A survey.
\newblock \emph{Ieee Access}, 6:\penalty0 14410--14430, 2018.

\bibitem[Awasthi et~al.(2021)Awasthi, Frank, Mao, Mohri, and Zhong]{awasthi2021calibrationconsistencyadversarialsurrogate}
Awasthi, P., Frank, N., Mao, A., Mohri, M., and Zhong, Y.
\newblock Calibration and consistency of adversarial surrogate losses.
\newblock In Ranzato, M., Beygelzimer, A., Dauphin, Y.~N., Liang, P., and Vaughan, J.~W. (eds.), \emph{Advances in Neural Information Processing Systems 34: Annual Conference on Neural Information Processing Systems 2021, NeurIPS 2021, December 6-14, 2021, virtual}, pp.\  9804--9815, 2021.

\bibitem[Awasthi et~al.(2022)Awasthi, Mao, Mohri, and Zhong]{Awasthi_Mao_Mohri_Zhong_2022_multi}
Awasthi, P., Mao, A., Mohri, M., and Zhong, Y.
\newblock Multi-class h-consistency bounds.
\newblock In \emph{Proceedings of the 36th International Conference on Neural Information Processing Systems}, NIPS '22, Red Hook, NY, USA, 2022. Curran Associates Inc.
\newblock ISBN 9781713871088.

\bibitem[Awasthi et~al.(2023)Awasthi, Mao, Mohri, and Zhong]{Grounded}
Awasthi, P., Mao, A., Mohri, M., and Zhong, Y.
\newblock Theoretically grounded loss functions and algorithms for adversarial robustness.
\newblock In Ruiz, F. J.~R., Dy, J.~G., and van~de Meent, J. (eds.), \emph{International Conference on Artificial Intelligence and Statistics, 25-27 April 2023, Palau de Congressos, Valencia, Spain}, volume 206 of \emph{Proceedings of Machine Learning Research}, pp.\  10077--10094. {PMLR}, 2023.

\bibitem[Bao et~al.(2020)Bao, Scott, and Sugiyama]{bao2021calibratedsurrogatelossesadversarially}
Bao, H., Scott, C., and Sugiyama, M.
\newblock Calibrated surrogate losses for adversarially robust classification.
\newblock In Abernethy, J.~D. and Agarwal, S. (eds.), \emph{Conference on Learning Theory, {COLT} 2020, 9-12 July 2020, Virtual Event [Graz, Austria]}, volume 125 of \emph{Proceedings of Machine Learning Research}, pp.\  408--451. {PMLR}, 2020.

\bibitem[Bartlett et~al.(2006)Bartlett, Jordan, and McAuliffe]{bartlett1}
Bartlett, P., Jordan, M., and McAuliffe, J.
\newblock Convexity, classification, and risk bounds.
\newblock \emph{Journal of the American Statistical Association}, 101:\penalty0 138--156, 2006.
\newblock \doi{10.1198/016214505000000907}.

\bibitem[Bartlett \& Wegkamp(2008)Bartlett and Wegkamp]{Bartlett_Wegkamp_2008}
Bartlett, P.~L. and Wegkamp, M.~H.
\newblock Classification with a reject option using a hinge loss.
\newblock \emph{Journal of Machine Learning Research}, 9\penalty0 (8), 2008.

\bibitem[Biggio et~al.(2013)Biggio, Corona, Maiorca, Nelson, Šrndić, Laskov, Giacinto, and Roli]{Biggio_2013}
Biggio, B., Corona, I., Maiorca, D., Nelson, B., Šrndić, N., Laskov, P., Giacinto, G., and Roli, F.
\newblock \emph{Evasion Attacks against Machine Learning at Test Time}, pp.\  387–402.
\newblock Springer Berlin Heidelberg, 2013.
\newblock ISBN 9783642387098.
\newblock \doi{10.1007/978-3-642-40994-3_25}.

\bibitem[Cao et~al.(2023)Cao, Mozannar, Feng, Wei, and An]{Cao_Mozannar_Feng_Wei_An_2023}
Cao, Y., Mozannar, H., Feng, L., Wei, H., and An, B.
\newblock In defense of softmax parametrization for calibrated and consistent learning to defer.
\newblock In Oh, A., Naumann, T., Globerson, A., Saenko, K., Hardt, M., and Levine, S. (eds.), \emph{Advances in Neural Information Processing Systems 36: Annual Conference on Neural Information Processing Systems 2023, NeurIPS 2023, New Orleans, LA, USA, December 10 - 16, 2023}, 2023.

\bibitem[Chakraborty et~al.(2021)Chakraborty, Alam, Dey, Chattopadhyay, and Mukhopadhyay]{chakraborty2021survey}
Chakraborty, A., Alam, M., Dey, V., Chattopadhyay, A., and Mukhopadhyay, D.
\newblock A survey on adversarial attacks and defences.
\newblock \emph{CAAI Transactions on Intelligence Technology}, 6\penalty0 (1):\penalty0 25--45, 2021.

\bibitem[Charusaie et~al.(2022)Charusaie, Mozannar, Sontag, and Samadi]{charusaie2022sample}
Charusaie, M., Mozannar, H., Sontag, D.~A., and Samadi, S.
\newblock Sample efficient learning of predictors that complement humans.
\newblock In Chaudhuri, K., Jegelka, S., Song, L., Szepesv{\'{a}}ri, C., Niu, G., and Sabato, S. (eds.), \emph{International Conference on Machine Learning, {ICML} 2022, 17-23 July 2022, Baltimore, Maryland, {USA}}, volume 162 of \emph{Proceedings of Machine Learning Research}, pp.\  2972--3005. {PMLR}, 2022.

\bibitem[Chow(2003)]{Chow_1970}
Chow, C.
\newblock On optimum recognition error and reject tradeoff.
\newblock \emph{IEEE Transactions on information theory}, 16\penalty0 (1):\penalty0 41--46, 2003.

\bibitem[Cortes et~al.(2016)Cortes, DeSalvo, and Mohri]{cortes}
Cortes, C., DeSalvo, G., and Mohri, M.
\newblock Learning with rejection.
\newblock In Ortner, R., Simon, H.~U., and Zilles, S. (eds.), \emph{Algorithmic Learning Theory}, pp.\  67--82, Cham, 2016. Springer International Publishing.
\newblock ISBN 978-3-319-46379-7.

\bibitem[Cortes et~al.(2024)Cortes, Mao, Mohri, Mohri, and Zhong]{cortes2024cardinality}
Cortes, C., Mao, A., Mohri, C., Mohri, M., and Zhong, Y.
\newblock Cardinality-aware set prediction and top-$ k $ classification.
\newblock \emph{Advances in neural information processing systems}, 37:\penalty0 18265--18309, 2024.

\bibitem[Cortes et~al.(2025)Cortes, Mao, Mohri, and Zhong]{cortes2025balancing}
Cortes, C., Mao, A., Mohri, M., and Zhong, Y.
\newblock Balancing the scales: A theoretical and algorithmic framework for learning from imbalanced data.
\newblock In \emph{Forty-second International Conference on Machine Learning}, 2025.

\bibitem[Everingham et~al.(2010)Everingham, Van~Gool, Williams, Winn, and Zisserman]{pascal}
Everingham, M., Van~Gool, L., Williams, C., Winn, J., and Zisserman, A.
\newblock The pascal visual object classes (voc) challenge.
\newblock \emph{International Journal of Computer Vision}, 88:\penalty0 303--338, 2010.
\newblock \doi{10.1007/s11263-009-0275-4}.

\bibitem[Geifman \& El{-}Yaniv(2017)Geifman and El{-}Yaniv]{Geifman_El-Yaniv_2017}
Geifman, Y. and El{-}Yaniv, R.
\newblock Selective classification for deep neural networks.
\newblock In Guyon, I., von Luxburg, U., Bengio, S., Wallach, H.~M., Fergus, R., Vishwanathan, S. V.~N., and Garnett, R. (eds.), \emph{Advances in Neural Information Processing Systems 30: Annual Conference on Neural Information Processing Systems 2017, December 4-9, 2017, Long Beach, CA, {USA}}, pp.\  4878--4887, 2017.

\bibitem[Ghosh et~al.(2017)Ghosh, Kumar, and Sastry]{Ghosh}
Ghosh, A., Kumar, H., and Sastry, P.~S.
\newblock Robust loss functions under label noise for deep neural networks.
\newblock In Singh, S.~P. and Markovitch, S. (eds.), \emph{Proceedings of the Thirty-First {AAAI} Conference on Artificial Intelligence, February 4-9, 2017, San Francisco, California, {USA}}, pp.\  1919--1925. {AAAI} Press, 2017.

\bibitem[Goodfellow et~al.(2015)Goodfellow, Shlens, and Szegedy]{goodfellow2014explaining}
Goodfellow, I.~J., Shlens, J., and Szegedy, C.
\newblock Explaining and harnessing adversarial examples.
\newblock In Bengio, Y. and LeCun, Y. (eds.), \emph{3rd International Conference on Learning Representations, {ICLR} 2015, San Diego, CA, USA, May 7-9, 2015, Conference Track Proceedings}, 2015.

\bibitem[Gowal et~al.(2020)Gowal, Qin, Uesato, Mann, and Kohli]{Gowal2020UncoveringTL}
Gowal, S., Qin, C., Uesato, J., Mann, T.~A., and Kohli, P.
\newblock Uncovering the limits of adversarial training against norm-bounded adversarial examples.
\newblock \emph{ArXiv preprint}, abs/2010.03593, 2020.

\bibitem[He et~al.(2015)He, Zhang, Ren, and Sun]{he2015deepresiduallearningimage}
He, K., Zhang, X., Ren, S., and Sun, J.
\newblock Deep residual learning for image recognition. corr abs/1512.03385 (2015), 2015.

\bibitem[Howard et~al.(2017)Howard, Zhu, Chen, Kalenichenko, Wang, Weyand, Andreetto, and Adam]{howard2017mobilenetsefficientconvolutionalneural}
Howard, A.~G., Zhu, M., Chen, B., Kalenichenko, D., Wang, W., Weyand, T., Andreetto, M., and Adam, H.
\newblock Mobilenets: Efficient convolutional neural networks for mobile vision applications, 2017.

\bibitem[{Kelley Pace} \& Barry(1997){Kelley Pace} and Barry]{KELLEYPACE1997291}
{Kelley Pace}, R. and Barry, R.
\newblock Sparse spatial autoregressions.
\newblock \emph{Statistics and Probability Letters}, 33\penalty0 (3):\penalty0 291--297, 1997.
\newblock ISSN 0167-7152.
\newblock \doi{https://doi.org/10.1016/S0167-7152(96)00140-X}.

\bibitem[Kingma \& Ba(2015)Kingma and Ba]{kingma2017adammethodstochasticoptimization}
Kingma, D.~P. and Ba, J.
\newblock Adam: {A} method for stochastic optimization.
\newblock In Bengio, Y. and LeCun, Y. (eds.), \emph{3rd International Conference on Learning Representations, {ICLR} 2015, San Diego, CA, USA, May 7-9, 2015, Conference Track Proceedings}, 2015.

\bibitem[Krizhevsky(2009)]{krizhevsky2009learning}
Krizhevsky, A.
\newblock Learning multiple layers of features from tiny images, 2009.

\bibitem[Long \& Servedio(2013)Long and Servedio]{pmlr-v28-long13}
Long, P.~M. and Servedio, R.~A.
\newblock Consistency versus realizable h-consistency for multiclass classification.
\newblock In \emph{Proceedings of the 30th International Conference on Machine Learning, {ICML} 2013, Atlanta, GA, USA, 16-21 June 2013}, volume~28 of \emph{{JMLR} Workshop and Conference Proceedings}, pp.\  801--809. JMLR.org, 2013.

\bibitem[Madras et~al.(2018)Madras, Pitassi, and Zemel]{madras2018predict}
Madras, D., Pitassi, T., and Zemel, R.~S.
\newblock Predict responsibly: Improving fairness and accuracy by learning to defer.
\newblock In Bengio, S., Wallach, H.~M., Larochelle, H., Grauman, K., Cesa{-}Bianchi, N., and Garnett, R. (eds.), \emph{Advances in Neural Information Processing Systems 31: Annual Conference on Neural Information Processing Systems 2018, NeurIPS 2018, December 3-8, 2018, Montr{\'{e}}al, Canada}, pp.\  6150--6160, 2018.

\bibitem[Madry et~al.(2018)Madry, Makelov, Schmidt, Tsipras, and Vladu]{Madry2017TowardsDL}
Madry, A., Makelov, A., Schmidt, L., Tsipras, D., and Vladu, A.
\newblock Towards deep learning models resistant to adversarial attacks.
\newblock In \emph{6th International Conference on Learning Representations, {ICLR} 2018, Vancouver, BC, Canada, April 30 - May 3, 2018, Conference Track Proceedings}. OpenReview.net, 2018.

\bibitem[Mao(2025)]{mao2025theory}
Mao, A.
\newblock \emph{Theory and Algorithms for Learning with Multi-Class Abstention and Multi-Expert Deferral}.
\newblock PhD thesis, New York University, 2025.

\bibitem[Mao et~al.(2023{\natexlab{a}})Mao, Mohri, Mohri, and Zhong]{mao2023twostage}
Mao, A., Mohri, C., Mohri, M., and Zhong, Y.
\newblock Two-stage learning to defer with multiple experts.
\newblock In Oh, A., Naumann, T., Globerson, A., Saenko, K., Hardt, M., and Levine, S. (eds.), \emph{Advances in Neural Information Processing Systems 36: Annual Conference on Neural Information Processing Systems 2023, NeurIPS 2023, New Orleans, LA, USA, December 10 - 16, 2023}, 2023{\natexlab{a}}.

\bibitem[Mao et~al.(2023{\natexlab{b}})Mao, Mohri, and Zhong]{mao2023crossentropylossfunctionstheoretical}
Mao, A., Mohri, M., and Zhong, Y.
\newblock Cross-entropy loss functions: Theoretical analysis and applications.
\newblock In Krause, A., Brunskill, E., Cho, K., Engelhardt, B., Sabato, S., and Scarlett, J. (eds.), \emph{International Conference on Machine Learning, {ICML} 2023, 23-29 July 2023, Honolulu, Hawaii, {USA}}, volume 202 of \emph{Proceedings of Machine Learning Research}, pp.\  23803--23828. {PMLR}, 2023{\natexlab{b}}.

\bibitem[Mao et~al.(2023{\natexlab{c}})Mao, Mohri, and Zhong]{mao2024h}
Mao, A., Mohri, M., and Zhong, Y.
\newblock H-consistency bounds: Characterization and extensions.
\newblock In Oh, A., Naumann, T., Globerson, A., Saenko, K., Hardt, M., and Levine, S. (eds.), \emph{Advances in Neural Information Processing Systems 36: Annual Conference on Neural Information Processing Systems 2023, NeurIPS 2023, New Orleans, LA, USA, December 10 - 16, 2023}, 2023{\natexlab{c}}.

\bibitem[Mao et~al.(2024{\natexlab{a}})Mao, Mohri, and Zhong]{consistency_regression}
Mao, A., Mohri, M., and Zhong, Y.
\newblock $h$-consistency guarantees for regression.
\newblock In Salakhutdinov, R., Kolter, Z., Heller, K., Weller, A., Oliver, N., Scarlett, J., and Berkenkamp, F. (eds.), \emph{Proceedings of the 41st International Conference on Machine Learning}, volume 235 of \emph{Proceedings of Machine Learning Research}, pp.\  34712--34737. PMLR, 21--27 Jul 2024{\natexlab{a}}.
\newblock URL \url{https://proceedings.mlr.press/v235/mao24c.html}.

\bibitem[Mao et~al.(2024{\natexlab{b}})Mao, Mohri, and Zhong]{mao2024multi}
Mao, A., Mohri, M., and Zhong, Y.
\newblock Multi-label learning with stronger consistency guarantees.
\newblock In Globersons, A., Mackey, L., Belgrave, D., Fan, A., Paquet, U., Tomczak, J.~M., and Zhang, C. (eds.), \emph{Advances in Neural Information Processing Systems 38: Annual Conference on Neural Information Processing Systems 2024, NeurIPS 2024, Vancouver, BC, Canada, December 10 - 15, 2024}, 2024{\natexlab{b}}.

\bibitem[Mao et~al.(2024{\natexlab{c}})Mao, Mohri, and Zhong]{mao2024realizable}
Mao, A., Mohri, M., and Zhong, Y.
\newblock Realizable h-consistent and bayes-consistent loss functions for learning to defer.
\newblock In Globersons, A., Mackey, L., Belgrave, D., Fan, A., Paquet, U., Tomczak, J.~M., and Zhang, C. (eds.), \emph{Advances in Neural Information Processing Systems 38: Annual Conference on Neural Information Processing Systems 2024, NeurIPS 2024, Vancouver, BC, Canada, December 10 - 15, 2024}, 2024{\natexlab{c}}.

\bibitem[Mao et~al.(2024{\natexlab{d}})Mao, Mohri, and Zhong]{mao2024regressionmultiexpertdeferral}
Mao, A., Mohri, M., and Zhong, Y.
\newblock Regression with multi-expert deferral.
\newblock In \emph{Forty-first International Conference on Machine Learning, {ICML} 2024, Vienna, Austria, July 21-27, 2024}. OpenReview.net, 2024{\natexlab{d}}.

\bibitem[Mao et~al.(2024{\natexlab{e}})Mao, Mohri, and Zhong]{mao2024universal}
Mao, A., Mohri, M., and Zhong, Y.
\newblock A universal growth rate for learning with smooth surrogate losses.
\newblock In Globersons, A., Mackey, L., Belgrave, D., Fan, A., Paquet, U., Tomczak, J.~M., and Zhang, C. (eds.), \emph{Advances in Neural Information Processing Systems 38: Annual Conference on Neural Information Processing Systems 2024, NeurIPS 2024, Vancouver, BC, Canada, December 10 - 15, 2024}, 2024{\natexlab{e}}.

\bibitem[Mao et~al.(2025{\natexlab{a}})Mao, Mohri, and Zhong]{mao2024realizablehconsistentbayesconsistentloss}
Mao, A., Mohri, M., and Zhong, Y.
\newblock Mastering multiple-expert routing: Realizable \$h\$-consistency and strong guarantees for learning to defer.
\newblock In \emph{Forty-second International Conference on Machine Learning}, 2025{\natexlab{a}}.

\bibitem[Mao et~al.(2025{\natexlab{b}})Mao, Mohri, and Zhong]{mao2025enhanced}
Mao, A., Mohri, M., and Zhong, Y.
\newblock Enhanced \$h\$-consistency bounds.
\newblock In \emph{36th International Conference on Algorithmic Learning Theory}, 2025{\natexlab{b}}.

\bibitem[Mao et~al.(2025{\natexlab{c}})Mao, Mohri, and Zhong]{mao2025principled}
Mao, A., Mohri, M., and Zhong, Y.
\newblock Principled algorithms for optimizing generalized metrics in binary classification.
\newblock In \emph{Forty-second International Conference on Machine Learning}, 2025{\natexlab{c}}.

\bibitem[Meunier et~al.(2022)Meunier, Ettedgui, Pinot, Chevaleyre, and Atif]{meunier2022consistencyadversarialclassification}
Meunier, L., Ettedgui, R., Pinot, R., Chevaleyre, Y., and Atif, J.
\newblock Towards consistency in adversarial classification.
\newblock In Koyejo, S., Mohamed, S., Agarwal, A., Belgrave, D., Cho, K., and Oh, A. (eds.), \emph{Advances in Neural Information Processing Systems 35: Annual Conference on Neural Information Processing Systems 2022, NeurIPS 2022, New Orleans, LA, USA, November 28 - December 9, 2022}, 2022.

\bibitem[Mohri et~al.(2012)Mohri, Rostamizadeh, and Talwalkar]{Foundations}
Mohri, M., Rostamizadeh, A., and Talwalkar, A.
\newblock \emph{Foundations of machine learning}.
\newblock MIT Press, 2012.

\bibitem[Montreuil et~al.(2024)Montreuil, Yeo, Carlier, Ng, and Ooi]{montreuil2025optimalqueryallocationextractive}
Montreuil, Y., Yeo, S.~H., Carlier, A., Ng, L.~X., and Ooi, W.~T.
\newblock Optimal query allocation in extractive qa with llms: A learning-to-defer framework with theoretical guarantees, 2024.

\bibitem[Montreuil et~al.(2025{\natexlab{a}})Montreuil, Carlier, Ng, and Ooi]{montreuil2025askaskktwostage}
Montreuil, Y., Carlier, A., Ng, L.~X., and Ooi, W.~T.
\newblock Why ask one when you can ask $k$? two-stage learning-to-defer to the top-$k$ experts, 2025{\natexlab{a}}.

\bibitem[Montreuil et~al.(2025{\natexlab{b}})Montreuil, Carlier, Ng, and Ooi]{montreuil2025onestagetopklearningtodeferscorebased}
Montreuil, Y., Carlier, A., Ng, L.~X., and Ooi, W.~T.
\newblock One-stage top-$k$ learning-to-defer: Score-based surrogates with theoretical guarantees, 2025{\natexlab{b}}.

\bibitem[Montreuil et~al.(2025{\natexlab{c}})Montreuil, Heng, Carlier, Ng, and Ooi]{montreuil2024twostagelearningtodefermultitasklearning}
Montreuil, Y., Heng, Y.~S., Carlier, A., Ng, L.~X., and Ooi, W.~T.
\newblock A two-stage learning-to-defer approach for multi-task learning.
\newblock In \emph{Forty-second International Conference on Machine Learning}, 2025{\natexlab{c}}.

\bibitem[Mozannar \& Sontag(2020)Mozannar and Sontag]{mozannar2021consistent}
Mozannar, H. and Sontag, D.~A.
\newblock Consistent estimators for learning to defer to an expert.
\newblock In \emph{Proceedings of the 37th International Conference on Machine Learning, {ICML} 2020, 13-18 July 2020, Virtual Event}, volume 119 of \emph{Proceedings of Machine Learning Research}, pp.\  7076--7087. {PMLR}, 2020.

\bibitem[Mozannar et~al.(2023)Mozannar, Lang, Wei, Sattigeri, Das, and Sontag]{Mozannar2023WhoSP}
Mozannar, H., Lang, H., Wei, D., Sattigeri, P., Das, S., and Sontag, D.~A.
\newblock Who should predict? exact algorithms for learning to defer to humans.
\newblock In Ruiz, F. J.~R., Dy, J.~G., and van~de Meent, J. (eds.), \emph{International Conference on Artificial Intelligence and Statistics, 25-27 April 2023, Palau de Congressos, Valencia, Spain}, volume 206 of \emph{Proceedings of Machine Learning Research}, pp.\  10520--10545. {PMLR}, 2023.

\bibitem[Narasimhan et~al.(2022)Narasimhan, Jitkrittum, Menon, Rawat, and Kumar]{narasimhan2022post}
Narasimhan, H., Jitkrittum, W., Menon, A.~K., Rawat, A.~S., and Kumar, S.
\newblock Post-hoc estimators for learning to defer to an expert.
\newblock In Koyejo, S., Mohamed, S., Agarwal, A., Belgrave, D., Cho, K., and Oh, A. (eds.), \emph{Advances in Neural Information Processing Systems 35: Annual Conference on Neural Information Processing Systems 2022, NeurIPS 2022, New Orleans, LA, USA, November 28 - December 9, 2022}, 2022.

\bibitem[Ohn~Aldrich(1997)]{Ohn_Aldrich1997-wn}
Ohn~Aldrich, R.~A.
\newblock Fisher and the making of maximum likelihood 1912-1922.
\newblock \emph{Statistical Science}, 12\penalty0 (3):\penalty0 162--179, 1997.

\bibitem[Ren et~al.(2015)Ren, He, Girshick, and Sun]{ren2016fasterrcnnrealtimeobject}
Ren, S., He, K., Girshick, R.~B., and Sun, J.
\newblock Faster {R-CNN:} towards real-time object detection with region proposal networks.
\newblock In Cortes, C., Lawrence, N.~D., Lee, D.~D., Sugiyama, M., and Garnett, R. (eds.), \emph{Advances in Neural Information Processing Systems 28: Annual Conference on Neural Information Processing Systems 2015, December 7-12, 2015, Montreal, Quebec, Canada}, pp.\  91--99, 2015.

\bibitem[Steinwart(2007)]{Steinwart2007HowTC}
Steinwart, I.
\newblock How to compare different loss functions and their risks.
\newblock \emph{Constructive Approximation}, 26:\penalty0 225--287, 2007.

\bibitem[Strong et~al.(2024)Strong, Men, and Noble]{strong2024towards}
Strong, J., Men, Q., and Noble, A.
\newblock Towards human-{AI} collaboration in healthcare: Guided deferral systems with large language models.
\newblock In \emph{ICML 2024 Workshop on LLMs and Cognition}, 2024.

\bibitem[Szegedy et~al.(2014)Szegedy, Zaremba, Sutskever, Bruna, Erhan, Goodfellow, and Fergus]{szegedy2014intriguingpropertiesneuralnetworks}
Szegedy, C., Zaremba, W., Sutskever, I., Bruna, J., Erhan, D., Goodfellow, I.~J., and Fergus, R.
\newblock Intriguing properties of neural networks.
\newblock In Bengio, Y. and LeCun, Y. (eds.), \emph{2nd International Conference on Learning Representations, {ICLR} 2014, Banff, AB, Canada, April 14-16, 2014, Conference Track Proceedings}, 2014.

\bibitem[Tewari \& Bartlett(2005)Tewari and Bartlett]{tewari07a}
Tewari, A. and Bartlett, P.~L.
\newblock On the consistency of multiclass classification methods.
\newblock In Auer, P. and Meir, R. (eds.), \emph{Learning Theory, 18th Annual Conference on Learning Theory, {COLT} 2005, Bertinoro, Italy, June 27-30, 2005, Proceedings}, volume 3559 of \emph{Lecture Notes in Computer Science}, pp.\  143--157. Springer, 2005.
\newblock \doi{10.1007/11503415\_10}.

\bibitem[Verma et~al.(2023)Verma, Barrej{\'{o}}n, and Nalisnick]{Verma2022LearningTD}
Verma, R., Barrej{\'{o}}n, D., and Nalisnick, E.~T.
\newblock Learning to defer to multiple experts: Consistent surrogate losses, confidence calibration, and conformal ensembles.
\newblock In Ruiz, F. J.~R., Dy, J.~G., and van~de Meent, J. (eds.), \emph{International Conference on Artificial Intelligence and Statistics, 25-27 April 2023, Palau de Congressos, Valencia, Spain}, volume 206 of \emph{Proceedings of Machine Learning Research}, pp.\  11415--11434. {PMLR}, 2023.

\bibitem[Wang \& Scott(2023)Wang and Scott]{Gamma_paper}
Wang, Y. and Scott, C.
\newblock On classification-calibration of gamma-phi losses.
\newblock In Neu, G. and Rosasco, L. (eds.), \emph{Proceedings of Thirty Sixth Conference on Learning Theory}, volume 195 of \emph{Proceedings of Machine Learning Research}, pp.\  4929--4951. PMLR, 2023.

\bibitem[Weston \& Watkins(1998)Weston and Watkins]{weston1998multi}
Weston, J. and Watkins, C.
\newblock Multi-class support vector machines.
\newblock Technical report, Citeseer, 1998.

\bibitem[Zhang \& Agarwal(2020)Zhang and Agarwal]{Zhang}
Zhang, M. and Agarwal, S.
\newblock Bayes consistency vs. h-consistency: The interplay between surrogate loss functions and the scoring function class.
\newblock In Larochelle, H., Ranzato, M., Hadsell, R., Balcan, M., and Lin, H. (eds.), \emph{Advances in Neural Information Processing Systems 33: Annual Conference on Neural Information Processing Systems 2020, NeurIPS 2020, December 6-12, 2020, virtual}, 2020.

\bibitem[Zhang(2002)]{Statistical}
Zhang, T.
\newblock Statistical behavior and consistency of classification methods based on convex risk minimization.
\newblock \emph{Annals of Statistics}, 32, 2002.
\newblock \doi{10.1214/aos/1079120130}.

\bibitem[Zhang \& Sabuncu(2018)Zhang and Sabuncu]{zhang2018generalizedcrossentropyloss}
Zhang, Z. and Sabuncu, M.~R.
\newblock Generalized cross entropy loss for training deep neural networks with noisy labels.
\newblock In Bengio, S., Wallach, H.~M., Larochelle, H., Grauman, K., Cesa{-}Bianchi, N., and Garnett, R. (eds.), \emph{Advances in Neural Information Processing Systems 31: Annual Conference on Neural Information Processing Systems 2018, NeurIPS 2018, December 3-8, 2018, Montr{\'{e}}al, Canada}, pp.\  8792--8802, 2018.

\bibitem[Zhong(2025)]{zhong2025fundamental}
Zhong, Y.
\newblock \emph{Fundamental Novel Consistency Theory: H-Consistency Bounds}.
\newblock New York University, 2025.

\end{thebibliography}
\bibliographystyle{icml2025}

%%%%%%%%%%%%%%%%%%%%%%%%%%%%%%%%%%%%%%%%%%%%%%%%%%%%%%%%%%%%%%%%%%%%%%%%%%%%%%%
%%%%%%%%%%%%%%%%%%%%%%%%%%%%%%%%%%%%%%%%%%%%%%%%%%%%%%%%%%%%%%%%%%%%%%%%%%%%%%%
% APPENDIX
%%%%%%%%%%%%%%%%%%%%%%%%%%%%%%%%%%%%%%%%%%%%%%%%%%%%%%%%%%%%%%%%%%%%%%%%%%%%%%%
%%%%%%%%%%%%%%%%%%%%%%%%%%%%%%%%%%%%%%%%%%%%%%%%%%%%%%%%%%%%%%%%%%%%%%%%%%%%%%%
\newpage
% \clearpage
% \onecolumn
% \appendix
% % \DoToC
\clearpage
\onecolumn
% \appendix
% \setcounter{theorem}{0}
% \renewcommand{\thetheorem}{1} 
% \setcounter{lemma}{0}
\begin{appendices}
% \DoToC
\section{Notation and Preliminaries for the Appendices}

We summarize the key notations and concepts introduced in the main text:

\paragraph{Input Space and Outputs:}
\begin{itemize}
    \item \( \mathcal{X} \): Input space for \( x \in \mathcal{X} \).
    \item \( \mathcal{Q} \): Latent representation \( q \in \mathcal{Q} \).
    \item \( \mathcal{Y} = \{1, \dots, n\} \): Categorical output space for classification tasks.
    \item \( \mathcal{T} \subseteq \mathbb{R} \): Continuous output space for regression tasks.
    \item \( \mathcal{Z} = \mathcal{X} \times \mathcal{Y} \times \mathcal{T} \): Combined space of inputs and labels.
\end{itemize}

\paragraph{Learning-to-Defer Setting:}
\begin{itemize}
    \item \( \mathcal{A} = \{0\} \cup [J] \): Set of agents, where \( 0 \) refers to the primary model \( g = (h, f) \), and \( J \) denotes the number of experts.
    \item \( m_j(x) = (m_j^h(x), m_j^f(x)) \): Predictions by expert \( j \), where \( m_j^h(x) \in \mathcal{Y} \) is a categorical prediction and \( m_j^f(x) \in \mc{T} \) is a regression estimate.
    \item $c_0(g(x), z) = \psi(g(x),z)$: The cost associated to the multi-task model.
    \item $c_{j>0}(m(x), z) = \psi(m(x),z) + \beta_j$: The cost associated to the expert $j$ with query cost $\beta_j\geq0$.
    \item $\psi: \mc{Y}\times\mc{T}\times\mc{Y}\times\mc{T}\to\mb{R}^+$: Quantify the prediction's quality. 
\end{itemize}

\paragraph{Hypothesis Sets:}
\begin{itemize}
    \item \(\mathcal{W}\): Set of backbones $w: \mc{X}\to\mc{Q}$.  
    \item \( \mathcal{H} \): Set of classifiers \( h: \mathcal{Q}\times\mc{Y} \to \mb{R} \).
    \item \( \mathcal{F} \): Set of regressors \( f: \mathcal{Q} \to \mc{T} \).
    \item $\mc{G}$: Single multi-head network $\mathcal{G} = \{ g : g(x) = (h \circ w(x), f \circ w(x)) \mid w \in \mathcal{W}, h \in \mathcal{H}, f \in \mathcal{F} \}$.
    \item $\mc{R}$: Set of rejectors $r:\mc{X}\rightarrow\mc{A}$. 
\end{itemize}

\paragraph{Adversarial Definitions:}
\begin{itemize}
    \item $x_j' \in B_p(x,\gamma)$: the adversarial input for the agent $j\in\mc{A}$ in the $p$-norm ball \( B_p(x, \gamma) = \{ x_j' \in \mathcal{X} \mid \|x_j' - x\|_p \leq \gamma \} \)
    \item \( \widetilde{\ell}_{01}^j(r,x,j) = \sup_{x_j' \in B_p(x, \gamma)} \ell_{01}(r,x_j', j) \): $j$-th Adversarial multiclass loss.
    \item \( \widetilde{\Phi}_{01}^{\rho,u,j}(r, x, j) = \sup_{x_j' \in B_p(x, \gamma)} \Psi^u \left( \sum_{j' \neq j} \Psi_\rho \big( r(x_j', j') - r(x_j', j) \big) \right)\): $j$-th Adversarial margin surrogate losses, providing a differentiable proxy for \( \widetilde{\ell}_{01}^j \).
\end{itemize}

This notation will be consistently used throughout the appendices to ensure clarity and coherence in theoretical and empirical discussions.
\newpage

\section{Illustration}\label{attacks}

\subsection{Learning-to-Defer in Inference}

\begin{figure}[H]
    \centering
\includegraphics[width=0.6\linewidth]{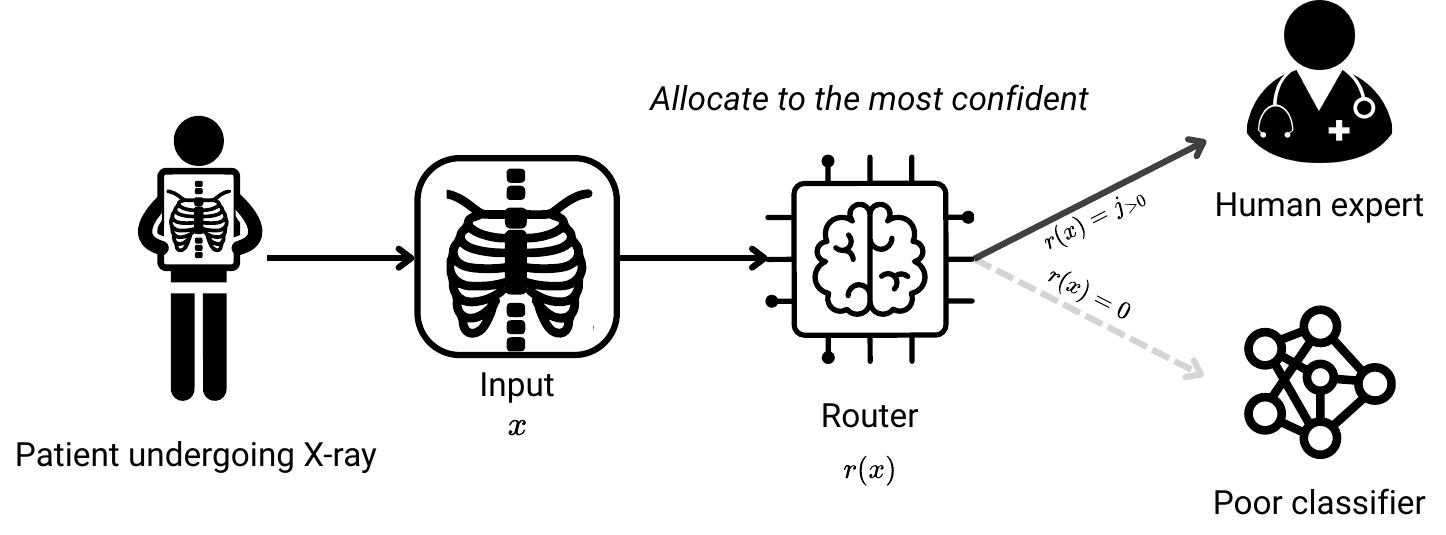}
    \caption{Learning-to-Defer in inference. The input $x$ is routed to the most cost-effective agent in the system by the rejector $r\in\mc{R}$: in this case, the human expert.}
    \label{fig:l2d_ill}
\end{figure}

\subsection{Untargeted Attack}

\begin{figure}[H]
    \centering
\includegraphics[width=0.9\linewidth]{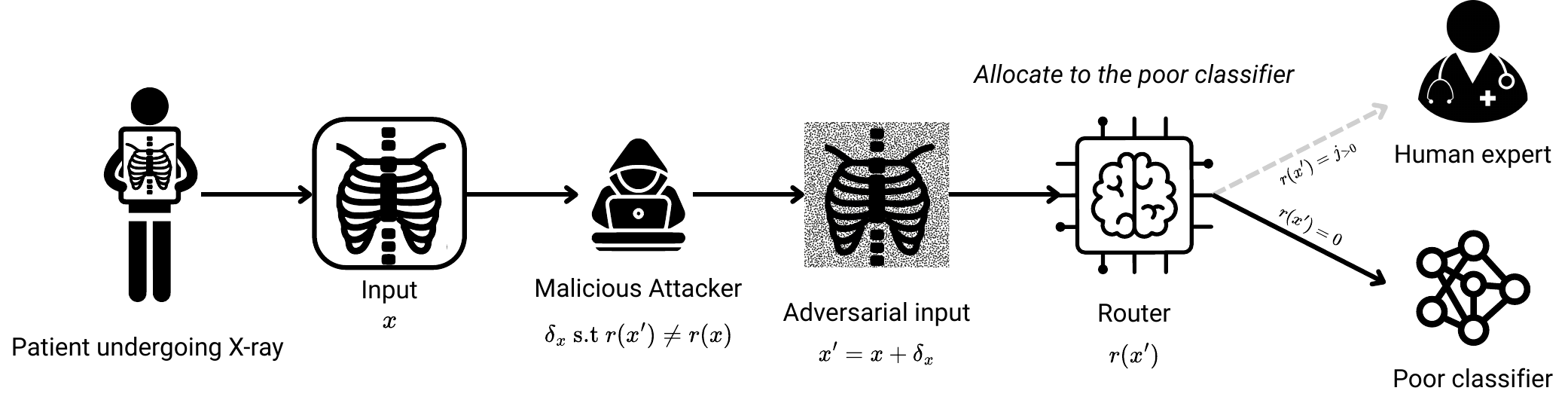}
    \caption{Untargeted Attack: The malicious attacker perturbs the input to increase the probability that the query is assigned to a less accurate expert, thereby maximizing classification errors. Rather than targeting a specific expert, the attack injects adversarial noise to disrupt the expert allocation process, leading to erroneous routing and degraded decision-making.}
    \label{fig:untargeted}
\end{figure}

\subsection{Targeted Attack}
\begin{figure}[H]
    \centering
\includegraphics[width=0.9\linewidth]{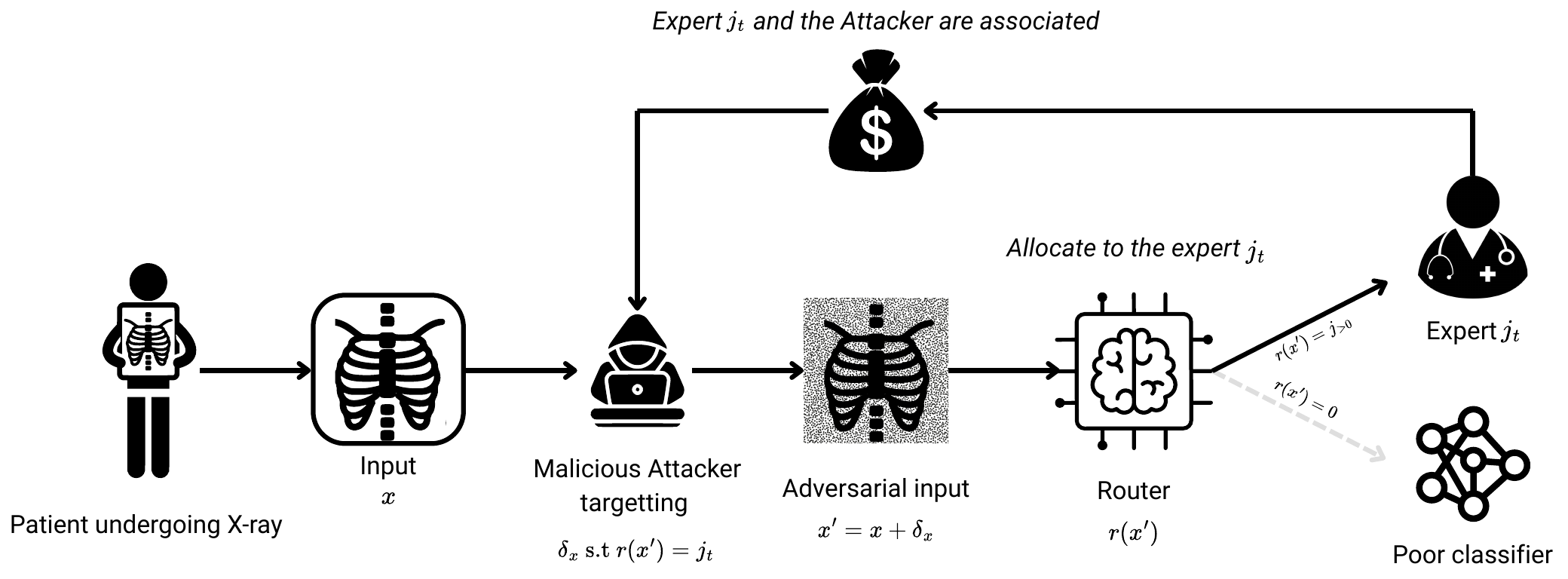}
    \caption{Targeted Attack: The malicious attacker perturbs the input to increase the probability that the query is assigned to its associated agent. By manipulating the L2D system to systematically route queries to this associate, the adversary ensures that the associate receives a higher volume of queries, thereby increasing its earnings.}
    \label{fig:targeted}
\end{figure}

\section{Algorithm}\label{appendix:algo}

\begin{algorithm}[H]
   \caption{\name{} Algorithm}
   \label{alg:l2d}
\begin{algorithmic}
   \STATE {\bfseries Input:} Dataset $\{(x_k, y_k, t_k)\}_{k=1}^K$, multi-task model $g\in\mc{G}$, experts $m\in\mc{M}$, rejector $r\in\mc{R}$, number of epochs $\text{EPOCH}$, batch size BATCH, adversarial parameters $(\rho, \nu)$, regularizer parameter $\eta$, learning rate $\lambda$.
   \STATE {\bfseries Initialization:} Initialize rejector parameters $\theta$.
   \FOR{$i=1$ to $\text{EPOCH}$}
       \STATE Shuffle dataset $\{(x_k, y_k, t_k)\}_{k=1}^K$.
       \FOR{each mini-batch $\mathcal{B} \subset \{(x_k, y_k, t_k)\}_{k=1}^K$ of size BATCH}
           \STATE Extract input-output pairs $z=(x, y, t) \in \mathcal{B}$.
           \STATE Query model $g(x)$ and experts $m(x)$. \hfill\COMMENT{Agents have been trained offline and fixed}
           \STATE Evaluate costs $c_0(g(x),z)$ and $c_{j>0}(m(x),z)$. \hfill\COMMENT{Compute costs}
           \FOR{$j=0$ to $J$}
                \STATE Evaluate rejector score $r(x,j)$. \hfill\COMMENT{Rejection score of agent $j$}
                \STATE Generate adversarial input $x'_j = x + \delta_j$ with $\delta_j \in B_p(x, \gamma)$. \hfill\COMMENT{$\ell_p$-ball perturbation for agent $j$}
                \STATE Run PGD attack on $x'_j$: 
                \STATE \hspace{1em} $\sup_{x_j^\prime \in B_p(x, \gamma)} \| \overline{\Delta}_r(x_j', j) - \overline{\Delta}_r(x, j) \|_2$. \hfill\COMMENT{Smooth robustness evaluation}
                \STATE Compute Adversarial Smooth surrogate losses $\widetilde{\Phi}_{01}^{\text{smth}, u}(r, x, j)$. 
           \ENDFOR
           \STATE Compute the regularized empirical risk minimization:
           \STATE \hspace{1em} $\widehat{\mc{E}}_{\Phi_{\text{def}}}^{\,\Omega}(r;\theta) = \frac{1}{\text{BATCH}} \sum_{z \in \mathcal{B}} \Big[ \widetilde{\Phi}_{\text{def}}^{\text{smth}, u}(r,g,m, z) \Big] + \eta \Omega(r)$.
           \STATE Update parameters $\theta$:
           \STATE \hspace{1em} $\theta \leftarrow \theta - \lambda \nabla_\theta \widehat{\mc{E}}_{\Phi_{\text{def}}}^{\,\Omega}(r;\theta)$. \hfill\COMMENT{Gradient update}
       \ENDFOR
   \ENDFOR
   \STATE \textbf{Return:} trained rejector model $\hat{r}$.
\end{algorithmic}
\end{algorithm}

\section{Proof Adversarial Robustness in Two-Stage Learning-to-Defer}

\subsection{Proof Lemma \ref{lemma:deferral}}

\label{appendix:deferral}

\deferral* 
\begin{proof}
In adversarial training, the objective is to optimize the worst-case scenario of the objective function under adversarial inputs \( x' \in B_p(x, \gamma) \). For our case, we start with the standard L2D loss for the two-stage setting \citep{mao2024regressionmultiexpertdeferral, montreuil2024twostagelearningtodefermultitasklearning}:
\begin{equation} \label{eq:1}
\begin{aligned}
    \ell_{\text{def}}(r, g,m,z) & = \sum_{j=0}^J c_j(g(x),m_j(x),z) 1_{r(x) = j} \\
    & = \sum_{j=0}^J \tau_j(g(x),m(x),z) 1_{r(x) \neq j} + (1-J) \sum_{j=0}^J c_j(g(x),m_j(x),z)
\end{aligned}
\end{equation}
using \begin{equation}
    \tau_j(g(x),m(x),z) =  \begin{cases}
         \sum_{i=1}^J c_i(m_i(x), z) & \text{if } j=0 \\
        c_0(g(x), z) + \sum_{i=1}^J c_i(m_i(x), z) 1_{i \neq j} & \text{otherwise}
    \end{cases}
\end{equation}
Next, we derive an upper bound for Equation~\eqref{eq:1} by considering the supremum over all adversarial perturbations \( x' \in B_p(x, \gamma) \), under the fact that the attack is solely on the rejector \( r \in \mathcal{R} \):
\begin{equation} \label{eq:2}
    \ell_{\text{def}}(r, g,m,z) \leq \sup_{x' \in B_p(x, \gamma)} \Big( \sum_{j=0}^J \tau_j(g(x),m(x),z) 1_{r(x') \neq j} \Big) + (1-J) \sum_{j=0}^J c_j(g(x),m_j(x),z)
\end{equation}

However, the formulation in Equation~\eqref{eq:2} does not fully capture the worst-case scenario in L2D. Specifically, this formulation might not result in a robust approach, as it does not account for the adversarial input \( x'_j \in B_p(x, \gamma) \) that maximizes the loss for every agent \( j \in \mathcal{A} \). Incorporating this worst-case scenario, we obtain:
\begin{equation}
\begin{aligned}
    \ell_{\text{def}}(r,g,m,z) & \leq \sum_{j=0}^J \tau_j(g(x),m(x),z) \sup_{x_j' \in B_p(x, \gamma)} 1_{r(x_j') \neq j} + (1-J) \sum_{j=0}^J c_j(g(x),m_j(x),z)
\end{aligned}
\end{equation}

Thus, formulating with the margin loss $\rho_r(x,j) = r(x,j) - \max_{j'\not=j}r(x,j')$, leads to the desired result:
\begin{equation}
\begin{aligned}
    \widetilde{\ell}_{\text{def}}(r, g,m,z) & = \sum_{j=0}^J \tau_j(g(x),m(x),z) \sup_{x_j' \in B_p(x, \gamma)} 1_{\rho_r(x_j', j) \leq 0} + (1-J) \sum_{j=0}^J c_j(g(x),m_j(x),z) \\
    & = \sum_{j=0}^J \tau_j(g(x),m(x),z) \widetilde{\ell}^j_{01}(r,x,j) + (1-J) \sum_{j=0}^J c_j(g(x),m_j(x),z)
\end{aligned}
\end{equation}
with $\widetilde{\ell}^j_{01}(r,x,j) = \sup_{x_j'\in B_p(x,\gamma)}1_{\rho_r(x'_j,j)\leq0}$
\end{proof}

\subsection{Proof Lemma \ref{lemma:deferralmargin}} \label{proof:margindeferral}
\margindeferral*
\begin{proof}
    Referring to adversarial true deferral loss defined in Lemma \ref{lemma:deferral}, we have:
\begin{equation*}
\begin{aligned}
    \widetilde{\ell}_{\text{def}}(r, g,m,z) & = \sum_{j=0}^J \tau_j(g(x),m(x),z) \mspace{-10mu} \sup_{x_j' \in B_p(x, \gamma)} \mspace{-10mu} 1_{\rho_r(x_j', j) \leq 0} + (1-J) \sum_{j=0}^J c_j(g(x),m_j(x),z) \\
    & = \sum_{j=0}^J \tau_j(g(x),m(x),z) \widetilde{\ell}^j_{01}(r,x,j) + (1-J) \sum_{j=0}^J c_j(g(x),m_j(x),z)
\end{aligned}
\end{equation*}

By definition, \(\widetilde{\Phi}^{\rho,u,j}_{01}\) upper bounds the $j$-th adversarial classification loss \(\widetilde{\ell}_{01}^j\), leading to:
\begin{equation}
    \widetilde{\ell}_{\text{def}}(r, g,m,z) \leq \sum_{j=0}^J \tau_j(g(x),m(x),z) \widetilde{\Phi}^{\rho,u,j}_{01}(r, x, j) + (1-J) \sum_{j=0}^J c_j(g(x),m_j(x),z)
\end{equation}
Then, dropping the term that does not depend on $r\in\mc{R}$, leads to the desired formulation:
\begin{equation}
    \widetilde{\Phi}^{\rho, u}_{\text{def}}(r, g,m,z) = \sum_{j=0}^J \tau_j(g(x),m(x),z) \widetilde{\Phi}^{\rho,u,j}_{01}(r, x, j)
\end{equation}

\end{proof}

\subsection{Proof Lemma \ref{lemma:surrogate_class}} \label{appendix:smooth}
\surrogatemulti*
\begin{proof}
Let \( x \in \mathcal{X} \) denote an input  and \( x_j' \in B_p(x, \gamma) \) an adversarially perturbed input within an \( \ell_p \)-norm ball of radius \( \gamma \) for each agent. Let \( r \in \mathcal{R} \) be a rejector. We now define the composite-sum \(\rho\)-margin losses for both clean and adversarial scenarios:
\begin{equation}
\begin{aligned}
    \Phi^{\rho, u}_{01}(r, x, j) & = \Psi^u \left( \sum_{j' \neq j} \Psi_\rho \big(r(x, j') - r(x, j)\big) \right) \\
    \widetilde{\Phi}^{\rho,u,j}_{01}(r, x, j) & = \sup_{x_j' \in B_p(x, \gamma)} \Psi^u \left( \sum_{j' \neq j} \Psi_\rho \big(r(x_j', j') - r(x_j', j)\big) \right)
\end{aligned}
\end{equation}
where \(\Psi_{\text{e}}(v) = \exp(-v)\). For \( u > 0 \), the transformation \(\Psi^u\) is defined as:
\[
\Psi^{u=1}(v) = \log(1 + v), \quad \Psi^{u \neq 1}(v) = \frac{1}{1 - u} \left[(1 - v)^{1 - u} - 1\right]
\]
It follows that for all \( u>0 \) and \( v \geq 0 \), we have \( \left| \frac{\partial \Psi^u}{\partial v}(v) \right| = \frac{1}{(1+v)^u} \leq 1 \)  ensuring that \(\Psi^u\) is 1-Lipschitz over \(\mathbb{R}^+\) \citep{mao2023crossentropylossfunctionstheoretical}. 

Define \( \Delta_r(x, j, j') = r(x, j) - r(x, j') \) and let \( \overline{\Delta}_r(x, j) \) denote the \( J \)-dimensional vector:
\[
\overline{\Delta}_r(x, j) = \big( \Delta_r(x, j, 0), \ldots, \Delta_r(x, j, j-1), \Delta_r(x, j, j+1), \ldots, \Delta_r(x, j, J) \big)
\]
For any \( u>0 \), with \(\Psi^u\) non-decreasing and 1-Lipschitz:
\begin{equation}
    \widetilde{\Phi}_{01}^{\rho, j}(r, x, j) \leq \Phi_{01}^{\rho, u}(r, x, j) + \sup_{x_j' \in B_p(x, \gamma)} \sum_{j' \neq j} \Big( \Psi_\rho \big(-\Delta_r(x_j', j, j')\big) - \Psi_\rho \big(-\Delta_r(x, j, j')\big) \Big)
\end{equation}
Since \(\Psi_\rho(z)\) is \(\frac{1}{\rho}\)-Lipschitz, by the Cauchy-Schwarz inequality and for \(\nu \geq \frac{\sqrt{n-1}}{\rho} \geq \frac{1}{\rho}\):
\begin{equation}
\begin{aligned}
    \widetilde{\Phi}_{01}^{\rho,u,j}(r, x, j) & \leq \Phi_{01}^{\rho, u}(r, x, j) + \nu \sup_{x_j' \in B_p(x, \gamma)} \| \overline{\Delta}_r(x_j', j) - \overline{\Delta}_r(x, j) \|_2
\end{aligned}
\end{equation}

Using \(\Phi_{01}^u(r, x, y) = \Psi^u\big(\sum_{y' \neq y} \Psi_{\text{e}}(r(x, y) - r(x, y'))\big)\) with \(\Psi_{\text{e}}(v) = \exp(-v)\) and the fact that \(\Psi_{\text{e}}(v / \rho) \geq \Psi_\rho(v)\), we obtain:
\begin{equation}
\begin{aligned}
    \widetilde{\Phi}_{01}^{\rho,u,j}(r, x, j) & \leq \Phi_{01}^u\left(\frac{r}{\rho}, x, j\right) + \nu \sup_{x_j' \in B_p(x, \gamma)} \| \overline{\Delta}_r(x_j', j) - \overline{\Delta}_r(x, j) \|_2
\end{aligned}
\end{equation}

Finally, we have the desired smooth surrogate losses upper-bounding $ \widetilde{\Phi}_{01}^{\text{smth}, u} \geq \widetilde{\Phi}_{01}^{\rho,u,j}$:
\begin{equation}
    \widetilde{\Phi}_{01}^{\text{smth}, u}(r, x, j) = \Phi_{01}^u\left(\frac{r}{\rho}, x, j\right) + \nu \sup_{x_j' \in B_p(x, \gamma)} \| \overline{\Delta}_r(x_j', j) - \overline{\Delta}_r(x, j) \|_2
\end{equation}
\end{proof}

\subsection{Proof Lemma \ref{lemma:surrogate}}\label{proof:surrogate}
\robustsurrogate*
\begin{proof}
Using Lemma \ref{lemma:deferralmargin}, we have:
\begin{equation}
    \widetilde{\Phi}^{\rho, u}_{\text{def}}(r, g,m,z) = \sum_{j=0}^J \tau_j(g(x),m(x),z) \widetilde{\Phi}^{\rho, j}_{01}(r, x, j) 
\end{equation}
Since \(\widetilde{\Phi}^{\rho,u,j}_{01} \leq \widetilde{\Phi}^{\text{smth}, u}_{01}\) by Lemma \ref{lemma:surrogate_class}, we obtain:
\begin{equation}
    \widetilde{\Phi}^{\text{smth}, u}_{\text{def}}(r, g,m,z) = \sum_{j=0}^J \tau_j(g(x),m(x),z) \widetilde{\Phi}^{\text{smth}, u}_{01}(r, x, j)
\end{equation}
\end{proof}

\subsection{Proof Lemma \ref{lemma:rconsistency}}
\rconsistency* \label{proof:rconsistency}

\begin{proof}

We define the margin as $\rho_r(x, j) = r(x, j) - \max_{j' \neq j} r(x, j')$, which quantifies the difference between the score of the $j$-th dimension and the highest score among all other dimensions. Starting from this, we can define a space $\overline{\mc{R}}_{\gamma}(x) = \{r\in\mc{R}: \inf_{x^\prime \in B_p(x,\gamma)}\rho_r(x^\prime, r(x))>0\}$ for $B_p(x, \gamma) = \{ x' \mid \|x' - x\|_p \leq \gamma \}$ representing hypothesis that correctly classifies the adversarial input. By construction, we have that $x_j^\prime \in B_p(x,\gamma)$. 

In the following, we will make use of several notations. Let $p(x) = (p(x, 0), \ldots, p(x, J))$ denote the probability distribution over $\mc{A}$ at point $x \in \mathcal{X}$. We sort these probabilities $\{p(x,j):j\in\mc{A}\}$ in increasing order $p_{[0]}(x) \leq p_{[1]}(x) \leq \cdots \leq p_{[J]}(x)$. Let $\mathcal{R}$ be a hypothesis class for the rejector $r\in\mc{R}$ with $r:\mc{X}\times\mc{A}\rightarrow\mb{R}$. We assume this hypothesis class to be \textit{symmetric} implying that for any permutation $\pi$ of $\mathcal{A}$ and any $r \in \mathcal{R}$, the function $r^\pi$ defined by $r^\pi(x, j) = r(x, \pi(j))$ is also in $\mathcal{R}$ for $j\in\mc{A}$. We similarly have $r\in\mc{R}$, such that $r(x,\{0\}_x^r), r(x,\{1\}_x^r), \cdots, r(x,\{J\}_x^r)$ sorting the scores $\{ r(x,j): j\in\mc{A}\}$ in increasing order.

% Furthermore, we assume $\mc{R}$ to be \textit{locally $\rho$-consistent} implying that for any $x \in \mathcal{X}$ and $r \in \mathcal{R}$, there exists $\delta > 0$ such that for all $x'$ with $\|x - x'\| \leq \delta$ (or $B_p(x,\gamma)$), it holds that $|r(x', j) - r(x', j')| \geq \rho$ for all $j \neq j'$. 

For $\mc{R}$ symmetric and locally $\rho$-consistent, there exists $r^*\in\mc{R}$ with the same ordering of the $j\in\mc{A}$, regardless of any $x_j' \in B_p(x,\gamma)$. This implies  $\inf_{x'_j \in B_p(x,\gamma)}|r^*(x'_j, q) - r^*(x'_j, q')| \geq \rho$ for $\forall q' \neq q \in \mc{A}$. Using the symmetry of $\mc{R}$, we can find a $r^\ast$ with the same ordering of $j\in\mc{A}$, i.e. $p(x,\{k\}_{x}^{r^\ast}) = p_{[k]}(x)$ for any $k\in\mc{A}$:
\begin{equation}
   \forall j \in \mc{A}, \quad r^\ast(x'_j, \{0\}_{x'_j}^{r^\ast}) \leq r^\ast(x'_j, \{1\}_{x'_j}^{r^\ast}) \leq \cdots \leq r^\ast(x'_j, \{J\}^{r^\ast}_{x'_j})
\end{equation}
We introduce a new notation $\xi_k'=x'_{\{k\}}$ corresponding to the $k$-th ordered adversarial input. For instance, if we have an ordered list $\{r^*(x'_2,2), r^*(x'_0, 0), r^*(x'_1,1)\}$, using the notation we have $\{r^*(\xi_0', \{0\}_{\xi_0'}^{r^\ast}), r^*(\xi_1', \{1\}_{\xi_1'}^{r^\ast}), r^*(\xi_2', \{2\}_{\xi_2'}^{r^\ast}) \}$.  

% \begin{equation}
%     \{k\}_{\xi_k'}^{r^\ast} = \{k\}_{x}
% \end{equation}
% with .  

We define a conditional risk $\mathcal{C}_{\widetilde{\Phi}_{01}^{\rho,u,j}}$  parameterized by the probability \( p_j \in \Delta^{|\mathcal{A}|} \) along with its optimum:
\begin{equation}
\begin{aligned}
    \mathcal{C}_{\widetilde{\Phi}_{01}^{\rho,u,j}}(r, x) & = \sum_{j \in \mathcal{A}} p_j \widetilde{\Phi}_{01}^{\rho,u,j}(r, x, j) \\
    \mathcal{C}^\ast_{\widetilde{\Phi}_{01}^{\rho,u,j}}(\mathcal{R}, x) & = \inf_{r \in \mathcal{R}} \sum_{j \in \mathcal{A}} p_j \widetilde{\Phi}_{01}^{\rho,u,j}(r, x, j)
\end{aligned}
\end{equation}

The optimum \( \mathcal{C}^\ast_{\widetilde{\Phi}_{01}^{\rho,u,j}} \) is challenging to characterize directly. To address this, we instead derive an upper bound by analyzing \( \mathcal{C}_{\widetilde{\Phi}_{01}^{\rho,u,j}}(r^\ast, x) \). In what follows, the mapping from \( j \) to \( i \) is defined based on the rank of \( p(x, j) \) within the sorted list \( \{p_{[i]}(x)\} \).
\begin{equation}
    \begin{aligned}
        \mc{C}^\ast_{\widetilde{\Phi}_{01}^{\rho,u,j}}(\mc{R}, x) & \leq  \mc{C}_{\widetilde{\Phi}_{01}^{\rho,u,j}}(r^\ast, x) \\
     &= \sum_{j \in \mathcal{A}} p(x, j) \sup_{x'_j \in B_p(x, \gamma)} \Psi^u \Bigg( \sum_{\substack{j' \in \mathcal{A} \\ j' \neq j}} \Psi_\rho\Big( r^*(x'_j, j) - r^*(x'_j, j') \Big)\Bigg) \\
     & = \sum_{i=0}^J \sup_{\xi_i' \in B_p(x,\gamma)} p(x,\{i\}_{\xi_i'}^{r^\ast}) \Psi^u \Bigg(\sum_{j=0}^{i-1} \Psi_\rho \Big(r^\ast(\xi_i', \{i\}^{r^\ast}_{\xi_i'}) - r^\ast(\xi'_i, \{j\}^{r^\ast}_{\xi_i'})\Big)  \\
     & \quad \quad \quad + \sum_{j=i+1}^{J} \Psi_\rho \Big(r^\ast(\xi_i', \{i\}^{r^\ast}_{\xi_i'}) - r^\ast(\xi'_i, \{j\}^{r^\ast}_{\xi_i'})\Big) \Bigg)  \\
     & = \sum_{i=0}^J \sup_{\xi_i' \in B_p(x,\gamma)} p(x,\{i\}_{\xi_i'}^{r^\ast}) \Psi^u \Bigg(\sum_{j=0}^{i-1} \Psi_\rho \Big(r^\ast(\xi_i', \{i\}^{r^\ast}_{\xi_i'}) - r^\ast(\xi'_i, \{j\}^{r^\ast}_{\xi_i'})\Big) + J-i\Bigg) \quad \text{$\Big(\Psi_\rho(t)=1,  \forall t\leq 0\Big)$} \\
     & = \sum_{i=0}^J \sup_{\xi_i' \in B_p(x,\gamma)} p(x,\{i\}_{\xi_i'}^{r^\ast}) \Psi^u(J-i) \quad \text{$\Big(\Psi_\rho(v)=0, \forall v\geq \rho$ and $\inf_{x'_j \in B_p(x,\gamma)}|r^*(x'_j, q) - r^*(x'_j, q')| \geq \rho\Big)$} \\
     & = \sum_{i=0}^J p_{[i]}(x)\Psi^u(J-i) \quad \text{$\Big(r^\ast$ and $p(x)$ same ordering of $j\in\mc{A}\Big)$}
    \end{aligned}
\end{equation}

Then, assuming $\overline{\mc{R}}_\gamma(x)\not=\emptyset$ and $\mc{R}$ symmetric, we have:
\begin{equation}
    \begin{aligned}
    \Delta \mc{C}_{\widetilde{\Phi}_{01}^{\rho,u,j}}(r,x) & = \mc{C}_{\widetilde{\Phi}_{01}^{\rho,u,j}}(r,x) -  \mc{C}^\ast_{\widetilde{\Phi}_{01}^{\rho,u,j}}(\mc{R}, x) \geq \mc{C}_{\widetilde{\Phi}_{01}^{\rho,u,j}}(r,x) -  \mc{C}_{\widetilde{\Phi}_{01}^{\rho,u,j}}(r^\ast, x) \\
    & \geq \sum_{i=0}^J \sup_{\xi_i' \in B_p(x,\gamma)} p(x,\{i\}_{\xi_i'}^{r}) \Psi^u\Bigg(\sum_{j=0}^{i-1} \Psi_\rho \Big(r(\xi_i', \{i\}_{\xi_i'}) - r(\xi'_i, \{j\}_{\xi_i'})\Big) + J-i\Bigg)  \\
     & - \Big(\sum_{i=0}^J p_{[i]}(x)\Psi^u(J-i)\Big) \\
    \end{aligned}
\end{equation}
Then, for $\Psi_\rho$ non negative, $\Psi_\rho(v)=1$ for $v\leq0$,  and $\Psi^u$ non-decreasing, we have that:
\begin{align*}
     \Delta \mc{C}_{\widetilde{\Phi}_{01}^{\rho,u,j}}(r,x)   &\geq \Psi^u (1)p(x, r(x)) 1_{r \notin \overline{\mc{R}}_\gamma(x)} + \sum_{i=0}^J  \sup_{\xi_i' \in B_p(x,\gamma)} \mspace{-20mu} p(x, \{i\}^r_{\xi_i'}) \Psi^u (J - i) - \Big(\sum_{i=0}^J p_{[i]}(x)\Psi^u(J-i)\Big)  \\
     & \geq \Psi^u(1)p(x, r(x)) 1_{r \notin \overline{\mc{R}}_\gamma(x)} - \sum_{i=0}^J  p_{[i]}(x)\Psi^u(J-i) + \sum_{i=0}^J p(x, \{i\}^r_x) \Psi^u (J-i) \\
      & \quad (\text{sup}_{\xi_i' \in B_p(x,\gamma)} p(x, \{i\}^r_{\xi_i'}) \geq p(x, \{i\}^r_x) 
\end{align*}
Then RHS is equal to the above,

\begin{align}
        \Delta \mc{C}_{\widetilde{\Phi}_{01}^{\rho,u,j}}(r,x) &\geq \Psi^u(1)p(x, r(x)) 1_{r \notin \overline{\mc{R}}_\gamma(x)} + \Psi^u(1) \Bigg(\max_{j \in \mathcal{A}} p(x, j) - p(x, r(x))\Bigg) + 
\begin{bmatrix}
    \Psi^u(1) \\
    \Psi^u(1) \\
    \Psi^u(2) \\
    \vdots \\
    \Psi^u(J)
\end{bmatrix}
\cdot
\begin{bmatrix}
    p(x, \{J\}^r_x) \\
    p(x, \{J-1\}^r_x) \\
    p(x, \{J-2\}^r_x) \\
    \vdots \\
    p(x, \{0\}^r_x)
\end{bmatrix}
 \\
 & \quad -
\begin{bmatrix}
    \Psi^u(1) \\
    \Psi^u(1) \\
    \Psi^u(2) \\
    \vdots \\
    \Psi^u(J)
\end{bmatrix}
\cdot
\begin{bmatrix}
    p_{[J]}(x) \\
    p_{[J-1]}(x) \\
    p_{[J-2]}(x) \\
    \vdots \\
    p_{[0]}(x)
\end{bmatrix}
\end{align}

Rearranging terms for $\Psi^u(1)\leq \Psi^u(1) \leq \Psi^u(2)\leq \cdots \leq \Psi^u(J)$ and similarly for probabilities $p_{[J]}(x)\geq \cdots\geq p_{[0]}(x)$, leads to:
\begin{equation}
    \begin{aligned}
   \Delta \mc{C}_{\widetilde{\Phi}_{01}^{\rho,u,j}}(r,x) &\geq \Psi^u(1) p(x, r(x)) 1_{r \notin \overline{\mc{R}}_\gamma(x)} + \Psi^u(1) \Bigg(\max_{j \in \mathcal{A}} p(x, j) - p(x, r(x)) \Bigg) \\
    &= \Psi^u(1) \Bigg(\max_{j \in \mathcal{A}} p(x, j) - p(x, r(x)) 1_{r \in \overline{\mc{R}}_\gamma(x)}\Bigg)
    \end{aligned}
\end{equation}

for any $r\in\mc{R}$, we have:

\begin{align}
    \Delta \mc{C}_{\widetilde{\ell}_{01}^j}(r, x) &= \mc{C}_{\widetilde{\ell}_{01}^j}(r, x) - \mc{C}_{\widetilde{\ell}_{01}^j}^B(\mc{R}, x) \notag \\
    &= \sum_{j \in \mathcal{A}} p(x, j) \sup_{x_j' \in B_p(x,\gamma)} 1_{\rho_r(x'_j, j) \leq 0} - \inf_{r \in \mathcal{R}} \sum_{j \in \mathcal{A}} p(x, j) \sup_{x_j' \in B_p(x,\gamma)} 1_{\rho_r(x'_j, j) \leq 0} \notag \\
    &= (1 - p(x, r(x))) 1_{r \in \overline{\mathcal{R}}_\gamma(x)} + 1_{r \notin \overline{\mathcal{R}}_\gamma(x)} - \inf_{r \in \mathcal{R}} \big[(1 - p(x, r(x))) 1_{r \in \overline{\mathcal{R}}_\gamma(x)} + 1_{r \notin \overline{\mathcal{R}}_\gamma(x)}\big] \notag \\
    &= (1 - p(x, r(x))) 1_{r \in \overline{\mathcal{R}}_\gamma(x)} + 1_{r \notin \overline{\mathcal{R}}_\gamma(x)} - \bigg(1 - \max_{j \in \mathcal{A}} p(x, j)\bigg) \quad (\mathcal{R} \text{ is symmetric and } \overline{\mathcal{R}}_\gamma(x) \neq \emptyset) \notag \\
    &= \max_{j \in \mathcal{A}} p(x, j) - p(x, r(x)) 1_{r \in \overline{\mathcal{R}}_\gamma(x)}
\end{align}
We therefore have proven that:
\begin{equation}
\begin{aligned}
  \Delta \mc{C}_{\widetilde{\ell}_{01}^j}(r,x) & \leq   \Psi^u(1) \Big(\Delta\mathcal{C}_{\widetilde{\Phi}_{01}^{\rho,u,j}}(r,x)\Big) \\
  \sum_{j\in\mc{A}} p_j \widetilde{\ell}_{01}^j(r,x,j) - \inf_{r\in\mc{R}} \sum_{j\in\mc{A}} p_j \widetilde{\ell}_{01}^j(r,x,j) &  \leq  \Psi^u(1)\Big( \sum_{j\in\mc{A}} p_j  \widetilde{\Phi}^{\rho,u,j}_{01}(r,x,j) - \inf_{r\in\mc{R}} \sum_{j\in\mc{A}} p_j  \widetilde{\Phi}^{\rho,u,j}_{01}(r,x,j)\Big)
\end{aligned}
\end{equation}
\end{proof}

\subsection{Proof Theorem \ref{theo:consistency}}
\consistency* \label{proof:consistency}
\begin{proof}
Using Lemma \ref{lemma:deferralmargin}, we have:
\begin{equation}
\widetilde{\Phi}^{\rho, u}_{\text{def}}(r, g,m,z) = \sum_{j=0}^J \tau_j(g(x),m(x),z)  \widetilde{\Phi}^{\rho, j}_{01}(r,x, j) 
\end{equation}
We define several important notations. For a quantity $\omega \in \mb{R}$, we note $\overline{\omega}(g(x),x) = \mb{E}_{y,t|x}[\omega(g,z=(x,y,t))]$, an optimum $\omega^\ast(z)=\inf_{g\in\mc{G}}[\omega(g,z)]$, and the combination $\overline{w}^\ast(x) = \inf_{g\in\mc{G}}\mb{E}_{y,t|x}[w(g,z)]$:
\begin{equation}
   c_j^\ast(m_j(x),z) =  \begin{cases}
      c_0^\ast(z) = \inf_{g\in\mc{G}}[c_0(g(x),z)] & \text{if } j=0\\
      c_j(m_j(x),z) & \text{otherwise}
    \end{cases}
\end{equation}
Furthermore,
\begin{equation}
   \tau_j^\ast(m(x),z) =  \begin{cases}
      \tau_0(m(x),z) = \sum_{k=1}^Jc_k(m_k,z) & \text{if } j=0\\
      \inf_{g\in\mc{G}}[\tau_j(g(x),m(x),z)] = c_0^\ast(z) + \sum_{k=1}^J c_k(m_k(x),z)1_{k\not=j} & \text{otherwise}
    \end{cases}
\end{equation}

Next, we define the conditional risk $\mc{C}_{\widetilde{\ell}_{\text{def}}}$ associated to the adversarial true deferral loss. 
\begin{equation}
    \begin{aligned}
        \mc{C}_{\widetilde{\ell}_{\text{def}}}(r,g,x) & =  \mb{E}_{y,t|x}\Bigg[ \sum_{j=0}^J \tau_j(g(x),m(x),z)\widetilde{\ell}_{01}^j(r,x,j) + (1-J)\sum_{j=0}^J c_j(g(x),m_j(x),z) \Bigg] \\
        & = \sum_{j=0}^J \overline{\tau}_j(g(x),m(x),x)\widetilde{\ell}_{01}^j(r,x,j) + (1-J)\sum_{j=0}^J \overline{c}_j(g(x), m_j(x),x)
    \end{aligned}
\end{equation}
Now, we assume $r\in\mc{R}$ symmetric and define the space $\overline{\mc{R}}_{\gamma}(x) = \{r\in\mc{R}: \inf_{x^\prime \in B_p(x,\gamma)}\rho_r(x^\prime, r(x))>0\}$. Assuming $\overline{\mc{R}}_{\gamma}(x) \not= \emptyset$, it follows:
\begin{equation}
   \mc{C}_{\widetilde{\ell}_{\text{def}}}(r,g,x) = \sum_{j=0}^J \Big(\overline{\tau}_j(g(x),m(x),x)[1_{r(x)\not=j}1_{r\in\overline{\mc{R}}_{\gamma}(x)} + 1_{r\not\in\overline{\mc{R}}_{\gamma}(x)}]\Big) + (1-J)\sum_{j=0}^J \overline{c}_j(g(x), m_j(x),x)
\end{equation}
Intuitively, if $r\not\in\mc{R}_{\gamma}(x)$, this means that there is no $r$ that correctly classifies $x^\prime\in B_p(x,\gamma)$ inducing an error of $1$. It follows at the optimum:    
\begin{equation}\label{eq:conditional}
    \begin{aligned}
        \mc{C}_{\widetilde{\ell}_{\text{def}}}^B(\mc{R}, \mc{G},x) & = \inf_{g\in\mc{G},r\in\mc{R}}\Big[\sum_{j=0}^J \Big(\overline{\tau}_j(g(x),m(x),x)[1_{r(x)\not=j}1_{r\in\overline{\mc{R}}_{\gamma}(x)} + 1_{r\not\in\overline{\mc{R}}_{\gamma}(x)}]\Big) + (1-J)\sum_{j=0}^J \overline{c}_j(g(x), m_j(x),x)\Big] \\
        & = \inf_{r\in\mc{R}}\Big[\sum_{j=0}^J \Big(\overline{\tau}_j^\ast(m(x),x)[1_{r(x)\not=j}1_{r\in\overline{\mc{R}}_{\gamma}(x)} + 1_{r\not\in\overline{\mc{R}}_{\gamma}(x)}]\Big)\Big] + (1-J)\sum_{j=0}^J \overline{c}^\ast_j(m_j(x),x)\\
        & =  \inf_{r\in\mc{R}}\sum_{j=0}^J \Big(\overline{\tau}_j^\ast(m(x),x)1_{r(x)\not=j}\Big) + (1-J)\sum_{j=0}^J \overline{c}^\ast_j(m_j(x),x)  \quad \text{($\overline{\mc{R}}_{\gamma}(x)\not=\varnothing$, then $\exists r \in \overline{\mc{R}}_{\gamma}(x)$)} \\
        & = \sum_{j=0}^J \overline{\tau}_j^\ast(m(x),x)(1-\sup_{r\in\mc{R}}1_{r(x)=j}) + (1-J)\sum_{j=0}^J \overline{c}^\ast_j(m_j(x),x)\\
        & = \sum_{j=0}^J \overline{\tau}_j^\ast(m(x),x) - \max_{j\in\mc{A}}\overline{\tau}_j^\ast(m(x),x) + (1-J)\sum_{j=0}^J \overline{c}^\ast_j(m_j(x),x)
        % & = \min_{j\in\mc{A}}\overline{c}_j^\ast(m_j(x),x) \quad \text{(with some manipulation c.f below)}
    \end{aligned}
\end{equation}
We can still work on making the last expression simpler: 
\begin{equation} \label{eq:4}
    \begin{aligned}
\sum_{j=0}^J\overline{\tau}_j^\ast(m(x),x) & = \sum_{j=1}^J\overline{c}_j(m_j(x),x) + \sum_{j=1}^J\Big(\overline{c}_0^\ast(x) + \sum_{k=1}^J \overline{c}_k(m_k(x),x) 1_{k\not=j}\Big) \\
& = J\overline{c}_0^\ast(x) + \sum_{j=1}^J\Big(\overline{c}_j(m_j(x),x) + \sum_{k=1}^J \overline{c}_k(m_k(x),x) 1_{k\not=j} \Big) \\
& = J\overline{c}_0^\ast(x) + \sum_{j=1}^J\Big(\overline{c}_j(m_j(x),x) + \sum_{k=1}^J \overline{c}_k(m_k(x),x) (1-1_{k=j}) \Big)  \\
& = J\overline{c}_0^\ast(x) + \sum_{j=1}^J\sum_{k=1}^J\overline{c}_k(m_k(x),x) \\
& = J\Big(\overline{c}_0^\ast(x) +\sum_{j=1}^J \overline{c}_j(m_j(x),x)\Big)
\end{aligned}
\end{equation}
Then, reinjecting (\ref{eq:4}) in (\ref{eq:conditional}) gives:
\begin{equation}
    \begin{aligned}
        \mc{C}_{\widetilde{\ell}_{\text{def}}}^B(\mc{R}, \mc{G},x) & = \sum_{j=0}^J\overline{c}_j^\ast(m_j(x),x) - \max_{j\in\mc{A}}\overline{\tau}_j^\ast (m(x),x) 
    \end{aligned}
\end{equation}
if $j=0$: 
\begin{equation}
    \begin{aligned}
        \mc{C}_{\widetilde{\ell}_{\text{def}}}^B(\mc{R}, \mc{G},x) & = \sum_{j=0}^J\overline{c}_j^\ast(m_j(x),x) - \overline{\tau}_0(m(x),x)  \\
        &= \sum_{j=0}^J\overline{c}_j^\ast(m_j(x),x) - \sum_{j=1}^J \overline{c}_j(m_j(x),x) \\
        & = \overline{c}^\ast_0(x)
    \end{aligned}
\end{equation}

if $j\not=0$:
\begin{equation}
    \begin{aligned}
        \mc{C}_{\widetilde{\ell}_{\text{def}}}^B(\mc{R}, \mc{G},x) & = \sum_{j=0}^J\overline{c}_j^\ast(m_j(x),x) - \overline{\tau}_{j>0}^\ast(m(x),x) \\
        & =\sum_{j=0}^J\overline{c}_j^\ast(m_j(x),x) - \Big( \overline{c}_0^\ast(x) + \sum_{k=1}^J \overline{c}_k(m_k(x),x) 1_{k\not=1}\Big) \\
        & = \overline{c}_{j>0}(m_j(x),x)
    \end{aligned}
\end{equation}

Therefore, it can be reduced to:
\begin{equation}
    \mc{C}_{\widetilde{\ell}_{\text{def}}}^B(\mc{R}, \mc{G},x) = \min_{j\in\mc{A}}\overline{c}_j^\ast(m_j(x),x) = \min_{j\in\mc{A}}\Big\{ \overline{c}_0^\ast(x), \overline{c}_{j>0}(m_j(x),x)\Big\}
\end{equation}

We can write the calibration gap as $\Delta\mc{C}_{\widetilde{\ell}_{\text{def}}}(r,g,x):= \mc{C}_{\widetilde{\ell}_{\text{def}}}(r,g,x) - \mc{C}_{\widetilde{\ell}_{\text{def}}}^B(\mc{R}, \mc{G},x)\geq0$, it follows:
\begin{equation}\label{eq:AB}
    \begin{aligned}
        \Delta\mc{C}_{\widetilde{\ell}_{\text{def}}}(r,g,x) & = \mc{C}_{\widetilde{\ell}_{\text{def}}}(r,g,x) - \min_{j\in\mc{A}}\overline{c}_j^\ast(m_j(x),x) \\
        & = \underbrace{\mc{C}_{\widetilde{\ell}_{\text{def}}}(r,g,x) - \min_{j\in\mc{A}}\overline{c}_j(g(x), m_j(x),x)}_{A}  + \underbrace{\Big(\min_{j\in\mc{A}}\overline{c}_j(g(x), m_j(x),x) - \min_{j\in\mc{A}}\overline{c}_j^\ast(m_j(x),x) \Big)}_{B}
    \end{aligned}
\end{equation}
% Note that: $\min_{j\in\mc{A}}\overline{c}_j^\ast(m_j(x),x) = \inf_{g\in\mc{G}}\Big(\min_{j\in\mc{A}}\overline{c}_j(g(x), m_j(x),x)\Big) =\min\Big\{\overline{c}^\ast_0(x), \min_{j\in[J]}\overline{c}_j(m_j(x),x)\Big\}$. 

\paragraph{Term $B$:} Let's first focus on $B$. We can write the following inequality:
\begin{equation}
    B = \min_{j\in\mc{A}}\overline{c}_j(g(x), m_j(x),x) - \min_{j\in\mc{A}}\overline{c}^\ast_j(m_j(x),x) \leq \overline{c}_0(g(x),x) - \overline{c}^\ast_0(x)
\end{equation}
Indeed, we have the following relationship:
\begin{enumerate}
    \item if $\overline{c}_0(g(x),x) < \min_{j\in[J]}\overline{c}_{j}(m_j(x),x) \implies B = \overline{c}_0(g(x),x) - \overline{c}^\ast_0(x)$
    \item if $\overline{c}_0(g(x),x) > \min_{j\in[J]}\overline{c}_j(m_j(x),x)  \text{ and } \overline{c}^\ast_0(x) \leq \min_{j\in[J]}\overline{c}_{j}(m_j(x),x) \\
    \quad \quad \quad \implies B = \min_{j\in[J]}\overline{c}_{j}(m_j(x),x) - \overline{c}^\ast_0(x)\leq \overline{c}_0(g(x),x) - \overline{c}^\ast_0(x)$
\end{enumerate}
\paragraph{Term $A$:} Then using the term $A$:
\begin{equation}
    \begin{aligned}
        A & = \mc{C}_{\widetilde{\ell}_{\text{def}}}(r,g,x) - \min_{j\in\mc{A}}\overline{c}_j(m_j(x),x) \\
        & = \mc{C}_{\widetilde{\ell}_{\text{def}}}(r,g,x) - \inf_{r\in\mc{R}} \mc{C}_{\widetilde{\ell}_{\text{def}}}(r,g,x) \\
        & = \sum_{j=0}^J \Big(\overline{\tau}_j(g(x),m(x),x)\widetilde{\ell}_{01}^j(r,x,j)\Big) - \inf_{r\in\mc{R}}\sum_{j=0}^J \Big(\overline{\tau}_j(g(x),m(x),x)\widetilde{\ell}_{01}^j(r,x,j)\Big)
    \end{aligned}
\end{equation}
Now, we introduce a change of variables to define a probability distribution \( p = (p_0, \cdots, p_j) \in \Delta^{|\mathcal{A}|} \), accounting for the fact that \( \tau_j \) does not inherently represent probabilities. Consequently, for each \( j \in \mathcal{A} \), we obtain the following expression:
\begin{equation}
    p_j = \frac{\overline{\tau}_j(g(x),m(x),x)}{\sum_{j=0}^J \overline{\tau}_j(g(x),m(x),x)}  = \frac{\overline{\tau}_j}{\|\boldsymbol{\tau}\|_1} \quad \text{(for $\boldsymbol{\tau}=\{\tau_j\geq 0\}_{j\in\mc{A}}$)} 
\end{equation}

We then, have:
\begin{equation}
    A = \|\boldsymbol{\tau}\|_1 \Bigg(\sum_{j=0}^J \Big(p_j\widetilde{\ell}_{01}^j(r,x,j)\Big) - \inf_{r\in\mc{R}}\sum_{j=0}^J \Big(p_j\widetilde{\ell}_{01}^j(r,x,j)\Big)\Bigg)
\end{equation}
Then, using Lemma \ref{lemma:rconsistency}, it leads to:
\begin{equation}
    \begin{aligned}
        A & \leq \|\boldsymbol{\tau}\|_1 \Psi^u(1) \Bigg[\sum_{j=0}^J \Big(p_j\widetilde{\Phi}^{\rho,u,j}_{01}(r,x,j)\Big) - \inf_{r\in\mc{R}}\sum_{j=0}^J \Big(p_j\widetilde{\Phi}^{\rho,u,j}_{01}(r,x,j)\Big)\Bigg] \\
        & = \|\boldsymbol{\tau}\|_1 \Psi^u(1)\frac{1}{\|\boldsymbol{\tau}\|_1} \Bigg[\sum_{j=0}^J \Big(\overline{\tau}_j(g(x),m(x),x)\widetilde{\Phi}^{\rho,u,j}_{01}(r,x,j)\Big) - \inf_{r\in\mc{R}}\sum_{j=0}^J \Big(\overline{\tau}_j(g(x),m(x),x)\widetilde{\Phi}^{\rho,u,j}_{01}(r,x,j)\Big)\Bigg] \\
        & =  \Psi^u(1) \Bigg[\sum_{j=0}^J \Big(\overline{\tau}_j(g(x),m(x),x)\widetilde{\Phi}^{\rho,u,j}_{01}(r,x,j)\Big) - \inf_{r\in\mc{R}}\sum_{j=0}^J \Big(\overline{\tau}_j(g(x),m(x),x)\widetilde{\Phi}^{\rho,u,j}_{01}(r,x,j)\Big)\Bigg] \\
        & = \Psi^u(1) \Big[\mc{C}_{\widetilde{\Phi}^{\rho, u}_{\text{def}}}(r,x) - \mc{C}_{\widetilde{\Phi}^{\rho, u}_{\text{def}}}^\ast(\mc{R},x)\Big]
    \end{aligned}
\end{equation}
Then, adding $B$ leads to:
\begin{equation}
    \begin{aligned}
        \Delta\mc{C}_{\widetilde{\ell}_{\text{def}}}(r,g,x) & = A + B \quad \text{(using Eq. \ref{eq:AB}}) \\
        & \leq \Psi^u(1) \Big[\mc{C}_{\widetilde{\Phi}^{\rho, u}_{\text{def}}}(r,x) - \mc{C}_{\widetilde{\Phi}^{\rho, u}_{\text{def}}}^\ast(\mc{R},x)\Big] + \overline{c}_0(g(x),x) - \overline{c}^\ast_0(x) \\
    \end{aligned}
\end{equation}
By construction, we have $\overline{c}_0(g(x),x)=\mb{E}_{y,t|x}[c_0(g(x),z)]$ with $c_0(g(x),z) =\psi(g(x), z)$ and $\overline{c}^\ast_0(x)=\inf_{g\in\mc{G}}\mb{E}_{y,t|x}[c_0(g(x),z)]$. Therefore, we can write for $g\in\mc{G}$ and the cost $c_0$:
\begin{equation}
    \Delta\mc{C}_{c_0}(g,x) = \overline{c}_0(g(x),x) - \overline{c}^\ast_0(x)
\end{equation}
Then,
\begin{equation}
    \begin{aligned}
        \Delta\mc{C}_{\widetilde{\ell}_{\text{def}}}(r,g,x) & \leq \Psi^u(1) \Big[\mc{C}_{\widetilde{\Phi}^{\rho, u}_{\text{def}}}(r,x) - \mc{C}_{\widetilde{\Phi}^{\rho, u}_{\text{def}}}^\ast(\mc{R},x)\Big] + \Delta\mc{C}_{c_0}(g(x),x) \\
        & = \Psi^u(1) \Big[\Delta\mc{C}_{\widetilde{\Phi}^{\rho, u}_{\text{def}}}(r,x) \Big] + \Delta\mc{C}_{c_0}(g,x) \\
    \end{aligned}
\end{equation}
Therefore, by definition:
\begin{equation}\label{eq:proof}
    \begin{aligned}
        \mathcal{E}_{\widetilde{\ell}_{\text{def}}}(r,g)  - \mathcal{E}^*_{\widetilde{\ell}_{\text{def}}}(\mc{R}, \mathcal{G}) + \mathcal{U}_{\widetilde{\ell}_{\text{def}}}(\mc{R}, \mathcal{G}) & = \mb{E}_x[\Delta\mc{C}_{\widetilde{\ell}_{\text{def}}}(r,g,x)] \\
        & \leq \Psi^u(1) \mb{E}_x\Big[\Delta\mc{C}_{\widetilde{\Phi}^{\rho, u}_{\text{def}}}(r,x) \Big] + \mb{E}_x\Big[\Delta\mc{C}_{c_0}(g(x),x)\Big] \\
        & = \Psi^u(1)\Big(\mathcal{E}_{\widetilde{\Phi}^{\rho, u}_{\text{def}}}(r) - \mathcal{E}^*_{\widetilde{\Phi}^{\rho, u}_{\text{def}}}(\mc{R}) + \mathcal{U}_{\widetilde{\Phi}^{\rho, u}_{\text{def}}}(\mc{R})\Big) \\
        & \quad + \mb{E}_x\Big[\Delta\mc{C}_{c_0}(g(x),x)\Big] \\
        & = \Psi^u(1)\Big(\mathcal{E}_{\widetilde{\Phi}^{\rho, u}_{\text{def}}}(r) - \mathcal{E}^*_{\widetilde{\Phi}^{\rho, u}_{\text{def}}}(\mc{R}) + \mathcal{U}_{\widetilde{\Phi}^{\rho, u}_{\text{def}}}(\mc{R})\Big) \\
        & \quad + \mb{E}_x[\overline{c}_0(g(x),x)] - \mb{E}_x[\overline{c}^\ast_0(x)] \\
        & = \Psi^u(1)\Big(\mathcal{E}_{\widetilde{\Phi}^{\rho, u}_{\text{def}}}(r) - \mathcal{E}^*_{\widetilde{\Phi}^{\rho, u}_{\text{def}}}(\mc{R}) + \mathcal{U}_{\widetilde{\Phi}^{\rho, u}_{\text{def}}}(\mc{R})\Big) \\
        & \quad + \Delta\mathcal{E}_{c_0}(g)
    \end{aligned}
\end{equation}
where \( \Delta\mathcal{E}_{c_0}(g) = \mathcal{E}_{c_0}(g) - \mathcal{E}_{c_0}^B(\mathcal{G}) + \mathcal{U}_{c_0}(\mathcal{G}) \).

In the special case of the log-softmax ($u=1$), we have that $\Psi^u(1)=\log(2)$.

\end{proof}

\section{Experiments details}\label{appendix:details}
We present empirical results comparing the performance of state-of-the-art Two-Stage Learning-to-Defer frameworks \citep{mao2023twostage, mao2024regressionmultiexpertdeferral, montreuil2024twostagelearningtodefermultitasklearning} with our robust \name{} algorithm. To the best of our knowledge, this is the first study to address adversarial robustness within the context of Learning-to-Defer. 

All baselines use the log-softmax surrogate for $\Phi_{01}$ with $\Psi^{u=1}(v)=\log(1+v)$ and $\Psi_e(v)=\exp(-v)$. Adversarial attacks and supremum evaluations over the perturbation region \( B_p(x, \gamma) \) are evaluated using Projected Gradient Descent \citep{Madry2017TowardsDL}. For each experiment, we report the mean and standard deviation over four independent trials to account for variability in results. Experiments are conducted on one NVIDIA H100 GPU. Additionally, we make our scripts publicly available.

\subsection{Multiclass Classification Task}
\label{exp_appendix:class}
\paragraph{Experts:} We assigned categories to three distinct experts: expert M$_1$ is more likely to be correct on 58 categories, expert M$_2$ on 47 categories, and expert M$_3$ on 5 categories. To simulate a realistic scenario, we allow for overlapping expertise, meaning that for some $x \in \mc{X}$, multiple experts can provide correct predictions. On assigned categories, an expert has a probability $p=0.94$ to be correct, while following a uniform probability if the category is not assigned.  

Agent costs are defined as \( c_0(h(x), y) = \ell_{01}(h(x), y) \) for the model and \( c_{j > 0}(m_j^h(x), y) = \ell_{01}(m_j^h(x), y) \), consistent with \citep{mozannar2021consistent, Mozannar2023WhoSP, Verma2022LearningTD, Cao_Mozannar_Feng_Wei_An_2023, mao2023twostage}.  We report respective accuracies of experts in Table \ref{table:agent_cifar}.

\paragraph{Model:} We train the classifier offline using a ResNet-4 architecture~\citep{he2015deepresiduallearningimage} for 100 epochs with the Adam optimizer~\citep{kingma2017adammethodstochasticoptimization}, a learning rate of $0.1$, and a batch size of $64$. The checkpoint corresponding to the lowest empirical risk on the validation set is selected. Corresponding performance is indicated in Table \ref{table:agent_cifar}. 

\begin{table}[ht]
\centering\resizebox{0.5\textwidth}{!}{ 
\begin{tabular}{@{}ccccc@{}}
\toprule
 & Model & Expert M$_1$ & Expert M$_2$ & Expert M$_3$  \\
\midrule
Accuracy &  $61.0$ & $53.9$ & $45.1$ & $5.8$ \\
\bottomrule
\end{tabular}}
\caption{Agent accuracies on the CIFAR-100 validation set. Since the training and validation sets are pre-determined in this dataset, the agents' knowledge remains fixed throughout the evaluation.}
\label{table:agent_cifar}
\end{table}

\paragraph{Baseline \citep{mao2023twostage}:} We train a rejector using a ResNet-4 \citep{he2015deepresiduallearningimage} architecture for $500$ epochs, a learning rate of $0.005$, a cosine scheduler, Adam optimizer \citep{kingma2017adammethodstochasticoptimization}, and a batch size of $2048$. We report performance of the checkpoints corresponding to the lower empirical risk on the validation set. 

\paragraph{\name{}:} We train a rejector using the ResNet-4 architecture~\citep{he2015deepresiduallearningimage} for 1500 epochs with a learning rate of $0.005$, a cosine scheduler, the Adam optimizer~\citep{kingma2017adammethodstochasticoptimization} with L2 weight decay of $10^{-4}$ acting as regularizer, and a batch size of $2048$. The hyperparameters are set to $\rho = 1$ and $\nu = 0.01$. The supremum component from the adversarial inputs is estimated using PGD40~\citep{Madry2017TowardsDL} with $\epsilon = 8/255$, the $\ell_\infty$ norm, and a step size of $\epsilon / 40$, following the approach in~\citep{mao2023crossentropylossfunctionstheoretical, Grounded}.

\begin{table}[ht]
\centering\resizebox{0.8\textwidth}{!}{ 
\begin{tabular}{@{}ccccccc@{}}
\toprule
Baseline & Clean & Untarg. & Targ. Model & Targ. M$_1$ & Targ. M$_2$ & Targ. M$_3$  \\
\midrule
\citet{mao2023twostage} &  $72.8\pm 0.4$ & $17.2\pm0.2$ & $61.1\pm 0.1$ & $54.4\pm 0.1$   &$45.4\pm 0.1$ & $13.4\pm 0.1$ \\
\midrule
Our &  $67.0\pm 0.4$ & $49.8\pm0.3$ & $64.8\pm0.2$ & $62.4\pm0.3$  &   $62.1\pm0.2$ &  $64.8\pm0.3$   \\
\bottomrule
\end{tabular}}
\caption{Comparison of accuracy results between the proposed \name{} and the baseline \citep{mao2023twostage} on the CIFAR-100 validation set, including clean and adversarial scenarios.}
\end{table}

\subsection{Regression Task}\label{exp_appendix:reg}
\paragraph{Experts:} We train three experts offline with three layers MLPs (128, 64, 32), each specializing in a specific subset of the dataset based on a predefined localization criterion. The first expert M$_1$ trains on Southern California (latitude lower than 36), the second expert M$_2$ on Central California  (latitude between 36 and 38.5), and the last in Northern California (otherwise) representing a smaller area. MLPs are trained using a ReLU, an Adam optimizer \citep{kingma2017adammethodstochasticoptimization}, a learning rate of $0.001$, and $500$ epochs. Agent costs are defined as \( c_0(f(x), t) = \text{RMSE}(g(x), t) \) for the model and \( c_{j > 0}(m_j^f(x), t) = \text{RMSE}(m_j(x), t) \), consistent with \citep{mao2024regressionmultiexpertdeferral}.  We report respective RMSE of experts in Table \ref{agent:housing}.

\paragraph{Model:} We train a regressor using a two-layer MLP with hidden dimensions (64, 32) on the full training set. The model uses ReLU activations, the Adam optimizer~\citep{kingma2017adammethodstochasticoptimization}, a learning rate of $0.001$, and is trained for 500 epochs. We report performance of the checkpoints corresponding to the lower empirical risk on the validation set. The model performance is reported in Table~\ref{agent:housing}.

\begin{table}[ht]\label{agent:housing}
\centering\resizebox{0.5\textwidth}{!}{ 
\begin{tabular}{@{}ccccc@{}}
\toprule
 & Model & Expert M$_1$ & Expert M$_2$ & Expert M$_3$  \\
\midrule
RMSE &  $0.27\pm .01$ & $1.23\pm .02$ & $1.85\pm .02$ & $0.91\pm .01$ \\
\bottomrule
\end{tabular}}
\caption{Agent RMSE on the California Housing validation set ($20$\% of the dataset).}
\end{table}

\paragraph{Baseline \citep{mao2024regressionmultiexpertdeferral}:} We train a rejector using a MLP (8,16) for 100 epochs, a learning rate of $0.01$, a cosine scheduler, Adam optimizer \citep{kingma2017adammethodstochasticoptimization}, and a batch size of 8096. We report performance of the checkpoints corresponding to the lower empirical risk on the validation set.

\paragraph{\name{}:}  We train a rejector using a MLP (8,16) for 400 epochs, a learning rate of $0.01$, a cosine scheduler, Adam optimizer \citep{kingma2017adammethodstochasticoptimization} with L2 weight decay of $10^{-4}$ acting as regularizer, and a batch size of $8096$. The hyperparameters are set to $\rho = 1$ and $\nu = 0.05$. The supremum component from the adversarial inputs is estimated using PGD10~\citep{Madry2017TowardsDL} with $\epsilon$ equal to 25\% of the variance of dataset's features, the $\ell_\infty$ norm, and a step size of $\epsilon / 10$, following the approach in~\citep{mao2023crossentropylossfunctionstheoretical, Grounded}.

\begin{table}[ht]
\centering\resizebox{0.8\textwidth}{!}{ 
\begin{tabular}{@{}ccccccc@{}}
\toprule
Baseline & Clean & Untarg. & Targ. Model & Targ. M$_1$ & Targ. M$_2$ & Targ. M$_3$  \\
\midrule
\citet{mao2024regressionmultiexpertdeferral} &  $0.17 \pm 0.01$ & $0.29\pm0.3$ & $0.19 \pm 0.01$ & $0.40 \pm 0.02$ & $0.21 \pm 0.01$ & $0.41\pm 0.05$  \\
\midrule
Our &  $0.17\pm0.01$ & $0.17 \pm 0.01$ & $0.17 \pm 0.01$ & $0.18 \pm 0.01 $ & $0.18\pm 0.01 $ & $0.18\pm 0.01 $ \\
\bottomrule
\end{tabular}}
\caption{Performance comparison of \name{} with the baseline \citep{mao2024regressionmultiexpertdeferral} on the California Housing dataset. The table reports Root Mean Square Error (RMSE) under clean and adversarial scenarios.}
\end{table}

\subsection{Multi Task}\label{exp_appendix:multi}

\paragraph{Experts:} We train two specialized experts using a Faster R-CNN \cite{ren2016fasterrcnnrealtimeobject} architecture with a MobileNet \cite{howard2017mobilenetsefficientconvolutionalneural} backbone. The first expert, M$_1$, is trained on images containing \textit{animals}, while the second expert, M$_2$, is trained on images containing \textit{vehicles}. Both experts are trained using the Adam optimizer \citep{kingma2017adammethodstochasticoptimization} with a learning rate of $0.005$, a batch size of $128$, and trained for $50$ epochs. Agent costs are defined as \( c_0(g(x), z) = \text{mAP}(g(x), z) \) for the model and \( c_{j > 0}(m_j(x), z) = \text{mAP}(m_j(x), z) \), consistent with \citep{montreuil2024twostagelearningtodefermultitasklearning}.  We report respective mAP of experts in Table \ref{agent_pascal}.

\paragraph{Model:} We train an object detection model using a larger Faster R-CNN \citep{ren2016fasterrcnnrealtimeobject} with  ResNet-50 FPN \citep{he2015deepresiduallearningimage} backbone. We train this model with Adam optimizer \citep{kingma2017adammethodstochasticoptimization}, a learning rate of $0.005$, a batch size of $128$, and trained for $50$ epochs. We report performance of the checkpoints corresponding to the lower empirical risk on the validation set. The model performance is reported in Table~\ref{agent_pascal}.

\begin{table}[ht]\label{agent_pascal}
\begin{tabular}{@{}ccccc@{}}
\toprule
 & Model & Expert M$_1$ & Expert M$_2$   \\
\midrule
mAP &  39.5 & 17.2 & 20.0  \\
\bottomrule
\end{tabular}
\centering
\caption{Agents mAP Pascal VOC validation set. Since the training and validation sets are pre-determined in this dataset, the agents' knowledge remains fixed throughout the evaluation.}
\end{table}

\paragraph{Baseline \citep{montreuil2024twostagelearningtodefermultitasklearning}:} We train a rejector using a Faster R-CNN~\citep{ren2016fasterrcnnrealtimeobject} with a MobileNet backbone~\citep{howard2017mobilenetsefficientconvolutionalneural} and a classification head. We train this rejector for $70$ epochs, a learning rate $5e^{-4}$, a cosine scheduler, Adam optimizer \citep{kingma2017adammethodstochasticoptimization}, and a batch size of $256$. We report performance of the checkpoints corresponding to the lower empirical risk on the validation set. 

\paragraph{\name{}:} We train a rejector using a Faster R-CNN~\citep{ren2016fasterrcnnrealtimeobject} with a MobileNet backbone~\citep{howard2017mobilenetsefficientconvolutionalneural} and a classification head for 70 epochs with a learning rate of $0.001$, a cosine scheduler, the Adam optimizer~\citep{kingma2017adammethodstochasticoptimization} with L2 weight decay of $10^{-4}$ acting as regularizer, and a batch size of $64$. The hyperparameters are set to $\rho = 1$ and $\nu = 0.01$. The supremum component from the adversarial inputs is estimated using PGD20~\citep{Madry2017TowardsDL} with $\epsilon = 8/255$, the $\ell_\infty$ norm, and a step size of $\epsilon / 20$, following the approach in~\citep{mao2023crossentropylossfunctionstheoretical, Grounded}.

\begin{table}[H]
\centering\resizebox{0.7\textwidth}{!}{ 
\begin{tabular}{@{}cccccc@{}}
\toprule
Baseline & Clean  & Untarg. & Targ. Model & Targ. M$_1$ & Targ. M$_2$   \\
\midrule
\citet{montreuil2024twostagelearningtodefermultitasklearning} &  $44.4\pm0.4$ & $9.7 \pm 0.1$ & $39.5\pm0.1$ & $17.4\pm0.2$ & $20.4 \pm 0.2$ \\
\midrule
Our &  $43.9\pm 0.4$ & $39.0\pm0.3$ &  $39.5\pm0.1$ & $39.7\pm0.3$ &  $39.6\pm0.1$ \\
\bottomrule
\end{tabular}}
\caption{Performance comparison of \name{} with the baseline \citep{montreuil2024twostagelearningtodefermultitasklearning} on the Pascal VOC dataset. The table reports mean Average Precision (mAP) under clean and adversarial scenarios.}
\end{table}

\end{appendices}

%%%%%%%%%%%%%%%%%%%%%%%%%%%%%%%%%%%%%%%%%%%%%%%%%%%%%%%%%%%%%%%%%%%%%%%%%%%%%%%
%%%%%%%%%%%%%%%%%%%%%%%%%%%%%%%%%%%%%%%%%%%%%%%%%%%%%%%%%%%%%%%%%%%%%%%%%%%%%%%

\end{document}